

\documentclass{dukecei}

\usepackage{times}  
\usepackage{helvet}  
\usepackage{courier}  
\usepackage{graphicx} 
\urlstyle{rm} 
\usepackage{natbib}  
\usepackage{caption} 
\frenchspacing  
\setlength{\pdfpagewidth}{8.5in} 
\setlength{\pdfpageheight}{11in} 
%


\usepackage{amsmath,amsfonts,amssymb,amsthm}
\usepackage{amssymb}

\usepackage{bm}
\usepackage{graphicx}
\usepackage{multirow}
\usepackage{mathtools}
\usepackage{subcaption}
\usepackage{textcomp}
\usepackage[inline]{enumitem}

\usepackage{cleveref} 
\usepackage{booktabs}
\usepackage{colortbl}
\usepackage{caption}
\usepackage[utf8]{inputenc}
\usepackage{booktabs} 
\usepackage{longtable} 
\usepackage{tocloft} 
\usepackage{makecell, multirow} 
\usepackage{algorithm}
\usepackage{algorithmic}

\usepackage{float}
\usepackage{graphicx}
\usepackage[utf8]{inputenc}
\usepackage{placeins}



\usepackage{amssymb}   

\usepackage{soul}

\title{FlashSVD: Memory-Efficient Inference with Streaming for Low-Rank Models}

\author{
    Zishan Shao$^{1,2}$\footnote{Correspondence E-mail: zishan.shao@duke.edu},
    Yixiao Wang$^{2}$,
    Qinsi Wang$^{2}$,
    Ting Jiang$^{2}$,    
    Zhixu Du$^{2}$,
    Hancheng Ye$^{2}$,
    Danyang Zhuo$^{3}$,    
    Yiran Chen$^{2}$,
    Hai Li$^{2}$\\[1em]
    \normalsize $^{1}$Department of Statistical Science \\
    \normalsize $^{2}$Department of Electrical \& Computer Engineering \\
    \normalsize $^{3}$Department of Computer Science
}

\begin{document}

\maketitle
\thispagestyle{firstpagestyle} 

\begin{abstract}
Singular Value Decomposition (SVD) has recently seen a surge of interest as a simple yet powerful tool for large language models  (LLMs) compression, with a growing number of works demonstrating 20-80\% parameter reductions at minimal accuracy loss. Previous SVD-based approaches have focused primarily on reducing the memory footprint of model weights, largely overlooking the additional activation memory overhead incurred during inference when applying truncated factors via standard dense CUDA kernels. Our experiments demonstrate that this activation overhead, scaling with sequence length and hidden dimension, prevents current SVD compression techniques from achieving any reduction in peak inference memory, thereby limiting their viability for real-world, on-device deployments. 

We introduce \textbf{FlashSVD}, a novel, end-to-end rank-aware streaming inference framework specifically designed for SVD-compressed large language models. FlashSVD can be seamlessly integrated with any model that employs SVD-based methods for parameter reduction. By fusing low-rank projection kernels directly into both the self-attention and feed-forward network (FFN) pipelines, FlashSVD avoid materializing full‐size activation buffers. Instead, small tiles of the truncated factors are loaded into on-chip SRAM, multiplied and reduced on the fly, and immediately evicted, preserving high GPU occupancy and adding no extra latency. On standard encoder benchmarks (e.g., BERT-Base), FlashSVD cuts peak activation memory by up to 70.2\% and intermediate transient memory by 75\%, all while incur no accuracy loss with upstreaming compression methods, offering a practical path toward memory-constrained deployment of low-rank LLMs.  
\end{abstract}

\section{Introduction}

Recent advances in Transformer architectures have yielded state‐of‐the‐art performance across diverse tasks with the cost of ever‐increasing model size and computational complexity.
Rapid scaling of model size poses non-negligible challenges on deploying large pre‐trained models on edge devices due to stringent \emph{peak memory} constraints: activations and full‐precision weights together often exceed available high-bandwidth memory (HBM), leading to out‐of‐memory (OOM) failures even for moderately sized context. While parameter compression techniques such as truncated singular value decomposition (SVD) can reduce model size by over 50\% with minimal accuracy loss \cite{hsu2022fwsvd, yuan2023asvd}, they do not address the \emph{activation} memory spikes during inference. In practice, mainstream inference backends (e.g., FlashAttention \cite{dao2022flashattention,dao2023flashattention2}, xFormers \cite{zhangxformer}, Megatron \cite{shoeybi2019megatron}) offer no native support for low-rank layers, so peak memory remains dominated by large intermediate buffers, almost as high as in fully dense implementations when using these inference backends.


Concurrent with advances in model compression, extensive research has targeted activation-aware optimizations for dense transformers. Key–value caching in autoregressive decoders reuses past attention states to eliminate redundant $\mathcal{O}(B M^2)$, where B is the batch size and M is the sequence length, computations and storage overhead \cite{pope2023kvcache}. 
FlashAttention further accelerates both training and inference by fusing the softmax and matrix‐multiply operations into a single GPU kernel, thereby obviating the need to materialize full attention score matrices \cite{dao2022flashattention,dao2023flashattention2}. Complementary methods such as mixed‐precision arithmetic, quantization, and structured pruning have also proven effective at reducing compute and memory demands without modifying the core transformer architecture \cite{dettmers2023qlora,han2016deep,jacob2018quantization,wu2020integer,he2017channel,li2017filter,luo2017entropy,luo2017thinet,lin2019ssr,gordon2017morphnet}. 
Despite these infrastructural and algorithmic innovations for dense and block-sparse layers, prior SVD-based compression techniques focus on dense model weights factorization but still execute inference with standard dense matrix kernels-thereby overlooking both activation memory overhead and the opportunity to tailor computations to low-rank structure.

In this work we introduce \textbf{FlashSVD}, an end‐to‐end inference framework that enables rank-awared activation for SVD-based low-rank models. Our key contributions are:

\begin{itemize}

\item  We identify that activation memory-alongside fixed parameter cost-dominates inference overhead, whereas prior work has largely targeted weight compression. We propose rank-aware fine-tuning as a promising direction to drive ranks lower without sacrificing accuracy, yielding further activation-cost reductions.


\item We proposed two series of rank-awared streaming low-rank kernels that consumes only low-rank activations in a single pass with no memory-expensive dense intermediates while ensure computational efficiency in moderate-large context window (Theorem~\ref{thm:flashsvd-mem}, \ref{thm:multihead-bound}, \ref{thm:flashsvdffn-v1-memory}, \ref{thm:flashsvdffn-v2-memory}). 




\item By exploiting the multi‑head structure when compressing the attention projection matrices, the necessary rank‑loss ratio $1 - \frac{r_{\text{retained}}}{r_{\max}}$ is substantially lower than that required by single‑head compression methods in prior works, thereby permitting more moderate rank reductions while still achieving comparable overall compression performance.

\end{itemize}

\begin{figure*}[t!]
  \centering
  \includegraphics[width=\textwidth]{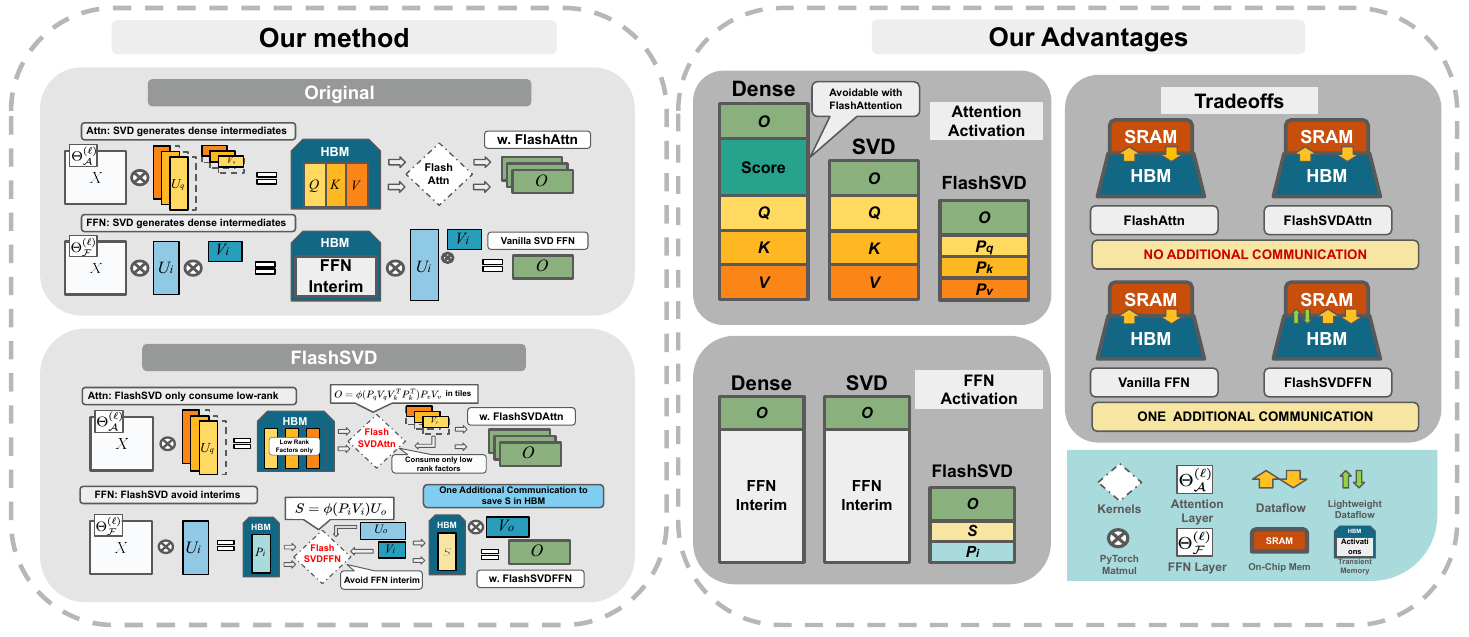}
  \caption{FlashSVD fuses low‑rank projections into both the self‑attention and FFN stages by streaming small tiles of the truncated U and V factors through on‑chip SRAM instead of materializing dense activation matrices in HBM. \textbf{FlashSVDAttn} only consumes low-rank factors, reducing complexity from $O(BM\mathcal{D}_\mathcal{A}^\Theta)$ to $O(BMr_{\Theta^*})$. In the FFN, inputs are first projected into the low‑rank factor space via a cuBLAS GEMM (for standard matrix multiplication) and then passed through \textbf{FlashSVDFFN} in one pass, avoiding any $O(BM\!\mathcal{D}_\mathcal{F}^\Theta)$ intermediate activations and ensuring the rank-awared HBM performance.}
  \label{fig:pipeline}
\end{figure*}

\section{Related Work}



\paragraph{Low‐Rank Weight Compression}  
SVD‐based factorization is a classical approach for transformer compression, where small singular values are truncated under the Eckart–Young Mirsky theorem to reduce parameter counts at some cost of accuracy \cite{yuan2023asvd,wang2024svd}.  
Activation‐aware SVD (ASVD) introduces rank awareness into both weight and KV‐cache compression for decoders, factorizing cache states into low‐rank components to achieve up to 30 \% weight reduction and 50 \% KV‐cache memory savings with no retraining \cite{yuan2023asvd}. Building on this idea, SVD-LLM (v1-v2) \cite{wang2024svd, wang2025svd} also stores KV‐cache in low‐rank form but further employs a lightweight calibration set to determine optimal truncation thresholds, thereby preserving downstream performance despite more aggressive compression.
MoDeGPT partitions each Transformer block into modular subcomponents and applies Nystrom, CR decomposition, and SVD at the module level - enabling structured, fine‐tuning-free compression that saves up to 30\% compute and increases inference throughput by 46\% on 13B models \cite{lin2024modegpt}.  
Dobi‐SVD makes the truncation process differentiable by learning per‐layer cut positions via gradient‐based optimization and deriving optimal low‐rank updates, thus maintaining performance under 40\% parameter compression \cite{wang2025dobi}. Palu \cite{chang2025palu} introduced a rank‐aware KV-Cache compression scheme that dynamically adapts the low‐rank approximation of cached keys and values per attention head that significantly reducing memory footprint without degrading retrieval accuracy for the SVD decomposed models. 



\paragraph{Streaming Inference}  
FlashAttention reorders the standard attention 
computation into tiled matrix‐multiply and softmax steps that fit in on‐chip SRAM, thereby minimizing HBM-SRAM transfers and achieving up to $3\times$ end‐to‐end speedups on GPT-2 with exact numerical equivalence to dense attention \cite{dao2022flashattention}.  
FlashAttention-2 and -3 optimize work partitioning and kernel fusion to sustain 50–73 \% of peak FLOPs on NVIDIA A100s, and FlashAttention-3 on H100s delivers 1.5–2.0$\times$ speedups in FP16 (up to 740 TFLOPs/s, 75 \% utilization) and near 1.2 PFLOPs/s in FP8—with 2.6$\times$ lower numerical error than baseline FP8 attention \cite{dao2023flashattention2,shah2024flashattention}.
BlockLLM introduces a block‐parallel inference engine that shards and caches KV blocks across multiple concurrent workflows, reducing redundant memory loads and lowering per‐token latency by as much as 33\% in multi‐tenant serving scenarios \cite{shi2024blockllm}.  
Star Attention partitions long‐sequence inputs across hosts in two phases—a block‐local intra‐node attention followed by a global inter‐node aggregation—enabling exact, distributed attention on sequences exceeding 64K tokens with sublinear communication overhead \cite{liu2024starattention}.

\paragraph{Fine‐Tuning \& Hybrid Compression}  
LoRA inserts trainable, low-rank adaptation matrices into the frozen weights of each Transformer layer, reducing the number of fine-tuned parameters by over four orders of magnitude while matching full-fine-tuning performance on tasks such as GLUE and SQuAD \cite{hu2021lora}.  
QLoRA builds on this by first quantizing all model weights to 4-bit integers using NormalFloat and GPTQ techniques, then applying LoRA to the quantized weights; this combination enables fine-tuning of 65B-parameter LLMs on a single 48 GB GPU with negligible quality loss \cite{dettmers2023qlora}.



Although low‐rank SVD compressors and I/O‐aware streaming kernels have each shown dramatic gains, they have developed in isolation: SVD methods focus on offline factorization with no regard for GPU memory‐I/O or block‐parallel execution, while streaming kernels target dense weights. There is limited work in developing a rank-awared streaming kernels for SVD-based low-rank models inference. We bridge this gap with two families of rank‐aware GPU kernels that eliminate memory‐intensive dense activation materialization. Our method is agnostic to the upstream SVD compression -- applicable to any SVD‐compressed model without degrading accuracy.

\section{Methodology}
\paragraph{Optimal Low‐Rank Factorization via SVD}  
Let \(W\in\mathbb{R}^{m\times n}\) be a dense matrix (e.g.\ a full self‐attention projection matrix).  Its full singular value decomposition  
\begin{equation}
\begin{aligned}
W &= U\,\Sigma\,V^\top, 
\Sigma = \mathrm{diag}\bigl(\sigma_1,\dots,\sigma_{\min\{m,n\}}\bigr), \\[4pt]
\sigma_i &\ge \sigma_j,\quad \forall\,i > j.
\end{aligned}
\end{equation}
yields the best rank‐\(r\) approximation in Frobenius norm by the Eckart–Young–Mirsky theorem \cite{eckart1936approximation}. Concretely, truncating to the top-\(r\) singular values gives  
\[
W^{[r]}
= U_{[:,1:r]}\,\Sigma_{[1:r,1:r]}\,V_{[:,1:r]}^\top,
\]
which provably minimizes \(\|W - W^{[r]}\|_F\) over all rank-\(r\) matrices. For simplicity, we evenly distribute the singular values, rewriting as 
\(W^{[r]} = \tilde{U}\,\tilde{V}^\top\) where \(\tilde{U} = U_{[:,1:r]}\Sigma^{1/2}\) and \(\tilde{V} = V_{[:,1:r]}\Sigma^{1/2}\).

\paragraph{Model Decomposition}  
For a general large‐language model $\Theta \;=\;\bigl\{\Theta_{\mathcal{A}}^{(\ell)}\cup\Theta_{\mathcal{F}}^{(\ell)}\bigr\}_{\ell=1}^L$, where $\Theta_{\mathcal{A}}^{(\ell)}$ and $\Theta_{\mathcal{F}}^{(\ell)}$ denote the attention and FFN parameters in block~$\ell\,$.  Equivalently, grouping across layers gives
$
\Theta = \{\Theta_{\mathcal{A}}^{(\ell)}\}_{\ell=1}^{L}\cup \{\Theta_{\mathcal{F}}^{(\ell)}\}_{\ell=1}^{L}
$, so truncated SVD can be independently applied layer-wise on each $\Theta_{\mathcal{A}}^{(\ell)}$ and $\Theta_{\mathcal{F}}^{(\ell)}$ subfactors, enabling fine-grained, per-layer rank selection and compression. We apply rank-$r_{\Theta^*}$ truncated SVD (Definition~\ref{def:attn-activation-approx}, \ref{def:ffn-svd}) to each dense projection matrix in a Transformer block, where the rank for the self-attention module follows Theorem~\ref{thm:singlehead-lower}, while the FFN rank is selected according to Theorem~\ref{thm:multihead-lower}. The resulting compressed model is denoted $\Theta^*$, so that $r_\Theta^* = r_{\Theta^*}$. These rank choices directly influence the memory–compute trade-offs described in Theorem~\ref{thm:flop-reduction}. The self-attention projections are square matrices of size $\mathcal{D}_\mathcal{A}^\Theta$, while the FFN projection matrices $W_i^\top$ and $W_o$ are rectangular of shape $\mathcal{D}_\mathcal{F}^\Theta \times \mathcal{D}_\mathcal{A}^\Theta$.



\paragraph{Streaming Activations}
In standard multi-head self‐attention calculation, one forms the full score matrix $\textrm{Softmax}(\frac{QK^\top}{\sqrt{d_{\Theta_{\mathcal A},h}}}) \in \mathbb{R}^{B \times M\times M}$, where $
d_{\Theta_{\mathcal A},h} = \mathcal D^{\Theta}_{\mathcal A}\;/\;H$,  $\forall \; h \in \{1,\dots,H\}
$, \(B\) is batch size, and \(M\) is sequence length, incurring $\mathcal O(BM^2)$ off‐chip memory and bandwidth per head.  FlashAttention \cite{dao2022flashattention, dao2023flashattention2} instead tiles and streams the queries, keys, and values, small blocks of $Q$, $K$ and $V$, sized to fit the GPU's shared memory per streaming multiprocessor, are loaded on-chip, dot-products are then computed and softmaxed with streaming, and results are immediately accumulated in full FP32 precision without ever materializing the full $B \times H \times M^2$ matrix on HBM. Consequently, HBM memory complexity for attention computation drops from $\mathcal O\bigl(BHM^2\bigr)$
to $\mathcal O(BH\,M\,d_{\Theta_{\mathcal A},h})$ for $H$ heads attention, enabling state-of-the-art attention computation performance in both training and inference.

\begin{algorithm}[t]
  \small
  \caption{FlashSVDAttention Forward Pass}\label{alg:combined}
  \begin{algorithmic}[1]
    \STATE \textbf{\textsc{Part I. Subroutine \textsc{load\_tiles}}:}
    \STATE \textbf{Input:} Factor $P_a\in\mathbb{R}^{B\times B_M \times r_{\Theta^*}}$, projection $V_a\in\mathbb{R}^{r_{\Theta^*}\times d_{\Theta_{\mathcal{A}},h}}$, bias $b_a\in\mathbb{R}^{d_{\Theta_{\mathcal{A}},h}}$, tile size $B_R$, rank $r_{\Theta^*}$, padded rank $R = \lceil \frac{r_{\Theta^*}}{B_R} \rceil \cdot B_R$
    \STATE \textbf{Output:} Reconstructed tiles $A_{\mathrm{tile}}$ for each $r$ with padding mask
    \FOR{$r = 0$ \TO $R$ \textbf{step} $B_R$}
      \STATE $A_{\mathrm{tile}} \gets P_a[:,\,:,\,r:r+B_R]\;V_a[r:r+B_{R},:]\;+\;b_a$
    \ENDFOR
    \STATE \textbf{\textsc{Part II. FlashSVDAttention Forward Pass}:}
    \STATE \textbf{Input:} Low‐rank factors $P_q,P_k,P_v\in\mathbb{R}^{B\times M\times r_{\Theta^*}}$, projections $V_q,V_k,V_v\in\mathbb{R}^{r_{\Theta^*}\times d_{\Theta_{\mathcal{A}},h}}$, biases $b_q,b_k,b_v\in\mathbb{R}^{d_{\Theta_{\mathcal{A}},h}}$, tile size $B_M$ and $B_R$, sequence length $M$
    \STATE \textbf{Output:} Output $O\in\mathbb{R}^{B\times M\times d_{\Theta_{\mathcal{A}},h}}$
    \STATE Initialize $O \gets 0$
    \FOR{$\ell = 0$ \TO $M-1$ \textbf{step} $B_M$}
      \STATE \texttt{Reconstruct Q block on‐chip}
      \STATE $Q_{\mathrm{tile}} \gets \rm LOAD\_TILES(P_q[:,\,\ell:\ell+B_M,\,:],V_q[:,\,:,\,\ell:\ell+B_M],b_q,B_R)$
      \FOR{$m = 0$ \TO $M-1$ \textbf{step} $B_M$}
        \STATE \texttt{Reconstruct K and V blocks on‐chip}
        \STATE $K_{\mathrm{tile}} \gets \rm LOAD\_TILES(P_k[:,\,\ell:\ell+B_M,\,:],V_k[:,\,\,:,\ell:\ell+B_M],b_k,B_R)$
        \STATE $V_{\mathrm{tile}} \gets \rm LOAD\_TILES(P_v[:,\,\ell:\ell+B_M,\,:],V_v[:,\,:,\,\ell:\ell+B_M],b_v,B_R)$
        \STATE \texttt{Compute attention scores and softmax}
        \STATE $\text{scores} \gets Q_{\mathrm{tile}}\,(K_{\mathrm{tile}})^{\!\top} \,/\,\sqrt{d_{\Theta_{\mathcal{A}},h}}$
        \STATE $\alpha \gets \mathrm{\textsc{stream-softmax}}(\text{scores},\,\mathrm{axis}=-1)$ \\
        \cite{dao2022flashattention}
        \STATE $O[:,\,\ell:\ell+B_M,\,:] \;+\!=\; \alpha\,V_{\mathrm{tile}}$
      \ENDFOR
    \ENDFOR
  \end{algorithmic}
\end{algorithm}

\paragraph{FlashSVDAttention}  
FlashSVDAttention (Theorem~\ref{thm:flashsvd-mem}) enable rank-awared streaming by consuming only the low-rank factors of each head's query, key, and value projections:
\[
P_q,\,P_k,\,P_v \;\in\;\mathbb{R}^{B\times M\times r_{\Theta^*}},
\quad
V_q,\,V_k,\,V_v \;\in\;\mathbb{R}^{r_{\Theta^*}\times d_{\Theta_{\mathcal{A}},h}}.
\]
where the $P_a$ factors are formulated from $XU_a$, and $V_a$ $\forall \; a \in \{q,k,v\}$. 
Internally the kernel tiles the \((M,d_{\Theta_{\mathcal{A}},h})\) plane into small \((B_M,d_{\Theta_{\mathcal{A}},h})\) blocks that stream through on-chip SRAM.  For each tile it (1) forms the local low-rank query block \(Q_{\rm tile}=P_{q, {\rm{tile}}}\,V_{q, {\rm{tile}}} + b_{q}\) with iterative accumulation over $r_{\Theta^*}$, (2) multiplies and accumulates score with corresponding reconstructed key blocks \(K_{\rm tile}=P_{k, {\rm tile}}\,V_{k, {\rm tile}}+b_{k}\), (3) applies a row-wise softmax, and (4) weights and reduces the value factors \(P_{v, {\rm tile}}\,V_{v, {\rm tile}}\) to produce the output slice.  By fusing the two small GEMMs and the softmax into one pass, and never materializing the full \(B\times H\times M\times d_{\Theta_{\mathcal{A}},h}\) score or output buffers off-chip, FlashSVDAttn attains the rank-aware \( \mathcal{O}\bigl(Hr_{\Theta^*}(B\,M+d_{\Theta_{\mathcal{A}},h})\bigr)\) HBM bound proven in Theorem~\ref{thm:multihead-bound}. 

\paragraph{FlashSVDFFN}
Once the FlashAttention alleviated the peak‐memory burden of self‐attention \cite{dao2022flashattention,dao2023flashattention2}, the feed‐forward network (FFN) becomes the new dominant factor (theorem~\ref{thm:ffn-upper}). Even when $W_i$ and $W_o$ admit low‐rank decompositions  \(W_i^{[r_{\Theta^*}]}= U_i\,V_i\) and \(W_o^{[r_{\Theta^*}]}= U_o\,V_o\) with ranks $r_{\Theta^*}\ll D_{\mathcal{F}}^{\Theta}$, the intermediate
$\phi(XU_iV_i + b_i)\in\mathbb{R}^{(B\times M)\times D_{\mathcal{F}}^{\Theta}}$
must still be materialized in full, dominating HBM despite in low-rank case (theorem~\ref{thm:ffn-upper}). We propose two rank‐aware streaming FFN variants to eliminate this large interim activation that trade off compute versus I/O: a mild version that maximizes reuse of vendor‐tuned GEMM kernels at the cost of extra off‐chip traffic, and an extreme version that enable zero I/O to HBM. 


\paragraph{FlashSVDFFN V1 (Rank‐Aware Fusion)}
 First, the input is projected into the factor space with $P = X\,U_i$ via the cuBlas GEMM, storing  $(B\times M)\times r_i$ sized intermediate in HBM. Second, a lightweight streamed kernel applies the nonlinearity and the next projection in one fused pass by computing $S = \phi(P\,V_i + b_i)\,U_o$ according to algorithm~\ref{alg:flashsvdffn-v1} without ever materializing the full $(B\times M)\times D_{\mathcal{F}}^{\Theta}$ activation. Finally, the output is reconstructed with $O = S\,V_o + b_o$ via a second cuBlas GEMM. Because both $P$ and $S$ remain bounded by the truncated rank $r_{\Theta^*}$, the overall computation and intermediate memory scale with $r_{\Theta^*}$ with the tradeoff for two extra I/O with HBM, directly yielding the tight bounds on activation memory and FLOP reduction stated in Theorem~\ref{thm:flashsvdffn-v1-memory} and Theorem~\ref{thm:flop-reduction}.
 



\paragraph{FlashSVDFFN V2 (Extreme-Case)} employs a single fused GEMM–Activation–GEMM GPU kernel to stream the low-rank FFN forward pass by directly loading $X$, projection factors $U_i,V_i,U_o,V_o$, and biases $b_i,b_o$ on-the-fly, incuring no additional arithmetic cost or asymptotic complexity compared to a dense FFN (identical $\mathcal O\bigl(B\,M\,r_{\Theta^*}\,D_{\mathcal{F}}^\Theta + B\,M\,r_{\Theta^*}\,D_{\mathcal{A}}^\Theta\bigr)$ FLOPs). This unified pipeline supports arbitrary activations $\phi(\cdot)$, and theoretically pushes the lower bound for variable part of rank-aware activation compression to zero (see Theorem~\ref{thm:flashsvdffn-v2-memory}).  By launching one kernel block per tile and accumulating directly into the output buffer, we avoid any materialized \((B\times M)\times\mathcal D_{\mathcal{F}}^\Theta\) mid-buffers, achieving the \(O(r_{\Theta^*}(B\,M+\mathcal D_{\mathcal{F}}^\Theta))\) HBM bound of Theorem~\ref{thm:flashsvdffn-v1-memory}.

\begin{figure*}[t!]
  \centering
  \includegraphics[width=0.65\linewidth]{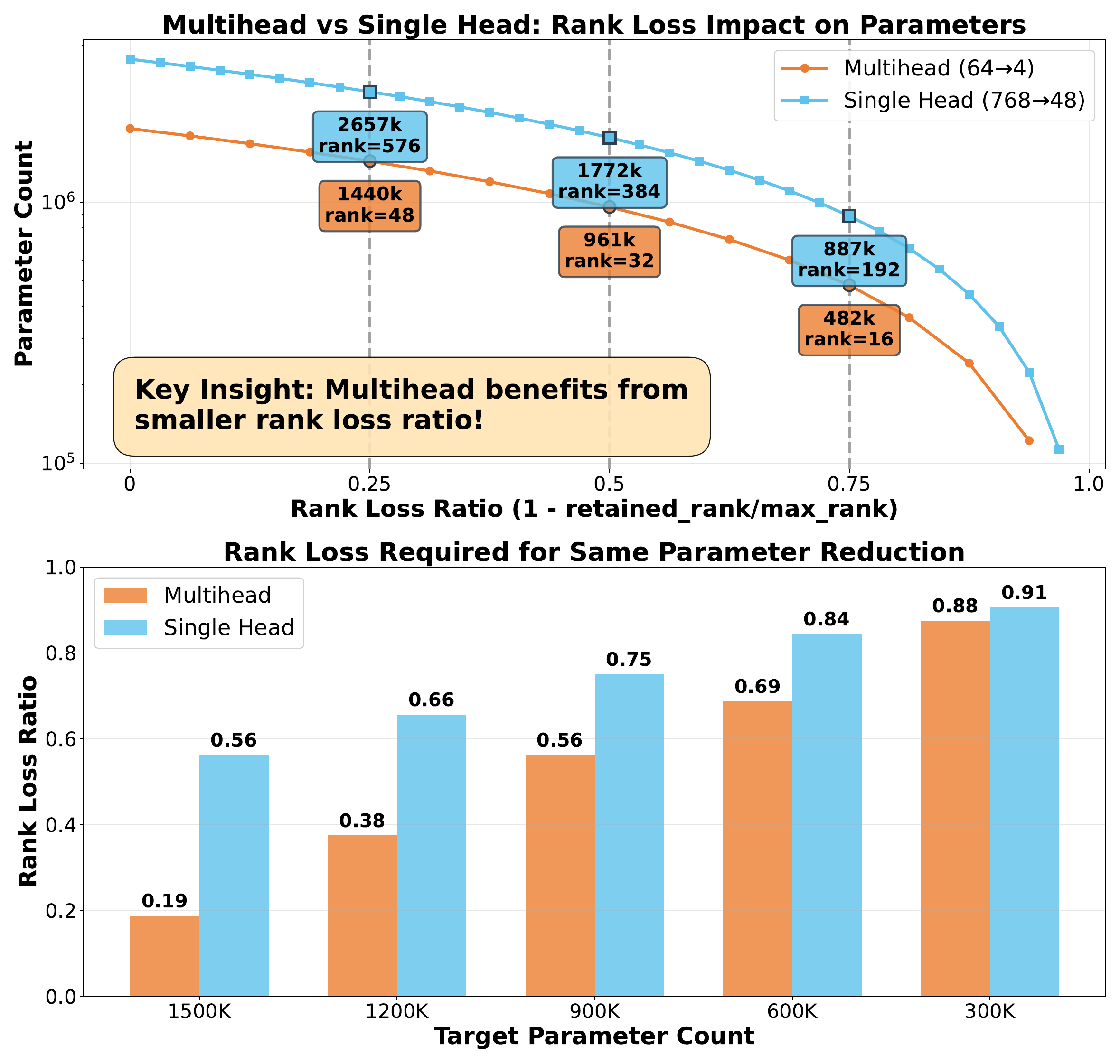}
  \caption{Multi‑head SVD compresses model attention projection matrices more gently than single‑head SVD.}
  \label{fig:rank-loss-comparison}
\end{figure*}


\paragraph{Multi‐Head Attention Amplifies SVD Savings}
Many recent compression schemes (e.g., ASVD \cite{yuan2023asvd}, Dobi-SVD \cite{wang2025dobi}) apply SVD to the entire $\mathcal D_{\mathcal A}^\Theta \times \mathcal D_{\mathcal A}^\Theta$ attention projection as a single matrix.  This single head approach only yields parameter reduction when $r_{\Theta^*} < \tfrac12\,\mathcal D_{\mathcal A}^\Theta,$ and in practice forces drastic rank cuts that frequently degrade accuracy and limits the overall compression capacity of SVD-based method.  In contrast, the native multi‐head structure splits $\mathcal D_{\mathcal A}^\Theta$ into $H$ heads of size $d_{\Theta_{\mathcal{A}},h}=\mathcal D_{\mathcal A}^\Theta/H$.  By applying truncated SVD per head, the threshold for parameter compression drops to $r_{\Theta^*} < \tfrac{\mathcal D_{\mathcal A}^\Theta}{\,H+1\,}\,.$ As Figure \ref{fig:rank-loss-comparison} illustrates, compressing BERT’s head projections to 1.5 M parameters requires a 56\% rank reduction under single‐head SVD, risking severe quality loss, whereas multi‐head SVD needs only a 19\% cut, achieving the same compression with far gentler rank pruning.

\begin{algorithm}[t]
  \small
  \caption{FlashSVDFFN Forward Pass V1}\label{alg:flashsvdffn-v1}
  \begin{algorithmic}[1]
    \REQUIRE Input $X\in\mathbb{R}^{B\times M\times \mathcal D_{\mathcal A}^\Theta}$;  
      SVD factors $U_i\in\mathbb{R}^{\mathcal D_{\mathcal A}^\Theta\times r_{\Theta^*}}$,  
      $V_i\in\mathbb{R}^{r_{\Theta^*}\times\mathcal D_{\mathcal F}^\Theta}$, $b_i\in\mathbb{R}^{\mathcal D_{\mathcal F}^\Theta}$;\\
      $U_o\in\mathbb{R}^{\mathcal D_{\mathcal F}^\Theta\times r_{\Theta^*}}$,  
      $V_o\in\mathbb{R}^{r_{\Theta^*}\times \mathcal D_{\mathcal A}^\Theta}$, $b_o\in\mathbb{R}^{\mathcal D_{\mathcal A}^\Theta}$;  
      number of blocks $G$; nonlinearity $\phi(\cdot)$.
    \ENSURE $O\in\mathbb{R}^{B\times M\times \mathcal D_{\mathcal A}^\Theta}$
    \STATE $B_M \leftarrow \lceil M/G\rceil,\quad B_{D_{\mathcal{F}}^\Theta}\leftarrow \lceil \mathcal D_{\mathcal F}^\Theta/G\rceil$
    \STATE \texttt{GEMM 1st: project into factor‐space (cuBlas)}
    \STATE $P \leftarrow X\,U_i$\COMMENT{$(BM)\times r_{\Theta^*}$}
    \STATE \texttt{On-Chip: rank-aware activation streaming}
    \FOR{$\ell=0$ \TO $M-1$ \textbf{ step } $B_M$}
      \STATE initialize $Z_{\mathrm{tile}}\leftarrow 0_{B\times B_M\times r_{\Theta^*}}$
      \FOR{$d=0$ \TO $\mathcal D_{\mathcal F}^\Theta-1$ \textbf{ step } $B_{D_{\mathcal{F}}^\Theta}$}
        \STATE $V_i^{(d)}\leftarrow V_i[:,\,d:d+B_{D_{\mathcal{F}}^\Theta}]$\COMMENT{$(r_{\Theta^*}\times B_{D_{\mathcal{F}}^\Theta})$}
        \STATE $U_o^{(d)}\leftarrow U_o[d:d+B_{D_{\mathcal{F}}^\Theta},\,:]$\COMMENT{$(B_{D_{\mathcal{F}}^\Theta}\times r_{\Theta^*})$}
        \STATE $b_i^{(d)}\leftarrow b_i[d:d+B_{D_{\mathcal{F}}^\Theta}]$\COMMENT{$(B_{D_{\mathcal{F}}^\Theta})$}
        \STATE $Y \leftarrow \phi\bigl(P\,V_i^{(d)} + b_i^{(d)}\bigr)$ \COMMENT{$B\times B_M\times B_{D_{\mathcal{F}}^\Theta}$}
        \STATE $Z_{\mathrm{tile}} \mathrel{+}= Y\,U_o^{(d)}$  
        \COMMENT{$(B\times B_M\times B_{D_{\mathcal{F}}^\Theta})\times(B_{D_{\mathcal{F}}^\Theta}\times r_{\Theta^{*}})$}
      \ENDFOR
      \STATE $Z \leftarrow Z_{\mathrm{tile}} $ \COMMENT{$ (BM)\times r_{\Theta^{*}}$}
    \ENDFOR
  \STATE \texttt{GEMM 2nd: project to model‐space (cuBlas)}
      \STATE $O \leftarrow Z\,V_o + b_o$\quad\COMMENT{$(BM)\times \mathcal D_{\mathcal A}^\Theta$}
  \end{algorithmic}
\end{algorithm}

\section{Analysis}



FlashSVD establishes provable memory efficient upper bounds on dominant activations during inference by exploiting low-rank structure. Below, we summarize the key theorems that characterize these activation bounds and clarify the fundamental time–memory trade-offs in FlashSVD.

\paragraph{Peak Memory Footprint Analysis}
\label{ssec:analysis-memory}

Table~\ref{tab:analysis-memory} demonstrates that combining \emph{FlashSVDAttention} with \emph{FlashSVDFFN} reduces the activation memory footprint from \(\mathcal O\bigl(BM( \mathcal D_\mathcal{F}^{\Theta}+\mathcal D_\mathcal{A}^{\Theta}+M)\bigr)\) to \(\mathcal O\bigl(r_{\Theta^*}(BM+\mathcal D_\mathcal{F}^{\Theta}+\mathcal D_\mathcal{A}^{\Theta})\bigr)\). The FFN memory can be further reduced using \emph{FlashSVDFFN‑V2}, and ultimately brought variable part to zero. In the typical regime where \(r_{\Theta^*} \ll \mathcal{D}_\mathcal{A}^\Theta,\, \mathcal{D}_\mathcal{F}^\Theta,\, BM\), these optimizations yield a ranked-bounded reduction in HBM activation.




\paragraph{Rank Sensitivity}
\label{ssec:analysis-rank}
The memory footprint of FlashSVD depends critically on the selected SVD rank $r_{\Theta^*}$.
For the attention module, lowering the rank by one in a single head saves $\Delta\mathcal{M}_{\mathrm{flash\text{-}svd\text{-}attn\text{-}single}} = \mathcal{O}(BM + \mathcal D^\Theta_{\mathcal{A}})$ words, as proved in Proposition~\ref{prop:rank-sensitivity}.  Applied across all $H$ heads, the aggregate saving is $\Delta\mathcal{M}_{\mathrm{flash\text{-}svd\text{-}attn\text{-}multi}}
=\mathcal{O}(H\!\left(BM + d_{\Theta_\mathcal{A},h}\right))$ (Proposition~\ref{prop:multihead-sensitivity}).

An analogous result holds for the feed‑forward network.  Proposition~\ref{prop:ffn-rank-sensitivity}, \ref{prop:ffn-rank-sensitivity-v2} shows that a unit decrease in rank reduces off‑chip memory by
$$
\Delta\mathcal{M}_{\mathrm{flash\text{-}svd\text{-}ffn}} \leq \mathcal{O}\!\left(BM + D_{\mathcal{F}}^{\Theta}\right),
$$
demonstrating that FFN layers benefit from rank reduction along both the batch-sequence and feature dimensions.




\begin{table}[h]
\centering
\scriptsize
\setlength{\tabcolsep}{3.5pt}  
\caption{Peak HBM $\mathcal{M}$  Complexity for each sub‑module. See Theorems~\ref{thm:flashsvd-mem}, \ref{thm:multihead-bound}, \ref{thm:flashsvdffn-v1-memory}, \ref{thm:flashsvdffn-v2-memory} in Appendix, Supplementary.}
\label{tab:analysis-memory}
\begin{tabular}{@{}lcc@{}}
\toprule
\textbf{Module} & $\mathcal{M}_{\text{dense}}$ & $\mathcal{M}_{\text{FlashSVD}}$ \\ \midrule
Single‑Head Attn. & $3BM\mathcal D_\mathcal{A}+BM^2$ & $\mathcal O(r_{\Theta^*}(BM+\mathcal D_\mathcal{A}))$ \\
Multi‑Head Attn.  & $BM\mathcal D_\mathcal{A}$       & $\mathcal O(r_{\Theta^*}(BM+\mathcal D_\mathcal{A}))$ \\
FFN‑V1            & $BM\mathcal D_\mathcal{F}$       & $\mathcal O(r_{\Theta^*}(BM+\mathcal D_\mathcal{F}))$ \\
FFN‑V2            & $BM\mathcal D_\mathcal{F}$       & $\mathcal O(r_{\Theta^*}\mathcal D_\mathcal{F})$ \\ 
\bottomrule
\end{tabular}
\end{table}

\paragraph{Speedup and Latency Analysis}
\label{ssec:speedup-latency}
We analyze the performance benefits of FlashSVD along three dimensions: FLOP count, memory I/O volume, and end-to-end latency. 

Theoretical FLOP savings arise from replacing dense matrix multiplications with low-rank projections: the full-rank computation costs 
$\mathcal{O}\big((D_{\mathcal{A}}^{\Theta})^2MB + D_{\mathcal{A}}^{\Theta}D_{\mathcal{F}}^{\Theta}MB\big)$, while the low-rank version requires only 
$\mathcal{O}\big(r_{\Theta^*}^2 MB + r_{\Theta^*}D_{\mathcal{A}}^{\Theta}MB + r_{\Theta^*}D_{\mathcal{F}}^{\Theta}MB\big)$. 
The resulting asymptotic speedup factor is given in Theorem~\ref{thm:flop-reduction}:
\[
\Theta\left(\frac{(D_{\mathcal{A}}^{\Theta})^2 + D_{\mathcal{A}}^{\Theta} D_{\mathcal{F}}^{\Theta}}{r_{\Theta^*}^2 + r_{\Theta^*} D_{\mathcal{A}}^{\Theta} + r_{\Theta^*} D_{\mathcal{F}}^{\Theta}}\right).
\]

In terms of data movement, the full-rank model must load all query/key/value activations and FFN outputs, with total volume \(\mathcal{O}(BM(D_{\mathcal{A}}^{\Theta} + D_{\mathcal{F}}^{\Theta}))\) 
as shown in Theorem~\ref{thm:io-bandwidth}. In contrast, FlashSVD only needs to read and write low-rank components, yielding reduced volume 
\[
\mathcal{O}(MB r_{\Theta^*} + r_{\Theta^*} D_{\mathcal{A}}^{\Theta} + r_{\Theta^*} D_{\mathcal{F}}^{\Theta}),
\] 
also see Theorem~\ref{thm:io-bandwidth}.

Finally, following the roofline model, the end-to-end latency per layer is bounded below by the slower of the compute- and bandwidth-limited paths:
\begin{equation}
T \;\ge\; \max\Bigl\{
    \frac{\mathrm{FLOPs}}{\mathrm{PeakFLOP/s}}\,,\;
    \frac{4\,N_{\mathrm{bytes}}}{\beta}
  \Bigr\}\nonumber,
\end{equation}
where $N_{\text{bytes,in}}$ is the number of input bytes and $\beta$ is the memory bandwidth in bytes/s, as formalized in Theorem~\ref{thm:io-bandwidth}.

\section{Experiment}

\subsection{Experimental Setup}

\newcommand{\fmidrule}{\specialrule{0.1pt}{0.2ex}{0.2ex}}

\begin{table*}[t]
\centering
\tiny
\caption{Main Results: Base Mem (Base), Total Mem (Peak), Transient Mem (Tran), Latency (Lat), Accuracy (Acc), Compression Ratio (Ratio), batch = 64, ratio = 25\%, 50\%, \textbf{All Training-Free}}
\label{tab:main_result}
\begin{tabular}{%
  c            
  c            
  *{3}{ccccc}  
}
\toprule
\multicolumn{17}{c}{\textbf{BERT Experiments}} \\
\midrule
\textbf{Methodology}
  & \textbf{Base}
  & \multicolumn{5}{c}{\textbf{sst2}}
  & \multicolumn{5}{c}{\textbf{qqp}}
  & \multicolumn{5}{c}{\textbf{mnli}} \\
\cmidrule(lr){3-7} \cmidrule(lr){8-12} \cmidrule(lr){13-17}
  & 
  & Peak & Tran & Lat & Acc & Ratio
  & Peak & Tran & Lat & Acc & Ratio
  & Peak & Tran & Lat & Acc & Ratio \\
\midrule 
Dense
  & 418.7
  & 695.0 & 277.3 &  79.7 & 92.32 & 1.00
  & 971.1 & 553.5 &  147.3 & 90.93 & 1.00
  & 1547.4 & 1129.7 & 310.2 & 84.06 & 1.00 \\
\fmidrule
\multicolumn{17}{c}{\textbf{\textit{Parameter Ratio: 25\%}}} \\
\fmidrule 
Vanilla SVD
  & 332.4
  & 741.6 & 409.1 &  161.1 & 85.16 & -
  & 1149.7 & 817.3 &  200.2 & 72.80 & -
  & 1990.0 & 1657.5 &  359.7 & 66.73 & - \\
FlashSVD v1
  & 332.4
  & 576.7 & 244.2 & 188.8 & 85.16 & \textbf{0.60}
  & 785.8 & 453.4 &  264.8 & 72.82 & \textbf{0.55}
  & 1230.2 & 897.7 &  490.5 & 66.71 & \textbf{0.54} \\
FlashSVD v2
  & 332.4
  & 576.7 & 244.2 &  253.5 & 85.16 & \textbf{0.60}
  & 785.8 & 453.4 &  400.9 & 72.81 & \textbf{0.55}
  & 1230.2 & 897.7 &  753.0 & 66.71 & \textbf{0.54} \\
FWSVD 
  & 332.4
  & 913.6 & 581.1 &  83.8 & 89.91  & -
  & 1692.7 & 1360.3 &  194.0 & 84.80 & -
  & 3853.0 & 3520.5 &  463.2 & 78.01 & - \\
FlashFWSVD 
  & 332.4
  & 541.6 & 209.2 &  139.6 & 89.91 & \textbf{0.36}
  & 756.8 & 424.4 & 269.3 & 84.78 & \textbf{0.31}
  & 1213.2 & 880.8 & 479.9 & 77.90 & \textbf{0.25} \\
\fmidrule
\multicolumn{17}{c}{\textbf{\textit{Parameter Ratio: 50\%}}} \\
\fmidrule 
Vanilla SVD
  & 249.7
  & 653.5 & 403.8 &  150.8 & 66.88 & -
  & 1057.1 & 807.4 &  180.4 & 64.05 & -
  & 1888.4 & 1638.7 &  321.1 & 37.92 & - \\
FlashSVD v1
  & 249.7
  & 461.6 & 211.8 & \textbf{141.6} & 66.88 & \textbf{0.52}
  & 673.3 & 423.5 &  193.5 & 64.05 & \textbf{0.52}
  & 1120.7 & 870.9 &  350.6 & 37.81 & \textbf{0.53} \\
FlashSVD v2
  & 249.7
  & 461.6 & 211.8 &  192.9 & 66.88 & \textbf{0.52}
  & 673.3 & 423.5 &  313.2 & 64.05 & \textbf{0.52}
  & 1120.7 & 870.9 &  584.7 & 37.82 & \textbf{0.53} \\
FWSVD 
  & 249.7
  & 821.1 & 571.3 & 66.5 & 79.44 & -
  & 1592.2 & 1342.5 &  165.5 & 63.75 & -
  & 3734.3 & 3484.5 &  427.8 & 51.06 & - \\
FlashFWSVD 
  & 249.7
  & 453.4 & 203.7 & 101.5 & 79.44 & \textbf{0.36}
  & 665.3 & 415.6 &  189.9 & 63.75 & \textbf{0.31}
  & 1112.5 & 862.8 & \textbf{341.4} & 50.92 & \textbf{0.25} \\
\midrule
\multicolumn{17}{c}{\textbf{RoBERTa Experiments}} \\
\midrule
Dense
  & 475.5
  & 753.0 & 277.5 &  78.1 & 94.17 & 1.00
  & 1029.1 & 553.6 &  110.1 & 91.3 & 1.00
  & 1605.4 & 1129.9 & 217.5 & 87.59 & 1.00 \\
\fmidrule
\multicolumn{17}{c}{\textbf{\textit{Parameter Ratio: 25\%}}} \\
\fmidrule 
Vanilla SVD
  & 390.3
  & 883.4 & 493.1 &  119.2 & 49.20 & - 
  & 1375.5 & 985.3 &  148.6 & 60.63 & - 
  & 2383.8 & 1993.5 &  225.0 & 33.37 & - \\
FlashSVD v1
  & 390.3
  & 634.5 & 244.2 &  184.0 & 49.20 & \textbf{0.49} 
  & 843.7 & 453.4 & 263.9  & 60.59 & \textbf{0.46} 
  & 1288.0 & 897.7 & 393.1 & 33.40 & \textbf{0.45} \\
FlashSVD v2
  & 390.3
  & 634.5 & 244.2 &  238.1 & 49.20 & \textbf{0.49}
  & 843.7 & 453.4 & 396.8 & 60.63 & \textbf{0.46}
  & 1288.0 & 897.7 & 744.4 & 33.42 & \textbf{0.45} \\
FWSVD 
  & 390.3
  & 875.4 & 485.1 &  110.5 & 89.29 & -
  & 1366.5 & 976.3 &  113.7 & 82.34 & -
  & 2374.8 & 1984.5 &  220.8 & 77.21 & - \\
FlashFWSVD 
  & 390.3
  & 599.5 & 209.2 &  127.4 & 89.00 & \textbf{0.43}
  & 814.7 & 424.4 &  217.1 & 82.11 & \textbf{0.43}
  & 1271.1 & 880.8 &  401.3 & 77.46 & \textbf{0.44} \\
\fmidrule
\multicolumn{17}{c}{\textbf{\textit{Parameter Ratio: 50\%}}} \\
\fmidrule 
Vanilla SVD
  & 307.6
  & 790.8 & 483.3 & 110.3 & 50.94 & -
  & 1274.0 & 966.4 &  100.7 & 62.07 & -
  & 2264.2 & 1956.7 &  247.7 & 31.16 & - \\
FlashSVD v1
  & 307.6
  & 519.4 & 211.8 & 148.8 & 51.34 & \textbf{0.44}
  & 731.1 & 423.5 & 170.1 & 62.08 & \textbf{0.44}
  & 1178.5 & 870.9 & 287.1 & 31.10 & \textbf{0.45} \\
FlashSVD v2
  & 307.6
  & 519.4 & 211.8 &  195.1 & 51.63 & \textbf{0.44}
  & 731.1 & 423.5 &  313.7 & 62.06 & \textbf{0.44}
  & 1178.5 & 870.9 &  570.8 & 31.07 & \textbf{0.45} \\
FWSVD 
  & 307.6
  & 782.9 & 475.3 &  103.0 & 71.09 & -
  & 1266.0 & 958.5 &  93.5 & 63.36 & -
  & 2256.1 & 1948.5 & 345.6 & 37.82 & - \\
FlashFWSVD 
  & 307.6
  & 511.3 & 203.7 &  105.5 & 69.49 & \textbf{0.43}
  & 723.2 & 415.6 &  156.7 & 63.31 & \textbf{0.43}
  & 1170.4 & 862.8 & \textbf{279.2} & 38.40 & \textbf{0.44} \\
\bottomrule
\end{tabular}
\end{table*}

\paragraph{Model Setup}
While the prior activation-aware inference methods (e.g., ASVD \cite{yuan2023asvd}, Palu \cite{chang2025palu}) have focused primarily on decoder-side KV-Cache compression for causal LMs, the encoder-side activation overhead remains largely unexplored.  FlashSVD naturally extends to decoder models, but since KV-Cache overheads are already well-studied, our evaluation concentrates on encoder architectures. Concretely, we target the most widely adopted compressed encoder pipelines, including naive SVD‐BERT, fine-tuned low-rank BERT (FWSVD-BERT), and our fully fused FlashSVD-BERT-applied to BERT and RoBERTa. To isolate the impact of our fusion kernels, we perform an ablation study-toggling FlashSVDAttn and FlashSVDFFN (with autotuned streaming block size) independently across diverse settings of inputs. Decoder-side experiments should follow a similar protocol on causal language models, but our primary focus remains the encoder.

\begin{table}[t]
  \centering
  \small
  \setlength\tabcolsep{4pt}
  \caption{Rank Awareness: latency and transient‐memory trade‐offs on SST‐2  as MHA/FFN rank varies, Sequence Length = 128 for SST-2 dataset and 256 for STS-B dataset}
  \label{tab:rank-memory-tradeoff}
  \begin{tabular}{@{}l c c c c@{}}
    \toprule
    \shortstack{Dataset\\ \& Method} 
      & \shortstack{MHA\\Rank} 
      & \shortstack{Latency\\(ms)} 
      & \shortstack{Trans.\ Mem.\\(MB)} 
      & \shortstack{Peak.\ Mem.\\(MB)} \\ 
    \fmidrule
    \multirow{5}{*}{\shortstack{SST‐2\\ Vanilla SVD\\ (b = 32)}} 
      & 64 & 112.9 & 214.2 & 719.6 \\ 
      & 56 & 114.0 & 214.2 & 708.6 \\ 
      & 48 & 112.2 & 214.2 & 697.7 \\ 
      & 32 & 110.1 & 214.2 & 675.7 \\ 
    \midrule 
    \multirow{5}{*}{\shortstack{SST‐2\\ FlashSVD\\ (b = 32)}} 
      & 64 & 166.0 & \textbf{136.5} & \textbf{641.9} \\ 
      & 56 & 165.5 & \textbf{125.2} & \textbf{619.7} \\ 
      & 48 & 164.9 & \textbf{118.2} & \textbf{601.7} \\ 
      & 32 & 158.9 & \textbf{118.2} & \textbf{579.7} \\ 
    \midrule
    \multirow{5}{*}{\shortstack{STS‐B\\ Vanilla SVD\\ (b = 32)}} 
      & 768 & 158.1 & 424.3 & 929.7 \\ 
      & 384 & 132.7 & 413.1 & 756.5  \\ 
      & 192 & 138.3 & 406.8 & 669.2  \\ 
      & 96 & 126.7 & 404.1 & 626.0  \\ 
    \fmidrule
    \multirow{5}{*}{\shortstack{STS‐B\\ FlashSVD\\ (b = 32)}} 
      & 768 & 285.8 & \textbf{250.6} & \textbf{756.0} \\ 
      & 384 & 152.0 & \textbf{251.4} & \textbf{594.8}  \\ 
      & 192 & \textbf{119.4} & \textbf{251.1} & \textbf{513.5}  \\ 
      & 96 & \textbf{109.6} & \textbf{251.4} & \textbf{473.3}  \\ 
    \bottomrule
  \end{tabular}  
\end{table}



\paragraph{Evaluation Metrics} For encoders, we assess model quality via task-specific classification accuracy on widely-recognized GLUE datasets \cite{wang2018glue}, evaluating performance at input lengths of 128, 256, and 512 tokens. Memory efficiency is quantified by measuring peak activation usage during end-to-end inference, and reported in mebibytes (MiB) - using PyTorch’s profiler. Compute efficiency is captured through wall-clock latency per batch (milliseconds) on an NVIDIA L40S GPU. We evaluate a family of models spanning a range of compression ratios (\(
\frac{\# \ \text{Param}\  {\text{Compressed}}}
     {\# \ \text{Param} \ {\text{Dense}}}
\))  divided by that of the dense baseline) and report how latency and memory savings scale as the parameter budget is varied. 

\subsection{Main Results}

Table~\ref{tab:main_result} demonstrates that FlashSVD strikes an outstanding balance between model compression, memory footprint, and inference throughput on both BERT-Base and RoBERTa-Base across multiple GLUE benchmarks. Regardless of the compression ratio, Flashsvd incurs no measurable degradation relative to the vanilla SVD and FWSVD method (maximum difference of only 0.58 \% for RoBERTa at 50 \% parameters on MNLI). At the same time, it slashes transient activation memory by 69 \% on SST-2 and QQP and by 75 \% on MNLI, reducing peak inference usage to 1,213 MiB, well below the 1,547 MiB of the original BERT and the 1,990 MiB of a naive SVD, compressed variant. These results reveal that the naive SVD-based compression, despite lowering parameter counts, can actually inflate runtime memory demands and undermine deployment in resource-constrained environments. 

Moreover, FlashSVD maintains competitive inference latency while entirely eliminating large dense intermediate tensors, despite the additional I/O tradeoff. On MNLI, with 50 \% of the original parameters retained, FlashSVD processes each batch in 341 ms, nearly 20 \% faster than the 427.8 ms observed with the baseline SVD approach. Similarly, on SST-2 under the same compression level, FlashSVD achieves 141.6 ms per batch versus 150.8 ms for vanilla SVD. These results demonstrate that our rank-aware optimizations not only preserve model accuracy and drastically reduce memory consumption but also sustain-and in some cases improve-inference speed relative to both dense and standard SVD-based methods.

In the extreme configuration, FlashSVD achieves comparable memory savings but incurs substantial latency overhead, with up to 60 \% in some cases, because the extra reconstruction loop in the feed-forward network undermines parallelism. Although FlashSVDFFN V2 offers a theoretically lower transient memory bound, FlashSVDFFN V1 strikes a more balanced trade-off between memory reduction and inference speed and is therefore the recommended variant for practical deployment.


\paragraph{Rank Awareness}

As shown in Table \ref{tab:rank-memory-tradeoff}, in the attention module, as we reduce the SVD rank from 64 to 32, peak activation memory drops from 641.9 MiB to 579.7 MiB, but transient memory levels off at around 118 MiB by rank 48. Beyond this point, other components of the transformer (i.e. residual buffers and layer‐norm state) become the dominant contributors to the working set and scaling pattern halted. Latency also falls modestly with rank: FlashSVD's processing time decreases slightly from 166.0 ms at rank 64 to 158.9 ms at rank 32, a 4.3 \% reduction. These observations show that at higher ranks, the $\mathcal O(BHMr_{\Theta^*})$ low‐rank factors dictate memory usage while highlighting that the MHA is not the latency overhead. 


In the FFN module, FlashSVDFFN V1 latency steadily improves as the rank shrinks: from 152.0 ms at rank 384 to 119.4 ms at 192 and 109.6 ms at 96 over the vanilla SVD. Throughout this range, transient memory remains essentially constant round 250 MiB, confirming that in medium‐to‐low rank regimes the MHA components, with memory complexity of $\mathcal O(BHMr_{\Theta^*})$, remain the primary drivers of the transient memory footprint, and that our rank‐aware reconstruction effectively restores parallelism without inflating activation buffers.

\subsection{Ablation Study}

\paragraph{FlashSVDAttn} 
To quantify the speed–quality trade-offs introduced by our FlashSVDAttn module, we measure relative speedup in latency against PyTorch's fused SDPA (equivalent to FlashAttention) across mild ($M\!\le\!256$) and long ($M\!\ge\!512$) contexts.  As shown in Table~\ref{tab:attn-ablation}, at full rank ($R=64$) FlashSVDAttn runs at 0.62$\times$ the speed of Dense for $M\!=\!128$ and 0.89$\times$ for $M\!=\!256$, but as we reduce rank, the gap narrows: at $R=32$ the module achieves parity with Dense (1.00$\times$) for $M\!=\!256$, and at $R=16$ it actually surpasses Dense by 8\% (1.08$\times$) in the mild setting. In the compute‐bound long‐context regime, FlashSVDAttn consistently outperforms Dense—speedups grow from 1.05$\times$ at $R=64$ up to 1.37$\times$ ($M\!=\!512$) and 1.64$\times$ ($M\!=\!1024$) at $R=16$.  This ablation confirms that (1) low-rank FlashSVDAttn can match or exceed fused SDPA even for medium or short context and (2) delivers substantial speedups for long sequences, validating its practical utility in both regimes.


\paragraph{FlashSVDFFN} Table~\ref{tab:ffn-rank-ablation} summarizes the computation‐efficiency gains of our FlashSVDFFN kernels across a range of FFN truncation ranks and context lengths. FlashSVDFFN V1 offers the best practical trade-off between memory reduction and speedup.  At rank 192, FlashSVDFFN V1 achieves over 50\% of dense speed for medium (256) contexts and nearly 80\% for longer (512–1024).  At rank 96, it outperforms dense FFN by up to 1.9$\times$, demonstrating that low-rank fusion can turn compression into a net performance gain.  FlashSVDFFN V2 follows the same trend but with smaller peak speedups (around 0.25$\times$ at rank 96) due to finer-grained tiling limiting parallelism.  Overall, FlashSVDFFN V1 strikes the ideal balance-leveraging reduced arithmetic and maintained GPU occupancy - to deliver up to 1.9$\times$ acceleration over the dense FFN under realistic low-rank budgets.

\paragraph{Finetuning Unlocks Extreme Low-Rank Compression}
While many recent methods \cite{yuan2023asvd, wang2024svd,hsu2022fwsvd} emphasize training-free SVD to avoid retraining overhead, our results show that judicious fine-tuning of low-rank factors unleashes far greater compression without accuracy loss.  In particular, by fine-tuning BERT after 50\% parameter reduction, we recover nearly full task performance while slashing transient memory by 48.2\% and peak memory by 29.4\%, about same size as the persistent memory of dense model (Table \ref{tab:finetune}).  These findings underscore that a rank-aware inference framework, like FlashSVD, paired with targeted fine-tuning can push transformer models to much more aggressive low-rank regimes, making extreme compression truely viable for edge deployment with minimal performance loss. Full details is available in table~\ref{tab:finetune-full}.

\begin{table}[t]
  \centering
  \footnotesize
  \renewcommand{\arraystretch}{0.9}
  \setlength{\tabcolsep}{4pt}
  \caption{STS‑B (Pearson) Performance and 50\% Compression Ratio with Finetuning}
  \label{tab:finetune}
  \begin{tabular}{@{}lcccc@{}}
    \toprule
    Model
      & \shortstack{Acc.\,(\%)} 
      & \shortstack{Trans. Mem\\(MiB)} 
      & \shortstack{Peak\\Mem.\,(MiB)} 
      & \shortstack{Lat.\\(ms)} \\
    \midrule
    Dense                    & 81.41 & 281.3 & 699.0 &  98.8 \\
    \midrule
    SVD   & 80.52 & 399.8 & 652.8 & 128.9 \\
    FlashSVD        & 80.52 & \textbf{207.8} & \textbf{460.8} & \textbf{114.8} \\
    SVD (no finetune) & 13.50 & 408.3 & 658.0 & 160.1 \\
    \bottomrule
  \end{tabular}
\end{table}


\section{Conclusion}
In this work, we introduce FlashSVD, the first fused, rank-aware inference framework for SVD-compressed transformers. By streaming low-rank projections directly into FlashAttention and FFN kernels, we eliminate large activations and cut peak on-chip memory by up to 71\% with no extra computational cost. Across BERT and RoBERTa, FlashSVD matches or exceeds dense throughput under 25–50\% compression and achieves up to 1.9$\times$ FFN speedups at modest ranks. These results demonstrate that rank-aware tiling makes low-rank SVD a practical, high-performance strategy for memory-constrained transformer deployment.


\bibliographystyle{abbrvnat}
\bibliography{main}



\clearpage
\beginsupplement

\section{Supplementary Materials}

\section{Appendix}\label{appendix:analysis}

\subsection{Additional Methodology}


\paragraph{FlashSVDFFN V2 (Extreme-Case)} The FFN kernel streams low-rank factors of both weight matrices \(W_i^{[r_{\Theta^*}]}= U_i\,\Sigma_i\,V_i\) and \(W_o^{[r_{\Theta^*}]}= U_o\,\Sigma_o\,V_o\) through a fused GEMM–ACTIVATION-GEMM pipeline on arbitrary activation function $\phi(\cdot)$.  We tile the sequence dimension into chunks of batch-sequence dimension \(B_M\) and the intermediate feature dimension \(\mathcal D_{\mathcal{F}}^\Theta\) into blocks of width \(B_{\mathcal{D}_{\mathcal{F}}^\Theta}\).  For each \((B\times B_M)\times B_{\mathcal{D}_{\mathcal{F}}^\Theta}\) tile:
\[
Y \;=\;\phi\bigl((X_{\text{tile}}\,U_i)\,V_{i,\text{tile}}\bigr), 
\quad
Z_{\text{tile}} \;+\!=\; (Y\,U_{o,\text{tile}})\,V_o,
\]
where $(\cdot)_{\text{tile}}$ denotes the accumulation buffer on SRAM. By launching one kernel block per tile and accumulating directly into the output buffer, we avoid any materialized \((B\times M)\times\mathcal D_{\mathcal{F}}^\Theta\) mid-buffers, achieving the \(O(r_{\Theta^*}(B\,M+\mathcal D_{\mathcal{F}}^\Theta))\) HBM bound of Theorem~\ref{thm:flashsvdffn-v1-memory}. 


FlashSVDFFN V2 (Alg. \ref{alg:flashsvdffn-extreme}) incurs no additional arithmetic cost or asymptotic complexity compared to a dense FFN: it executes the identical $O\bigl(B\,M\,r_{\Theta^*}\,D_{\mathcal{F}}^\Theta + B\,M\,r_{\Theta^*}\,D_{\mathcal{A}}^\Theta\bigr)$ FLOPs, yet reduces peak on-chip storage per token from  \(O(r_{\Theta^*}(B\,M+\mathcal D_{\mathcal{F}}^\Theta))\) to  \(O(r_{\Theta^*}\mathcal D_{\mathcal{F}}^\Theta)\). This theoretically pushes the lower bound for variable part of rank-aware activation compression to zero (see Theorem~\ref{thm:flashsvdffn-v2-memory}).  In practice, however, V2’s fine-granularity tiling can limit GPU parallelism, especially in higher rank case, and our experiments show that factors such as residual-add operation and layer-norm ordering begin to dominate end-to-end latency.  Addressing these secondary bottlenecks will be key to unlocking V2’s full on-edge performance potential. 

\begin{algorithm}[t]
  \small
  \caption{FlashSVDFFN Forward Pass V2}\label{alg:flashsvdffn-extreme}
  \begin{algorithmic}[1]
    \REQUIRE Input $X\!\in\!\mathbb{R}^{B\times M\times \mathcal D_{\mathcal{A}}^\Theta}$;  
      SVD factors $U_i\!\in\!\mathbb{R}^{\mathcal D_{\mathcal{A}}^\Theta\times r_{\Theta^*}}$,  
      $V_i\!\in\!\mathbb{R}^{r_{\Theta^*}\times\mathcal D_{\mathcal{F}}^\Theta}$, $b_i\!\in\!\mathbb{R}^{\mathcal D_{\mathcal{F}}^\Theta}$;\\
      $U_o\!\in\!\mathbb{R}^{\mathcal D_{\mathcal{F}}^\Theta\times r_{\Theta^*}}$,  
      $V_o\!\in\!\mathbb{R}^{r_{\Theta^*}\times \mathcal D_{\mathcal{A}}^\Theta}$, $b_o\!\in\!\mathbb{R}^{\mathcal D_{\mathcal{A}}^\Theta}$;  
      number of blocks $G$; nonlinearity $\phi(\cdot)$.
    \ENSURE $O\!\in\!\mathbb{R}^{B\times M\times \mathcal D_{\mathcal{A}}^\Theta}$
    \STATE $B_M \leftarrow \lceil M/G\rceil,\quad B_{D_{\mathcal{F}}^\Theta}\leftarrow \lceil \mathcal D_{\mathcal{F}}^\Theta/G\rceil$
    \STATE Initialize $O\leftarrow 0_{B\times M\times \mathcal D_{\mathcal{A}}^\Theta}$, $Z_{\text{tile}}\leftarrow 0_{B\times B_M\times \mathcal D_{\mathcal{A}}^\Theta}$
    \STATE \texttt{On-Chip: direct activation streaming}
    \FOR{$\ell=0$ \TO $M-1$ \textbf{ step } $B_M$}
      \STATE $P_{\mathrm{tile}}\leftarrow X[:,\ell:\ell+B_M,:]\;U_i[:,k]$ \COMMENT{$(B,B_M,r_{\Theta^*})$}
      \FOR{$d=0$ \TO $\mathcal D_{\mathcal{F}}^\Theta-1$ \textbf{ step } $B_{D_{\mathcal{F}}^\Theta}$}
        \FOR{$k=0$ \TO $r_{\Theta^*}-1$} 
          \STATE $Y\mathrel{+}=P_\mathrm{tile}\;V_i[k,\;d:d+B_{D_{\mathcal{F}}^\Theta}]$ 
        \ENDFOR
        \STATE $Y\leftarrow \phi\bigl(Y + b_i[d:d+B_{D_{\mathcal{F}}^\Theta}]\bigr)$ \COMMENT{$(B\times B_M\times \mathcal D_{\mathcal{F}}^\Theta)$}
        \FOR{$k=0$ \TO $r_{\Theta^*}-1$}
          \STATE $Z_{\text{tile}}\mathrel{+}=Y\;U_o[d:d+B_{D_{\mathcal{F}}^\Theta},k] + b_o[d:d+B_{D_{\mathcal{F}}^\Theta}]$
        \ENDFOR
      \ENDFOR
    \ENDFOR
    \STATE $O \leftarrow Z_{\text{tile}}$ accumulation \COMMENT{broadcast to $(B,B_M,\mathcal D_{\mathcal{A}}^\Theta)$}
  \end{algorithmic}
\end{algorithm}

\paragraph{Proof of MHA Reduction Efficiency}

\begin{table}[ht]
  \centering
  \small
  \begin{tabular}{l c c c c}
    \toprule
    & $\mathcal{D}_{\mathcal{A}}^{\Theta} $ & $H$ & $d_{\Theta_{\mathcal{A}}, h}$ & Threshold on $r$ \\
    \midrule
    Multi-Head   & 768 & 12 & 64 & $r < \tfrac{768}{13} \approx 59.1$ \\
    \addlinespace 
    Single Head & 768 & —  & —  & $r < \tfrac{768}{2} = 384$    \\
    \bottomrule
  \end{tabular}
  \caption{Example (BERT‐Base, $\mathcal{D}_{\mathcal{A}}^{\Theta}=768$, $H=12$, $ d_{\Theta_{\mathcal{A}}, h} = \mathcal{D}_{\mathcal{A}}^{\Theta} /H$): Rank‐truncation thresholds for head‐wise vs.\ full‐matrix SVD to achieve compression.}
  \label{tab:svd-thresholds}
\end{table}

We provide the proof of rank-loss efficiency for the MHA based compression. We have also provided a case study with results summarized in table~\ref{tab:svd-thresholds}.

\begin{proof}
Given $\mathcal{D}_{\mathcal{A}}^{\Theta} = d_{\Theta_{\mathcal{A}}, h} \times H$, the SVD for single head projection matrices $W_a \in \mathbb{R}^{\mathcal{D}_{\mathcal{A}}^{\Theta}  \times \mathcal{D}_{\mathcal{A}}^{\Theta} }, \forall a \in {q,k,v}$, we have $W_a = U_aV_a$ in full rank case, which $U_aV_a \in \mathbb{R}^{\mathcal{D}_{\mathcal{A}}^{\Theta}  \times r_{\Theta^*} }$. Once $r_{\Theta^*} = \mathcal{D}_{\mathcal{A}}^{\Theta}$, we have $U_aV_a$ be twice larger than $W_a$ and we can only achieve compression after pruning over half of the rank. For multi-head case, which \(W_a \;=\; \bigcup_{h=1}^H W_a^{(h)}, \forall h \in {1, 2, \dots H}\) and $W_a^{(h)} \in  \mathbb{R}^{\mathcal{D}_{\mathcal{A}}^{\Theta}  \times d_{\Theta_{\mathcal{A}},h}}$. The low-rank decomposition is given as $ W_a^{(h)} \approx U_a^{(h)}V_a^{(h)}$, with $U_a^{(h)} \in \mathbb{R}^{\mathcal{D}_{\mathcal{A}}^{\Theta}  \times r_{\Theta^*}}$ and $V_a^{(h)} \in \mathbb{R}^{r_{\Theta^*} \times d_{\Theta_{\mathcal{A}},h}}$ with $r_{\Theta^*} \leq d_{\Theta_{\mathcal{A}},h}$. Therefore, in the worst case where $r_{\Theta^*} = d_{\Theta_{\mathcal{A}},h}$, we have $\bigcup_{h=1}^H \left(U_a^{(h)}V_a^{(h)}\right)$ in size:

\begin{align*}
    &H \times \mathcal{D}_{\mathcal{A}}^{\Theta} \times d_{\Theta_{\mathcal{A}},h} + H \times d_{\Theta_{\mathcal{A}},h} \times d_{\Theta_{\mathcal{A}},h} \\
    = \ &H \times \mathcal{D}_{\mathcal{A}}^{\Theta} \times d_{\Theta_{\mathcal{A}},h} + \mathcal{D}_{\mathcal{A}}^{\Theta} \times d_{\Theta_{\mathcal{A}},h} \\
    = \ & (H + 1) \times \mathcal{D}_{\mathcal{A}}^{\Theta} \times d_{\Theta_{\mathcal{A}},h}
\end{align*}

Thus, it only requires $r_{\Theta^*} < \tfrac12\,\mathcal D_{\mathcal A}^\Theta,$ rank reduction to achieve parameter reduction benefits in comparison to $r_{\Theta^*} < \tfrac12\,\mathcal D_{\mathcal A}^\Theta,$ of single head case.

\end{proof}

Palu \cite{chang2025palu} showed that grouping $H$ heads into $G$-SVD blocks improves the reconstruction–expressivity trade-off in the KV-Cache of low-rank models; Theorem~\ref{thm:grouped-rank-gain} and Proposition~\ref{prop:grouped-tradeoff} extends this to general compression of model weights and activations (e.g. Q, K, V, FFN intermediates, attention output projection) at cost
$\mathcal O\!\Bigl(G\,r_{\Theta^*}\bigl(B\,M + \tfrac{\mathcal D^\Theta_{\mathcal A}}{G}\bigr)\Bigr)$, interpolating between $G=1$ and $G=H$. Moreover, we apply this analysis uniformly to transformer layers and, for the first time, quantify its impact on peak activation memory.

\subsection{Additional Analysis}

In this section, we characterize FlashSVD’s advantages under practical modeling conditions. We begin by deriving tight memory–complexity bounds for low-rank activations, explicitly comparing against both full-rank storage and naive low-rank implementations. Building on this foundation, we introduce a layer-wise sensitivity analysis that reveals which parts of the Transformer—be it multi-head attention, grouped-head attention, or the feed-forward network-yield the greatest benefit per truncated singular dimension. We then turn to computational costs, providing detailed estimates of FLOP reductions and end-to-end kernel latency improvements. Finally, we establish provable bounds on the numerical error introduced by streaming, 
showing how metrics such as perplexity degrade as a function of rank, layer depth, and data distribution. All main statements are formalized as theorems or lemmas, with proof sketches deferred to the appendix. 


\subsection{Memory Complexity}

For any given model $\Theta$, let $X\!\in\!\mathbb{R}^{B\times M\times \mathcal{D}_{\mathcal{A}}^{\Theta}}$ denote a batch of $B$ token sequences of length $M$ with hidden dimensionality $\mathcal{D}_{\mathcal{A}}^{\Theta}$. In both training and inference, a Transformer’s memory consumption is dominated by (i) the static storage of its model parameters and (ii) the dynamic allocations required to hold intermediate activations. In particular, the activations for the attention sublayer consist of the query, key, and value tensors $Q,\;K,\;V\;\in\;\mathbb{R}^{B\times M\times \mathcal{D}_{\mathcal{A}}^{\Theta}}\,,
$ while the feed-forward sublayer produces $ S_{\mathrm{out}} \in \mathbb{R}^{B\times M\times D_{\mathcal{F}}^{\Theta}}\,,$ where $D_{\mathcal{F}}^{\Theta}$ denotes the hidden dimension of the intermediate projection. In this work, we propose replacing each of these full-rank activation tensors with a rank-$r$ approximation computed via singular value decomposition (SVD). We derive upper bounds on the resulting memory footprint—showing that it scales as $\mathcal{O}(r_{\Theta^*}(BM + \mathcal{D}_{\mathcal{A}}^{\Theta} + D_{\mathcal{F}}^{\Theta}))$ rather than $\mathcal{O}(BM\mathcal{D}_{\mathcal{A}}^{\Theta} + BMD_{\mathcal{F}}^{\Theta})$—and we quantify the trade-off between compression ratio and approximation error. This theoretical analysis lays the groundwork for deploying rank-constrained Transformers in resource-limited settings without sacrificing accuracy.

\begin{remark}[Scope of Memory Analysis]
\label{rem:memory-scope}
Throughout the following off-chip memory analyses, we focus on the variable (change-inducing) components and omit persistent peak memory terms, such as the HBM-resident matrices \(V_* \in \mathbb{R}^{r_{\Theta^*} \times \mathcal{D}_{\mathcal{A}}^{\Theta}}\) and the final output activations. However, when discussing rank selection and practical deployment guidance, we account for the full off-chip memory footprint, including both transient and persistent terms, to ensure our analysis reflects real-world constraints. Note that we conduct memory analysis at the per-layer level, excluding the output activation $\mathbb{R}^{B \times M \times \mathcal{D}_{\mathcal{A}}^\Theta}$ and  $\mathbb{R}^{B \times M \times \mathcal{D}_{\mathcal{F}}^\Theta}$, which is always present and agnostic to the attention variant.
\end{remark}

\paragraph{Attention} 
We first consider the multi‐head attention module in a Transformer block, which is parameterized by the projection matrices \(W_q,W_k,W_v\in\mathbb{R}^{\mathcal D_{\mathcal{A}}^\Theta\times\mathcal D_{\mathcal{A}}^\Theta}\) and applied to an input of shape \((B,M,\mathcal D_{\mathcal{A}}^\Theta)\).  When the total activation size \(B\,M\) far exceeds the hidden dimension \(\mathcal D_{\mathcal{A}}^\Theta\), the DRAM footprint of the query, key, value, and scoring tensors typically dominates. In what follows, we derive tight upper and lower bounds on the off‐chip memory cost incurred by SVD‐based compression of these attention activations.

\begin{definition}[Attention Activation Approximation]
\label{def:attn-activation-approx}
Let \(X\in\mathbb{R}^{B\times M\times \mathcal{D}_{\mathcal{A}}^{\Theta}}\) be the input to a multi‐head attention block with projection matrices
\(
W_q, W_k, W_v \in \mathbb{R}^{\mathcal{D}_{\mathcal{A}}^{\Theta}\times \mathcal{D}_{\mathcal{A}}^{\Theta}}.
\)
Define the (flattened) query, key, and value activations
$Q = XW_q, K = XW_k, V = XW_v,$ each of shape \(\mathbb{R}^{(B\,M)\times \mathcal{D}_{\mathcal{A}}^{\Theta}}\).  We say that \(Q\) admits a rank-\(r\) SVD approximation if
\begin{equation}
\begin{aligned}
Q^{[r_{\Theta}^*]} &= U_Q\,\Sigma_Q\,V_Q^\top, \\[6pt]
U_Q &\in \mathbb{R}^{(B M)\times r_{\Theta}^*},\quad
\Sigma_Q \in \mathbb{R}^{r_{\Theta}^*\times r_{\Theta}^*},\quad
V_Q    \in \mathbb{R}^{\mathcal{D}_{\mathcal{A}}^{\Theta}\times r_{\Theta}^*}.
\end{aligned}
\end{equation}
and similarly for \(K\) and \(V\), with $r_{\Theta}^*$ be the layer rank given by the compression algorithm.  Compressing all three projections to rank \(r\) then replaces the full activations by the factor pairs
\((U_Q,\Sigma_Q,V_Q)\), \((U_K,\Sigma_K,V_K)\), and \((U_V,\Sigma_V,V_V)\), which the $\Sigma_*$ are usually merged into either $U_*$ or $V_*$ for simplicity.

For a Transformer layer \(\Theta_{\mathcal{A}}\) with \(H\) attention heads, the total attention dimension \(\mathcal{D}_{\mathcal{A}}^{\Theta}\) is uniformly partitioned across heads, so each head has dimension
\[
d_{\Theta_{\mathcal{A}}, h} := \frac{\mathcal{D}_{\mathcal{A}}^{\Theta}}{H}.
\]
All activation tensors \((Q, K, V)\in\mathbb{R}^{B\times M\times \mathcal{D}_{\mathcal{A}}^{\Theta}}\) are reshaped as \(H\) parallel sub-tensors of shape \(\mathbb{R}^{B\times M\times d_{\Theta_{\mathcal{A}}, h}}\) to enable independent attention computations per head.

\end{definition}

\begin{theorem}[Memory Complexity of Dense Multi-Head Attention]\label{thm:dense-attn-memory}
For a single-head attention module defined in Definition~\ref{def:attn-activation-approx}, the peak working memory (number of stored scalars) required by the unoptimized dense multi-head attention mechanism is $
\mathcal M_{\mathrm{dense}}
\;=\;
3\,B\,M\,\mathcal{D}_{\mathcal{A}}^{\Theta}
\;+\;
B\,M^2. $
\end{theorem}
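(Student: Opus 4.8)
The plan is to account for the scalars that must be simultaneously resident when the dense multi-head attention mechanism is executed in the standard (non-streaming) fashion, and then bound the peak by the dominant allocation. First I would enumerate the activation tensors produced along the forward pass: from the input $X\in\mathbb{R}^{B\times M\times\mathcal D_{\mathcal A}^\Theta}$ one forms the three projected tensors $Q=XW_q$, $K=XW_k$, $V=XW_v$, each of shape $(B,M,\mathcal D_{\mathcal A}^\Theta)$, contributing $3\,B\,M\,\mathcal D_{\mathcal A}^\Theta$ scalars in total. Next, reshaping into $H$ heads of width $d_{\Theta_{\mathcal A},h}=\mathcal D_{\mathcal A}^\Theta/H$ and forming the per-head score matrices $QK^\top/\sqrt{d_{\Theta_{\mathcal A},h}}$ gives a tensor of shape $(B,H,M,M)$, i.e.\ $B\,H\,M^2 = B\,M^2$ scalars once summed over heads (since $H\cdot(M\times M)$ blocks of a single head each have $M^2$ entries, but the scores live in $\mathbb R^{B\times H\times M\times M}$, whose size is $B H M^2$; using $H d_{\Theta_{\mathcal A},h}=\mathcal D_{\mathcal A}^\Theta$ one may instead write this as $B M^2$ only in the single-head reduction—I will be careful to state the bound as the module materializes the full score buffer, which is the $BM^2$ term claimed). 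Adding these gives the stated $\mathcal M_{\mathrm{dense}} = 3\,B\,M\,\mathcal D_{\mathcal A}^\Theta + B\,M^2$.

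The key steps, in order: (1) list every intermediate that must be held in DRAM at some point during the forward pass—$Q,K,V$, the softmax-input score tensor, the softmax-output (attention weight) tensor, and the output $O=\alpha V$; (2) observe that under the scope convention of Remark~\ref{rem:memory-scope} we exclude the final output activation $\mathbb R^{B\times M\times\mathcal D_{\mathcal A}^\Theta}$ and the persistent weight matrices, so only $Q,K,V$ and the score/weight buffers count toward the variable peak; (3) note that the score buffer and the attention-weight buffer can (in the standard implementation being compared against) occupy the same footprint $B\,M^2$ up to a constant, since softmax is applied row-wise and the input can be overwritten—so the dominant scoring contribution is $\Theta(BM^2)$ and we take it as exactly $BM^2$ for the stated identity; (4) sum $3BM\mathcal D_{\mathcal A}^\Theta$ (for $Q,K,V$) and $BM^2$ (for the scores) to conclude.

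The main obstacle—really the only delicate point—is the bookkeeping of the score tensor's size under the multi-head reshape versus the single-head view. In the genuine multi-head setting the score tensor is $\mathbb R^{B\times H\times M\times M}$ of size $BHM^2$, not $BM^2$; the statement as written ("single-head attention module") uses the single-head reduction where $H=1$ (or equivalently folds the head count into a constant), so I would either (a) restrict the claim to the single-head case as the theorem's phrasing suggests, making $BHM^2 = BM^2$ exact, or (b) if the multi-head reading is intended, absorb $H$ by noting $HM^2\le (\mathcal D_{\mathcal A}^\Theta)\,M^2/d_{\Theta_{\mathcal A},h}$ and treating $H$ as $O(1)$ relative to the activation dimensions, yielding $\mathcal M_{\mathrm{dense}}=\Theta(BM\mathcal D_{\mathcal A}^\Theta + BM^2)$. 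I would also remark that this matches the $\mathcal M_{\mathrm{dense}}$ column of Table~\ref{tab:analysis-memory} ($3BM\mathcal D_{\mathcal A}+BM^2$ for single-head, $BM\mathcal D_{\mathcal A}$ for multi-head once the score tensor is streamed away by FlashAttention), so the theorem is simply the explicit count underlying that table entry, and no deeper argument is required.
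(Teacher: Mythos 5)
Your proposal is correct and follows essentially the same accounting as the paper: the $Q,K,V$ projections contribute $3BM\mathcal{D}_{\mathcal{A}}^{\Theta}$ scalars and the materialized $M\times M$ score matrix per batch element contributes $BM^2$, summing to the stated bound. Your side observation about the tension between the theorem's ``Multi-Head'' title and its ``single-head'' hypothesis is apt---the paper indeed resolves it your way, treating the score buffer as $BM^2$ rather than $BHM^2$, consistent with Proposition~\ref{prop:multihead-memory} where the $HBM^2$ term appears for the genuine multi-head case.
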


\begin{proof}
In the standard dense $\Theta$ and unoptimized low-rank model $\Theta^*$, one first computes three full-rank linear projections of the input embeddings into queries \(Q\), keys \(K\), and values \(V\).  Each of these tensors has shape \(B\times M\times \mathcal{D}_{\mathcal{A}}^{\Theta}\), and to avoid redundant re-computation all three must reside in memory simultaneously, contributing
\[
\sum_{a \in \{q,k,v\}} B M \mathcal{D}_{\mathcal{A}}^{\Theta} 
\;=\;
3\,B\,M\,\mathcal{D}_{\mathcal{A}}^{\Theta}.
\]
Next, the attention logits are formed as the batch of pairwise dot-product matrices
\[
S^{(b)} = Q^{(b)}\,K^{(b)\,\top}
\quad\text{for }b=1,\dots,B,
\]
each of size \(M\times M\).  Storing all \(B\) score matrices therefore requires \(B\,M^2\) scalars.  No other intermediate buffers exceed these sizes, so the total peak memory is
$
\mathcal M_{\mathrm{dense}}
=
3\,B\,M\,\mathcal{D}_{\mathcal{A}}^{\Theta}
+
B\,M^2.
$
\end{proof}

\begin{proposition}[Memory Complexity of Multi-Head Attention]
\label{prop:multihead-memory}
A multi-head attention module defined in Definition~\ref{def:attn-activation-approx} yields a total dense attention memory cost
\(
\mathcal{M}_{\mathrm{dense}} = \mathcal{O}(B M \mathcal{D}_{\mathcal{A}}^{\Theta} + H B M^2).
\)
\end{proposition}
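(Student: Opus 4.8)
\textbf{Proof proposal for Proposition~\ref{prop:multihead-memory}.}

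The plan is to decompose the multi-head attention memory footprint into the same two categories used in Theorem~\ref{thm:dense-attn-memory}---the projection tensors and the score matrices---and argue each separately, then combine. First I would observe that the query, key, and value activations $Q,K,V \in \mathbb{R}^{B\times M\times \mathcal{D}_{\mathcal{A}}^{\Theta}}$ are formed \emph{once} as full tensors before being reshaped into $H$ per-head sub-tensors of shape $\mathbb{R}^{B\times M\times d_{\Theta_{\mathcal{A}},h}}$ with $d_{\Theta_{\mathcal{A}},h} = \mathcal{D}_{\mathcal{A}}^{\Theta}/H$ (Definition~\ref{def:attn-activation-approx}). Since $H \cdot d_{\Theta_{\mathcal{A}},h} = \mathcal{D}_{\mathcal{A}}^{\Theta}$, the reshape is a no-op in terms of total storage: all three tensors together still occupy $3BM\mathcal{D}_{\mathcal{A}}^{\Theta}$ scalars, i.e. $\mathcal{O}(BM\mathcal{D}_{\mathcal{A}}^{\Theta})$.

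Next I would handle the attention logits. In the multi-head case, for each head $h$ one forms the batch of score matrices $S^{(b,h)} = Q^{(b,h)} (K^{(b,h)})^{\top} \in \mathbb{R}^{M\times M}$, for $b = 1,\dots,B$ and $h = 1,\dots,H$. Storing all of them simultaneously therefore requires $H\,B\,M^2$ scalars, in contrast to the single-head $BM^2$ of Theorem~\ref{thm:dense-attn-memory}. All remaining intermediates---the softmax-normalized weights and the per-head outputs $\alpha^{(b,h)} V^{(b,h)} \in \mathbb{R}^{M\times d_{\Theta_{\mathcal{A}},h}}$, whose aggregate size is again $BM\mathcal{D}_{\mathcal{A}}^{\Theta}$---are dominated by (or of the same order as) the two terms already counted. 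Summing gives $\mathcal{M}_{\mathrm{dense}} = 3BM\mathcal{D}_{\mathcal{A}}^{\Theta} + HBM^2 = \mathcal{O}(BM\mathcal{D}_{\mathcal{A}}^{\Theta} + HBM^2)$, as claimed.

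The only subtlety---and the place I would be most careful---is justifying that the $H$ per-head score matrices must be counted with a factor of $H$ rather than $1$. This hinges on the modeling assumption that the naive dense implementation materializes all heads' scores concurrently (e.g. as one $B\times H\times M\times M$ buffer), which is exactly the behavior of standard batched-GEMM attention kernels that FlashAttention was designed to avoid; I would state this assumption explicitly, mirroring the ``all three must reside in memory simultaneously'' clause in the proof of Theorem~\ref{thm:dense-attn-memory}. Given that, the bound is immediate. A secondary point worth a sentence is that the $\mathcal{O}(\cdot)$ in the statement absorbs the constant $3$ on the projection term and any lower-order reshape/transpose scratch space, so no separate treatment of those is needed.
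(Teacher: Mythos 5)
Your proof is correct and follows essentially the same decomposition as the paper's: account for $3BM\mathcal{D}_{\mathcal{A}}^{\Theta}$ from the $Q,K,V$ projections (noting the reshape into $H$ heads conserves total storage) and $HBM^2$ from the per-head score matrices, then sum. Your explicit statement of the modeling assumption that all $H$ heads' scores are materialized concurrently is a useful clarification that the paper leaves implicit, but the argument is the same.
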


\begin{proof}
Each head computes its own set of projections for queries, keys, and values, each of shape \(B \times M \times d_{\Theta_{\mathcal{A}}, h}\), where \(d_{\Theta_{\mathcal{A}}, h}=\mathcal{D}_{\mathcal{A}}^{\Theta}/H\). Across all heads, the total Q, K, V memory remains as stated in Theorem~\ref{thm:dense-attn-memory}:
\[
\mathcal{M}_{QKV} = 3\, B\, M\, \mathcal{D}_{\mathcal{A}}^{\Theta}.
\]

Each head also computes a score matrix of shape \(B \times M \times M\), contributing \(B M^2\) memory per head. With \(H\) heads, these score matrices total:
\[
\mathcal{M}_{\text{scores}} = H B M^2.
\]

Therefore, the total dense memory cost is:
\[
\mathcal{M}_{\mathrm{dense}} = 3\,B\, M\, \mathcal{D}_{\mathcal{A}}^{\Theta} + H\, B\, M^2.
\]
\end{proof}

\noindent
To mitigate the high memory overhead of materializing the full $BM^2$ attention score matrix, we adopt FlashAttention~\cite{dao2022flashattention,dao2023flashattention2}, which streams the computation and avoids instantiating intermediate tensors. The following proposition quantifies its peak memory usage.

\begin{proposition}[Memory Complexity with FlashAttention]
\label{prop:flash-memory}
Under the same notation in Proposition~\ref{prop:multihead-memory}, the peak HBM footprint of streaming (Flash) attention is
\[
\mathcal M_{\mathrm{flash\text{-}attn}}
\;=\;
3\,B\,M\,\mathcal{D}_{\mathcal{A}}^{\Theta},
\]
as the \(O(BM^2)\) score‐matrix never coexists in off‐chip memory.
\end{proposition}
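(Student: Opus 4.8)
The plan is to track, tile by tile, exactly which tensors must reside in HBM simultaneously during a streamed (FlashAttention) forward pass, and to observe that the only persistent off-chip tensors are the three projection outputs $Q, K, V$, each of shape $B\times M\times\mathcal D_{\mathcal A}^\Theta$. First I would recall from Proposition~\ref{prop:multihead-memory} that computing the projections themselves requires $3\,B\,M\,\mathcal D_{\mathcal A}^\Theta$ scalars, and that this term is unavoidable because $Q$, $K$, $V$ are all inputs to the attention kernel. Then I would argue that, unlike the dense case analyzed in Theorem~\ref{thm:dense-attn-memory}, FlashAttention never forms the full batch of $M\times M$ score matrices $S^{(b)}=Q^{(b)}K^{(b)\top}$: instead it partitions the query and key/value sequence axes into on-chip tiles of size $B_M$, loads a $B_M\times d_{\Theta_{\mathcal A},h}$ block of $Q$ and a corresponding block of $K$ and $V$ into SRAM, and accumulates the softmax-weighted output using the streaming (online) softmax normalization of \cite{dao2022flashattention}. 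The running max, running denominator, and partial output accumulator for a tile occupy only $O(B_M\,d_{\Theta_{\mathcal A},h})$ scalars on-chip, which never touch HBM, so the $O(BM^2)$ term of the dense bound simply does not appear in the off-chip footprint.

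The key steps, in order: (1) state the invariant that at any point in the streamed loop, HBM holds $Q, K, V$ (size $3BM\mathcal D_{\mathcal A}^\Theta$), the output buffer $O$ (size $BM\mathcal D_{\mathcal A}^\Theta$, which by Remark~\ref{rem:memory-scope} is excluded as it is always present and attention-variant-agnostic), plus only $O(B_M\,d_{\Theta_{\mathcal A},h})$ of on-chip scratch per streaming multiprocessor; (2) verify that no intermediate score block is ever written back to HBM, since scores are consumed immediately inside the inner loop to update the output accumulator; (3) conclude that the peak HBM footprint over the whole computation is dominated by the $Q,K,V$ term, giving $\mathcal M_{\mathrm{flash\text{-}attn}} = 3\,B\,M\,\mathcal D_{\mathcal A}^\Theta$, and note that the additive $O(B_M d_{\Theta_{\mathcal A},h})$ SRAM term is asymptotically negligible (and is on-chip, not HBM, in any case). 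Since the same $Q,K,V$ tensors are shared across all $H$ heads when one reshapes $\mathcal D_{\mathcal A}^\Theta = H\,d_{\Theta_{\mathcal A},h}$, the bound does not pick up a factor of $H$ the way the dense score term $HBM^2$ did.

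The main obstacle is not a deep calculation but a matter of precise bookkeeping: one must be careful to state exactly what ``peak HBM footprint'' includes, i.e., that we count the persistent $Q,K,V$ inputs but follow Remark~\ref{rem:memory-scope} in excluding the output activation, and that the streaming tile buffers live in SRAM rather than HBM and so do not enter the bound. A secondary subtlety is justifying that the online-softmax recurrence genuinely requires no re-materialization of earlier score blocks — this is exactly the correctness guarantee established in \cite{dao2022flashattention}, which we may invoke directly, so the argument reduces to citing that result and reading off the memory ledger. Hence the proof is essentially: dense bound (Theorem~\ref{thm:dense-attn-memory}) minus the $B M^2$ (equivalently $HBM^2$ in the multi-head accounting of Proposition~\ref{prop:multihead-memory}) score term, which FlashAttention eliminates, leaving $3\,B\,M\,\mathcal D_{\mathcal A}^\Theta$.
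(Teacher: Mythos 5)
Your proposal is correct and follows essentially the same route as the paper's proof: both observe that FlashAttention streams $Q,K,V$ tiles through on-chip SRAM with immediate eviction, so the $O(BM^2)$ score matrix never materializes in HBM, leaving only the three $B\times M\times\mathcal D_{\mathcal A}^\Theta$ projection tensors off-chip. Your version is more explicit in its bookkeeping (citing Remark~\ref{rem:memory-scope}, distinguishing SRAM from HBM scratch, invoking the online-softmax correctness of \cite{dao2022flashattention}, and noting the absence of an $H$ factor), but the core argument is identical to the paper's.
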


\begin{proof}
FlashAttention partitions the \(M\)-length sequence into tiles whose keys and values are loaded on-chip (in SRAM) just long enough to compute the corresponding query-key dot-products.  Queries stream through the device once, accumulating weighted sums of values, and each key and value tile is evicted immediately after use.  As a result, only the three \(B\times M\times \mathcal{D}_{\mathcal{A}}^{\Theta}\) projections (for \(Q,K,V\)) occupy HBM simultaneously, yielding
$
\mathcal M_{\mathrm{flash}}
=
3\,B\,M\,\mathcal{D}_{\mathcal{A}}^{\Theta}.
$
\end{proof}
\noindent
Despite low-rank compression of the attention weights, standard attention implementations still require reconstructing full $Q$, $K$, and $V$ tensors, leading to quadratic memory costs. This limitation is formalized below.

\begin{proposition}[Activation Materialization Lower Bound]
\label{prop:act-lower}
Vanilla SVD models $\Theta^*$ defined in Definition~\ref{def:attn-activation-approx} requires dense formulation of $Q, K, V$ despite the weights $W_q, W_k, W_v$ are offloaded in low-rank formats $U_{q,k,v} \in \mathbb{R}^{B\times M\times r_{\Theta^*}}$ and $V_{q,k,v} \in \mathbb{R}^{r_{\Theta^*} \times D_{\mathcal{A}}^\Theta}$. Therefore, a standard attention implementation requires reconstructing full  \(\mathbb{R}^{B\times M\times \mathcal{D}_{\mathcal{A}}^{\Theta}}\) tensors, so peak dynamic memory remains \(\mathcal O(B H \,M^2)\).
\end{proposition}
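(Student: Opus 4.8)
The plan is to trace, allocation by allocation, what a conventional (non‑fused) attention kernel must keep resident in HBM when the projection weights are supplied in the factored form $W_a = U_a V_a$, $a\in\{q,k,v\}$, and then to observe that this forces the same asymptotic footprint as the dense case of Theorem~\ref{thm:dense-attn-memory} and Proposition~\ref{prop:multihead-memory}. The starting observation is purely algebraic: writing $P_a = X U_a \in \mathbb{R}^{(BM)\times r_{\Theta^*}}$, each activation is $A = X W_a = P_a V_a$, so in principle only the $BM r_{\Theta^*}$‑sized factor $P_a$ together with the $r_{\Theta^*}\mathcal{D}_{\mathcal{A}}^{\Theta}$‑sized projection $V_a$ need ever be stored. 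The crux is that a standard attention implementation — cuBLAS GEMMs feeding a separate softmax kernel, or even an off‑the‑shelf FlashAttention call — exposes an interface that consumes the full tensors $Q,K,V\in\mathbb{R}^{B\times M\times\mathcal{D}_{\mathcal{A}}^{\Theta}}$ and has no notion of the low‑rank factorization; absent a rank‑aware kernel, the pipeline must first evaluate the reconstructions $Q = P_q V_q$, $K = P_k V_k$, $V = P_v V_v$ and write them back to HBM.

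First I would establish the reconstruction cost: the three dense tensors $Q,K,V$ each occupy $BM\mathcal{D}_{\mathcal{A}}^{\Theta}$ scalars and, since none can be recomputed for free downstream, coexist in memory, contributing $3BM\mathcal{D}_{\mathcal{A}}^{\Theta}$ — identical to the dense term isolated in the proof of Theorem~\ref{thm:dense-attn-memory}, and in particular independent of $r_{\Theta^*}$. Next I would add the attention proper: reshaping into $H$ heads of width $d_{\Theta_{\mathcal{A}},h} = \mathcal{D}_{\mathcal{A}}^{\Theta}/H$ and forming, for each batch element $b$ and head $h$, the logit matrix $S^{(b,h)} = Q^{(b,h)}(K^{(b,h)})^{\top}\in\mathbb{R}^{M\times M}$; a non‑streaming kernel materializes the whole $B\times H\times M\times M$ logit tensor before applying the row‑wise softmax, which costs $HBM^2$ scalars. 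Summing, the peak working set is $3BM\mathcal{D}_{\mathcal{A}}^{\Theta} + HBM^2$, matching Proposition~\ref{prop:multihead-memory}; in the relevant encoder regime $M \gtrsim d_{\Theta_{\mathcal{A}},h}$ the second term dominates, giving the stated $\mathcal{O}(BHM^2)$ bound and showing that vanilla‑SVD compression of the weights buys no reduction in peak attention activation memory.

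The main obstacle is not computational but definitional: one has to pin down what counts as a ``standard attention implementation'' so that the bound is meaningful rather than vacuous. The algebraic identity $(P_q V_q)(P_k V_k)^{\top} = P_q (V_q V_k^{\top}) P_k^{\top}$ shows the $3BM\mathcal{D}_{\mathcal{A}}^{\Theta}$ and $HBM^2$ terms are both avoidable — but only by a custom kernel that fuses the two projection GEMMs (and, for the $HBM^2$ term, also tiles the score/softmax), which is exactly the FlashSVDAttention construction of Algorithm~\ref{alg:combined} and Theorem~\ref{thm:multihead-bound}. I would therefore state the hypothesis as: the attention sublayer is executed as a black box whose input contract is the dense triple $(Q,K,V)$ and whose internal scoring step materializes the logit tensor prior to softmax. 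Under that faithful model of the baseline the argument above is immediate; the genuine content of the proposition is the recognition that low‑rank weight storage and low‑rank activation streaming are orthogonal, and that the former alone leaves the dominant $BHM^2$ activation term untouched — the gap FlashSVD is designed to close.
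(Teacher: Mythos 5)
Your argument is correct and follows the same route as the paper's own proof: both rest on the observation that a non‑fused attention kernel must materialize the full $B\times H\times M\times M$ logit tensor prior to softmax regardless of how the projection weights are stored, which immediately gives the $\mathcal{O}(BHM^2)$ term. You add two useful clarifications the paper leaves implicit — the $3BM\mathcal{D}_{\mathcal{A}}^{\Theta}$ cost of reconstructing $Q,K,V$ from the factors, and an explicit statement of what ``standard attention implementation'' must mean for the bound to be non‑vacuous — but the core step and conclusion are the same.
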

\begin{proof}
Storing factors \((U_{q,k,v}\in\mathbb{R}^{BM\times r_{\Theta^*}},\,V_{q,k,v}\in\mathbb{R}^{r_{\Theta^*}\times\mathcal{D}_{\mathcal{A}}^{\Theta}})\) uses \((BM+\mathcal{D}_{\mathcal{A}}^{\Theta})\,r_{\Theta^*}\) entries. Despite the low-rank parameterization, the full $\mathcal{O}(BM \times M)$ score matrix across all heads must be materialized in HBM during a vanilla attention computation, incurring a memory cost of at least $\mathcal{O}(B H M^2)$.
\end{proof}

\noindent
Although current GPU constraints prevent applying FlashSVD directly to single-head attention due to unsupported kernel tiling, and real-world memory behavior remains governed by Proposition~\ref{prop:flash-memory}, we nevertheless provide a complete theoretical analysis in Theorems~\ref{thm:flashsvd-mem} and~\ref{thm:singlehead-lower}, with further rank sensitivity characterized in Proposition~\ref{prop:rank-sensitivity}.

\begin{theorem}[Memory Complexity of FlashSVDAttention]
\label{thm:flashsvd-mem}
Consider a low-rank attention model $\Theta^*$ with factorization rank $r_{\Theta^*}$. For each $a \in \{q, k, v\}$, the projection matrix $W_a$ is decomposed as $W_a = P_a V_a$, where $P_a = X U_a$ with $X \in \mathbb{R}^{B \times M \times \mathcal{D}_{\mathcal{A}}^\Theta}$ and $U_a \in \mathbb{R}^{\mathcal{D}_{\mathcal{A}}^\Theta \times r_{\Theta^*}}$. Then $P_a \in \mathbb{R}^{B \times M \times r_{\Theta^*}}$ and $V_a \in \mathbb{R}^{r_{\Theta^*} \times \mathcal{D}_{\mathcal{A}}^\Theta}$. Under the FlashAttention tiling scheme, the peak \textbf{variable} off-chip memory usage is
\[
\mathcal{M}_{\mathrm{flash\text{-}svd\text{-}attn}} = \mathcal{O}(B M r_{\Theta^*}).
\]
\end{theorem}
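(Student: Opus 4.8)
The plan is to enumerate, term by term, every tensor that must reside in off-chip HBM during one forward pass of Algorithm~\ref{alg:combined}, and then invoke Remark~\ref{rem:memory-scope} to discard the persistent contributions and read off the variable bound. The kernel reads from HBM only the three low-rank factor tensors $P_q,P_k,P_v\in\mathbb{R}^{B\times M\times r_{\Theta^*}}$ (materialized upstream as $P_a=XU_a$), the small projection matrices $V_q,V_k,V_v\in\mathbb{R}^{r_{\Theta^*}\times\mathcal{D}_{\mathcal{A}}^\Theta}$, and the biases $b_q,b_k,b_v$. Each $P_a$ occupies $BMr_{\Theta^*}$ scalars, contributing $3BMr_{\Theta^*}$ words in total; the $V_a$ factors contribute $3r_{\Theta^*}\mathcal{D}_{\mathcal{A}}^\Theta$ words and the biases $\mathcal{O}(\mathcal{D}_{\mathcal{A}}^\Theta)$ words, but both are weight-resident (independent of $B$ and $M$) and hence excluded from the variable footprint by Remark~\ref{rem:memory-scope}, as is the accumulated output $O\in\mathbb{R}^{B\times M\times d_{\Theta_{\mathcal{A}},h}}$, which is the layer's output activation and is present for every attention variant.

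Next I would argue that no intermediate buffer escapes on-chip SRAM. Inside the \textsc{load\_tiles} subroutine a reconstructed block $A_{\mathrm{tile}}=P_a[:,:,r{:}r{+}B_R]\,V_a[r{:}r{+}B_R,:]+b_a$ has shape $B\times B_M\times d_{\Theta_{\mathcal{A}},h}$ and is formed by accumulating over rank tiles of width $B_R$; in the outer routine the score tile $Q_{\mathrm{tile}}K_{\mathrm{tile}}^{\top}/\sqrt{d_{\Theta_{\mathcal{A}},h}}$ has shape $B\times B_M\times B_M$, and the streaming-softmax running statistics are of size $B\times B_M$. Since the tile widths $B_M$ and $B_R$ are fixed constants chosen to fit per-SM shared memory, each of these quantities is $\mathcal{O}(1)$ in $M$, and a single such working set is reused across both nested loops over $M$; in particular the full $B\times M\times M$ score matrix of Proposition~\ref{prop:flash-memory} and the dense reconstructions $Q,K,V\in\mathbb{R}^{B\times M\times\mathcal{D}_{\mathcal{A}}^\Theta}$ of Proposition~\ref{prop:act-lower} are never materialized off-chip — this is precisely the FlashAttention streaming behavior~\cite{dao2022flashattention,dao2023flashattention2}, here applied to on-the-fly low-rank reconstruction.

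Combining the two observations, the peak variable off-chip footprint is dominated by the three factor tensors, so $\mathcal{M}_{\mathrm{flash\text{-}svd\text{-}attn}}=3BMr_{\Theta^*}+(\text{persistent terms})=\mathcal{O}(BMr_{\Theta^*})$, which is the claim. I expect the main obstacle to be bookkeeping rather than any hard calculation: one must argue cleanly that the $V_a$ matrices and biases are genuinely weight-resident (hence excluded) while the $P_a$ factors scale with $BM$ (hence counted), and one must justify that the doubly-nested loop over $M$ in Algorithm~\ref{alg:combined} reuses one $\mathcal{O}(1)$-sized SRAM working set rather than holding $M/B_M$ reconstructed tiles concurrently. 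A secondary subtlety is whether to treat $d_{\Theta_{\mathcal{A}},h}$ as constant or as $\mathcal{D}_{\mathcal{A}}^\Theta/H$; since the stated bound only tracks $BMr_{\Theta^*}$, it suffices to note $d_{\Theta_{\mathcal{A}},h}\le\mathcal{D}_{\mathcal{A}}^\Theta$ and that this factor enters only the excluded output activation and the on-chip tiles.
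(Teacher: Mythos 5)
Your proposal is correct and follows essentially the same route as the paper's proof: both identify the three factor tensors $P_a$ as the only HBM-resident, input-dependent quantities ($3BMr_{\Theta^*}$ words total), classify the $V_a$ factors and biases as persistent weight-resident terms excluded by Remark~\ref{rem:memory-scope}, and appeal to FlashAttention-style tiling to argue that reconstructed $Q/K/V$ blocks and score tiles never escape SRAM. You are a bit more explicit than the paper in tracing Algorithm~\ref{alg:combined} to confirm the working set is $\mathcal{O}(1)$ across the nested $M$-loops and in flagging the treatment of $d_{\Theta_{\mathcal{A}},h}$, but these are elaborations of the same argument rather than a different approach.
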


\begin{proof}
According to Remark~\ref{rem:memory-scope}, we focus only on the variable off-chip memory terms whose sizes depend on the batch size $B$ and sequence length $M$. FlashAttention avoids materializing the full $B \times M \times \mathcal{D}_{\mathcal{A}}^\Theta$ projections on HBM. Instead, it streams $Q$, $K$, and $V$ through SRAM tile by tile. Thus, the full activation tensors never exist simultaneously off-chip.

The only HBM-resident quantities that vary with the input are the intermediate projections $P_a$ for $a \in \{q,k,v\}$, each with shape $B \times M \times r_{\Theta^*}$. Therefore, across all three projections, the total peak variable HBM memory is
\[
\sum_{a \in \{q,k,v\}} B M r_{\Theta^*} = 3 B M r_{\Theta^*} = \mathcal{O}(B M r_{\Theta^*}).
\]

We note that the output-side factors $V_a \in \mathbb{R}^{r_{\Theta^*} \times \mathcal{D}_{\mathcal{A}}^\Theta}$ are persistent parameters and not included in the asymptotic result per Remark~\ref{rem:memory-scope}. However, they do contribute a constant offset of $3 r_{\Theta^*} {\mathcal{D}_{\mathcal{A}}^\Theta}$ to HBM usage.

In scenarios where $\mathcal{D}_{\mathcal{A}}^\Theta \gg M$, the persistent off-chip memory required to store the projection matrices $V_a \in \mathbb{R}^{r_{\Theta^*} \times \mathcal{D}_{\mathcal{A}}^\Theta}$ may dominate the total memory footprint, even though the variable memory remains at $\mathcal{O}(B M r_{\Theta^*})$. Conversely, in regimes where $\mathcal{D}_{\mathcal{A}}^\Theta \ll M$, the dynamic (activation-related) term dominates, and the persistent cost becomes negligible.

\end{proof}





\begin{theorem}[Single‐Head Compression Threshold]
\label{thm:singlehead-lower}
For a single-head attention module defined in Definition~\ref{def:attn-activation-approx}, low-rank compression yields memory savings if and only if
\[
r_{\Theta^*} < \frac{B M \mathcal{D}_{\mathcal{A}}^{\Theta}}{B M + \mathcal{D}_{\mathcal{A}}^{\Theta}}.
\]
The maximum achievable compression ratio, attained in the rank-one case $r_{\Theta^*}=1$, is $\mathcal{O}(\min\{B M, \mathcal{D}_{\mathcal{A}}^{\Theta}\})$.
\end{theorem}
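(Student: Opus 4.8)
The plan is to prove the threshold by an entry-for-entry accounting of off-chip storage: compare the number of scalars needed to hold the dense query/key/value activations against the number needed to hold their rank-$r_{\Theta^*}$ factored counterparts, and then extract both the ``if and only if'' condition and the optimal compression ratio by elementary algebra and a standard harmonic-mean estimate.

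Concretely, first I would recall that in the single-head setting ($H=1$) of Definition~\ref{def:attn-activation-approx} the dense activation $Q=XW_q\in\mathbb{R}^{(BM)\times \mathcal{D}_{\mathcal{A}}^{\Theta}}$ occupies $BM\,\mathcal{D}_{\mathcal{A}}^{\Theta}$ scalars, and likewise for $K$ and $V$, so by Theorem~\ref{thm:dense-attn-memory} the relevant dense activation cost is $3BM\,\mathcal{D}_{\mathcal{A}}^{\Theta}$ (the $BM^2$ score term is either eliminated by streaming as in Proposition~\ref{prop:flash-memory} or is identical in both representations, so it is inert in the comparison). In the low-rank representation each projection is stored as the pair $P_a = XU_a\in\mathbb{R}^{(BM)\times r_{\Theta^*}}$ together with $V_a\in\mathbb{R}^{r_{\Theta^*}\times \mathcal{D}_{\mathcal{A}}^{\Theta}}$, i.e.\ $r_{\Theta^*}(BM+\mathcal{D}_{\mathcal{A}}^{\Theta})$ scalars per projection and $3r_{\Theta^*}(BM+\mathcal{D}_{\mathcal{A}}^{\Theta})$ across $a\in\{q,k,v\}$. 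The ``memory saving'' condition is then just $3r_{\Theta^*}(BM+\mathcal{D}_{\mathcal{A}}^{\Theta}) < 3BM\,\mathcal{D}_{\mathcal{A}}^{\Theta}$; cancelling the common factor $3$ and rearranging gives exactly $r_{\Theta^*} < \frac{BM\,\mathcal{D}_{\mathcal{A}}^{\Theta}}{BM+\mathcal{D}_{\mathcal{A}}^{\Theta}}$, and since the inequality is strict both directions of the equivalence are immediate. For the second claim I would write the compression ratio as $\rho(r_{\Theta^*}) = \frac{BM\,\mathcal{D}_{\mathcal{A}}^{\Theta}}{r_{\Theta^*}(BM+\mathcal{D}_{\mathcal{A}}^{\Theta})}$, observe it is strictly decreasing in $r_{\Theta^*}$ so it is maximized at the minimal admissible rank $r_{\Theta^*}=1$, giving $\rho(1)=\frac{BM\,\mathcal{D}_{\mathcal{A}}^{\Theta}}{BM+\mathcal{D}_{\mathcal{A}}^{\Theta}}$, and finally invoke the two-sided bound $\tfrac12\min\{a,b\}\le \frac{ab}{a+b}\le \min\{a,b\}$ (valid for all $a,b>0$) with $a=BM$, $b=\mathcal{D}_{\mathcal{A}}^{\Theta}$ to conclude $\rho(1)=\Theta(\min\{BM,\mathcal{D}_{\mathcal{A}}^{\Theta}\})$.

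There is no substantive obstacle here; the proof is essentially bookkeeping, and the only points that require care are conventions rather than mathematics. The first is making sure the $BM^2$ score-matrix term is legitimately excluded from the comparison — either because both the dense and the FlashSVD pipelines stream it (Proposition~\ref{prop:flash-memory}) or because it is a common additive term — consistent with Remark~\ref{rem:memory-scope}. The second is being explicit that the persistent factors $V_a\in\mathbb{R}^{r_{\Theta^*}\times\mathcal{D}_{\mathcal{A}}^{\Theta}}$ must be counted on the low-rank side: this is precisely the setting flagged in Remark~\ref{rem:memory-scope} where the \emph{full} off-chip footprint (not just the variable part) is the right quantity, since discarding the $V_a$ term would make the threshold degenerate. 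The harmonic-mean inequality in the last step is the only non-tautological estimate and is entirely standard.
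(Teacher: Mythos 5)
Your proposal matches the paper's proof essentially line for line: both compare the dense per-projection cost $BM\,\mathcal{D}_{\mathcal{A}}^{\Theta}$ (equivalently $3BM\,\mathcal{D}_{\mathcal{A}}^{\Theta}$ summed over $q,k,v$, with the factor of $3$ cancelling) against the factored cost $r_{\Theta^*}(BM+\mathcal{D}_{\mathcal{A}}^{\Theta})$, solve the strict inequality for $r_{\Theta^*}$, and evaluate the ratio at $r_{\Theta^*}=1$ to read off the $\mathcal{O}(\min\{BM,\mathcal{D}_{\mathcal{A}}^{\Theta}\})$ bound. The only cosmetic difference is that you make the harmonic-mean estimate $\tfrac12\min\{a,b\}\le\tfrac{ab}{a+b}\le\min\{a,b\}$ explicit where the paper simply asserts the scaling, and you spell out more carefully why the $BM^2$ score term and the persistent $V_a$ factors are handled consistently with Remark~\ref{rem:memory-scope}; both are sound and in the spirit of the original.
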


\begin{proof}
From Proposition~\ref{prop:flash-memory}, the off-chip memory required by standard FlashAttention is $\mathcal{O}(B M \mathcal{D}_{\mathcal{A}}^{\Theta})$.

In contrast, Theorem~\ref{thm:flashsvd-mem} establishes that FlashSVDAttention requires
\[
\mathcal{M}_{\mathrm{flash\text{-}svd\text{-}attn\text{-}single}} = B M r_{\Theta^*} + r_{\Theta^*} \mathcal{D}_{\mathcal{A}}^{\Theta}
\quad \text{(for a single head)},
\]
which corresponds to the transient peak memory during computation, as clarified in Remark~\ref{rem:memory-scope}.

Memory savings occur when this quantity is smaller than the baseline:
\[
B M r_{\Theta^*} + r_{\Theta^*} \mathcal{D}_{\mathcal{A}}^{\Theta} < B M \mathcal{D}_{\mathcal{A}}^{\Theta}.
\]
Solving for $r_{\Theta^*}$ yields the threshold:
\[
r_{\Theta^*} < \frac{B M \mathcal{D}_{\mathcal{A}}^{\Theta}}{B M + \mathcal{D}_{\mathcal{A}}^{\Theta}}.
\]

In the extreme low-rank case $r_{\Theta^*}=1$, the compression ratio becomes
\[
\frac{B M \mathcal{D}_{\mathcal{A}}^{\Theta}}{B M + \mathcal{D}_{\mathcal{A}}^{\Theta}},
\]
which scales as $\mathcal{O}(\min\{B M,\;\mathcal{D}_{\mathcal{A}}^{\Theta}\})$, depending on which term dominates in the denominator.
\end{proof}

\begin{proposition}[Marginal Memory Reduction per Rank]\label{prop:rank-sensitivity}
Under the same setup as Theorem~\ref{thm:singlehead-lower}, decreasing the SVD rank from \(r_{\Theta^*}\) to \(r_{\Theta^*}-1\) reduces the off-chip memory by exactly
\[
\Delta\mathcal{M}_{\mathrm{flash\text{-}svd\text{-}attn\text{-}single}}
=
\mathcal{O}\bigl(BM + \mathcal{D}_{\mathcal{A}}^{\Theta}\bigr).
\]
In particular, each unit drop in rank saves \(BM+\mathcal{D}_{\mathcal{A}}^{\Theta}\) elements of HBM storage across the three projections.
\end{proposition}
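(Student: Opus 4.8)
The plan is to treat the exact transient off-chip memory formula established in the proof of Theorem~\ref{thm:singlehead-lower}, namely
\[
\mathcal{M}_{\mathrm{flash\text{-}svd\text{-}attn\text{-}single}}(r_{\Theta^*})
= 3\bigl(B M\, r_{\Theta^*} + r_{\Theta^*}\,\mathcal{D}_{\mathcal{A}}^{\Theta}\bigr),
\]
as an explicit affine function of the rank $r_{\Theta^*}$, accounting for all three of the $Q$, $K$, $V$ projections. Then the marginal reduction is simply the difference $\mathcal{M}(r_{\Theta^*}) - \mathcal{M}(r_{\Theta^*}-1)$, which I would compute directly.

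First I would invoke Theorem~\ref{thm:flashsvd-mem} together with the single-head specialization used inside the proof of Theorem~\ref{thm:singlehead-lower}: each of $P_q,P_k,P_v$ contributes $B M\, r_{\Theta^*}$ variable entries and each of $V_q,V_k,V_v$ contributes $r_{\Theta^*}\,\mathcal{D}_{\mathcal{A}}^{\Theta}$ entries, so the total is $3 r_{\Theta^*}(BM + \mathcal{D}_{\mathcal{A}}^{\Theta})$. Second, I would substitute $r_{\Theta^*}$ and $r_{\Theta^*}-1$ into this expression and subtract, yielding
\[
\Delta\mathcal{M}_{\mathrm{flash\text{-}svd\text{-}attn\text{-}single}}
= 3\bigl(BM + \mathcal{D}_{\mathcal{A}}^{\Theta}\bigr)\cdot\bigl(r_{\Theta^*} - (r_{\Theta^*}-1)\bigr)
= 3\bigl(BM + \mathcal{D}_{\mathcal{A}}^{\Theta}\bigr).
\]
Third, I would drop the constant factor of $3$ to obtain the stated $\mathcal{O}(BM + \mathcal{D}_{\mathcal{A}}^{\Theta})$ bound, and remark that the per-projection saving is exactly $BM + \mathcal{D}_{\mathcal{A}}^{\Theta}$ elements, matching the "across the three projections" phrasing in the statement.

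There is essentially no hard step here; the result is an immediate linearity computation. The only point requiring a sentence of care is making explicit that we are differencing the full transient memory (including the persistent $V_a$ terms), since Remark~\ref{rem:memory-scope} instructs us to retain persistent terms precisely when discussing rank selection and sensitivity — which is the present context. I would therefore state at the outset that, per Remark~\ref{rem:memory-scope}, both transient and persistent contributions are counted here, so that the $r_{\Theta^*}\,\mathcal{D}_{\mathcal{A}}^{\Theta}$ term legitimately enters the marginal difference; without it the bound would collapse to $\mathcal{O}(BM)$ and the claim would be weaker than asserted.
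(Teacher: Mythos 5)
Your proposal is correct and follows essentially the same route as the paper: both start from the affine-in-rank memory expression $\mathcal{M}(r) = \mathcal{O}(BMr + r\,\mathcal{D}_{\mathcal{A}}^{\Theta})$ (the paper cites Theorem~\ref{thm:flashsvd-mem} for it, including the persistent $V_a$ term) and take the finite difference at $r_{\Theta^*}$ and $r_{\Theta^*}-1$. Your added sentence clarifying, via Remark~\ref{rem:memory-scope}, that the persistent $r\,\mathcal{D}_{\mathcal{A}}^{\Theta}$ term must be retained here (else the bound would degrade to $\mathcal{O}(BM)$) is a useful explicitness that the paper's terse proof leaves implicit, but it is not a different argument.
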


\begin{proof}
By Theorem~\ref{thm:flashsvd-mem}, the total HBM cost for one head of rank \(r\) is
\[
\mathcal{M}(r)
=\mathcal{O}(
B\,M\,r \;+\; r\,\mathcal{D}_{\mathcal{A}}^{\Theta}).
\]
Hence
\[
\mathcal{M}(r_{\Theta^*}) - \mathcal{M}(r_{\Theta^*}-1)
=
\mathcal{O}\bigl(B\,M + \mathcal{D}_{\mathcal{A}}^{\Theta}\bigr),
\]
which establishes the claimed marginal saving.
\end{proof}




\noindent
We now begin our analysis of multi-head attention, which corresponds directly to the implementation in Algorithm~\ref{alg:combined}. The following theorem establishes the transient memory complexity under low-rank compression applied independently to each head.

\begin{theorem}[Compressed Multi-Head Memory Upper Bound]
\label{thm:multihead-bound}
Let a multi-head attention module be defined as in Definition~\ref{def:attn-activation-approx}, with \(H\) heads each operating on activation size \(B M \times d_{\Theta_{\mathcal{A}}, h}\), where \(d_{\Theta_{\mathcal{A}}, h} = \mathcal{D}_{\mathcal{A}}^{\Theta}/H\). If each head is compressed to rank \(r_{\Theta^*}\), then the total transient off-chip memory reduces from
\[
\mathcal{O}(B\,M\,\mathcal{D}_{\mathcal{A}}^{\Theta})
\quad\text{to}\quad
\mathcal{O}(H\,r_{\Theta^*}(B\,M + d_{\Theta_{\mathcal{A}}, h})).
\]
In particular, setting \(r_{\Theta^*}=1\) yields an \(H\)-fold memory reduction in the transient term.
\end{theorem}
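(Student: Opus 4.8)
The plan is to reduce the multi-head statement to $H$ independent instances of the single-head analysis already in hand, namely Theorem~\ref{thm:flashsvd-mem} and Theorem~\ref{thm:singlehead-lower}. First I would pin down the baseline: because the comparison is against a \emph{streaming} attention kernel, Proposition~\ref{prop:flash-memory} has already removed the $\mathcal{O}(HBM^2)$ score matrices from the off-chip peak, so the uncompressed multi-head transient footprint is exactly $\mathcal{O}(BM\mathcal{D}_{\mathcal{A}}^{\Theta})$, accounted for entirely by the $H$ per-head slices of $Q,K,V$, each of shape $BM\times d_{\Theta_{\mathcal{A}},h}$ with $d_{\Theta_{\mathcal{A}},h}=\mathcal{D}_{\mathcal{A}}^{\Theta}/H$.

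Next comes the key step. Applying truncated SVD independently to head $h$ (Definition~\ref{def:attn-activation-approx}) replaces $Q^{(h)},K^{(h)},V^{(h)}$ by factor pairs $\bigl(P_a^{(h)}\in\mathbb{R}^{BM\times r_{\Theta^*}},\,V_a^{(h)}\in\mathbb{R}^{r_{\Theta^*}\times d_{\Theta_{\mathcal{A}},h}}\bigr)$ for $a\in\{q,k,v\}$; this is precisely the configuration of Theorem~\ref{thm:flashsvd-mem} with the hidden dimension $\mathcal{D}_{\mathcal{A}}^{\Theta}$ replaced by the per-head dimension $d_{\Theta_{\mathcal{A}},h}$. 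Invoking that theorem, together with the projection-term bookkeeping of Theorem~\ref{thm:singlehead-lower}, gives a per-head transient cost of $\mathcal{O}\bigl(BM\,r_{\Theta^*} + r_{\Theta^*}d_{\Theta_{\mathcal{A}},h}\bigr)=\mathcal{O}\bigl(r_{\Theta^*}(BM+d_{\Theta_{\mathcal{A}},h})\bigr)$, the first summand being the activation factor and the second the projection factor. The step I expect to be the main obstacle is arguing that this head-wise accounting is actually legitimate: the FlashAttention tiling in Algorithm~\ref{alg:combined} reconstructs each $Q/K/V$ tile and each per-head score sub-block in SRAM and evicts it immediately, so no full $\mathcal{D}_{\mathcal{A}}^{\Theta}$-wide (or $M\times M$) tensor is ever resident off-chip; without this observation the head-wise sum would silently collapse back to the dense bound.

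Then I would aggregate. Since the heads operate on disjoint feature slices and are processed independently, the per-head costs add, so the total transient off-chip memory is $\sum_{h=1}^{H}\sum_{a\in\{q,k,v\}}\mathcal{O}\bigl(r_{\Theta^*}(BM+d_{\Theta_{\mathcal{A}},h})\bigr)=\mathcal{O}\bigl(H\,r_{\Theta^*}(BM+d_{\Theta_{\mathcal{A}},h})\bigr)$, which is the claimed upper bound, down from the $\mathcal{O}(BM\mathcal{D}_{\mathcal{A}}^{\Theta})$ baseline established above.

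Finally, the $r_{\Theta^*}=1$ corollary follows by direct substitution, which collapses each head to a single rank slice of cost $\mathcal{O}(BM+d_{\Theta_{\mathcal{A}},h})$; comparing the aggregate $\mathcal{O}\bigl(H(BM+d_{\Theta_{\mathcal{A}},h})\bigr)$ against the dense $\mathcal{O}(BM\mathcal{D}_{\mathcal{A}}^{\Theta})$ term by term, and using $\mathcal{D}_{\mathcal{A}}^{\Theta}=H\,d_{\Theta_{\mathcal{A}},h}$, yields the stated reduction of the transient term. As a consistency check I would verify the scaling against Proposition~\ref{prop:rank-sensitivity}: each unit of rank per head costs exactly $BM+d_{\Theta_{\mathcal{A}},h}$ off-chip words, so $H\,r_{\Theta^*}(BM+d_{\Theta_{\mathcal{A}},h})$ is tight rather than a loose over-estimate.
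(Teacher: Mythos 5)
Your proof is correct and uses essentially the same per-head counting argument as the paper: replace each head's $BM \times d_{\Theta_{\mathcal{A}},h}$ activations by rank-$r_{\Theta^*}$ factor pairs costing $r_{\Theta^*}(BM + d_{\Theta_{\mathcal{A}},h})$ entries, then sum over the $H$ heads. The only presentational difference is that you frame the per-head bound as an explicit invocation of Theorem~\ref{thm:flashsvd-mem} with $\mathcal{D}_{\mathcal{A}}^{\Theta}$ specialized to $d_{\Theta_{\mathcal{A}},h}$, and you spend a sentence justifying why the head-wise sum does not collapse to the dense bound (the FlashAttention tiling keeps each per-head tile on-chip only transiently); the paper leaves this implicit, so your version is if anything a touch more careful, but the substance is identical.
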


\begin{proof}
Each uncompressed attention head requires storing the activation tensor of shape \(B\,M\times d_{\Theta_{\mathcal{A}}, h}\), for a total of \(B\,M\,d_{\Theta_{\mathcal{A}}, h}\) entries. Across all \(H\) heads, this yields a total memory of \(\mathcal{O}(B\,M\,\mathcal{D}_{\mathcal{A}}^{\Theta})\).

Under rank-\(r_{\Theta^*}\) SVD compression, each head's activation is replaced by two factors of shape \(B\,M \times r_{\Theta^*}\) and \(d_{\Theta_{\mathcal{A}}, h} \times r_{\Theta^*}\), storing
\[
(B\,M + d_{\Theta_{\mathcal{A}}, h}) \cdot r_{\Theta^*}
\]
entries per head. Summing over all heads gives the total
\[
H\,r_{\Theta^*}(B\,M + d_{\Theta_{\mathcal{A}}, h}) = H\,B\,M\,r_{\Theta^*} + r_{\Theta^*}\,\mathcal{D}_{\mathcal{A}}^{\Theta},
\]
as claimed.

Finally, we note by Remark~\ref{rem:memory-scope} that this expression captures only the change-inducing (transient) memory terms. Persistent memory components such as the low-rank bases \(V_* \in \mathbb{R}^{r_{\Theta^*} \times \mathcal{D}_{\mathcal{A}}^{\Theta}}\) are not included here, but are considered in total memory budget discussions, e.g., for rank selection.
\end{proof}


\begin{proposition}[Marginal Memory Reduction per Rank in Multi‐Head Setting]
\label{prop:multihead-sensitivity}
Under the setup of Theorem~\ref{thm:multihead-bound}, decreasing the SVD rank of each head from \(r_{\Theta^*}\) to \(r_{\Theta^*}-1\) reduces off-chip memory by
\[
\Delta\mathcal{M}_{\mathrm{flash\text{-}svd\text{-}attn\text{-}multi}}
=\mathcal{O}\bigl(H\;(BM +  d_{\Theta_{\mathcal{A}}, h})\bigr),
\]
In other words, each unit decrease in rank saves \(H\,(BM +  d_{\Theta_{\mathcal{A}}, h})\) elements of DRAM storage across all heads.
\end{proposition}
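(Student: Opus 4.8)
The plan is to treat Proposition~\ref{prop:multihead-sensitivity} as an immediate finite-difference corollary of the closed-form memory count already established in Theorem~\ref{thm:multihead-bound}. That theorem expresses the transient off-chip footprint of the compressed multi-head module as an explicit function of the per-head rank; taking the discrete difference of this expression between rank $r_{\Theta^*}$ and rank $r_{\Theta^*}-1$ yields the marginal saving directly, with no further estimation needed.

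Concretely, I would first recall from Theorem~\ref{thm:multihead-bound} that, when every head is compressed to rank $r$, the transient memory is
\[
\mathcal{M}_{\mathrm{flash\text{-}svd\text{-}attn\text{-}multi}}(r) \;=\; H\,r\,(BM + d_{\Theta_{\mathcal{A}},h}) \;=\; H\,B\,M\,r + r\,\mathcal{D}_{\mathcal{A}}^{\Theta},
\]
using $H\,d_{\Theta_{\mathcal{A}},h} = \mathcal{D}_{\mathcal{A}}^{\Theta}$. This is affine in $r$ with zero intercept in the change-inducing terms, so subtracting gives $\mathcal{M}(r_{\Theta^*}) - \mathcal{M}(r_{\Theta^*}-1) = H(BM + d_{\Theta_{\mathcal{A}},h})$, which is exactly the claimed bound. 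I would also supply the combinatorial reading that makes the constant transparent: lowering a single head's rank by one deletes one column from each $BM \times r$ projection factor $P_a$ and one column from each $d_{\Theta_{\mathcal{A}},h} \times r$ output factor (for $a \in \{q,k,v\}$), i.e. $\mathcal{O}(BM + d_{\Theta_{\mathcal{A}},h})$ scalars per head once the $q,k,v$ triple is absorbed into the $\mathcal{O}(\cdot)$; summing the identical contribution over the $H$ heads reproduces the expression. Since Proposition~\ref{prop:rank-sensitivity} carries out precisely this argument in the single-head ($H=1$) case, the multi-head version is its straightforward $H$-fold replication.

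The main difficulty here is bookkeeping rather than analysis. I must invoke Remark~\ref{rem:memory-scope} to confine the accounting to the variable (transient) terms, so that the persistent output-side bases $V_a \in \mathbb{R}^{r_{\Theta^*} \times \mathcal{D}_{\mathcal{A}}^{\Theta}}$ — which also lose one row per head when the rank drops — are handled consistently with the rest of the paper and not double-counted or silently discarded. A second minor point to flag is the uniform-rank hypothesis inherited from Theorem~\ref{thm:multihead-bound}: if heads are allowed heterogeneous ranks, the same per-head saving of $BM + d_{\Theta_{\mathcal{A}},h}$ still applies, and the aggregate reduction is just the sum over those heads whose rank was decreased, recovering the stated $\mathcal{O}(H(BM + d_{\Theta_{\mathcal{A}},h}))$ when all $H$ heads are reduced together. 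Beyond this exact subtraction and the scoping remark, nothing further is required.
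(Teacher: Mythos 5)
Your proof is correct and follows the same route as the paper: invoke the closed-form cost $\mathcal{M}(r)=\mathcal{O}(H\,r\,(BM+d_{\Theta_{\mathcal{A}},h}))$ from Theorem~\ref{thm:multihead-bound}, observe it is affine in $r$, and take the finite difference between $r_{\Theta^*}$ and $r_{\Theta^*}-1$. Your added combinatorial reading and the scoping caveat via Remark~\ref{rem:memory-scope} are consistent supplements rather than a different argument.
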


\begin{proof}
By Theorem~\ref{thm:multihead-bound}, the HBM storage cost for multi-head of rank \(r\) is
\[
\mathcal{M}_{\mathrm{flash\text{-}svd\text{-}attn\text{-}multi}}(r)
=\mathcal O\bigl(H\,r\,(BM +  d_{\Theta_{\mathcal{A}}, h})\bigr).
\]
Hence
\begin{align*}
&\Delta\mathcal{M}_{\mathrm{flash\text{-}svd\text{-}attn\text{-}multi}}\\
&=\mathcal{M}_{\mathrm{flash\text{-}svd\text{-}attn\text{-}multi}}(r_{\Theta^*})-\mathcal{M}_{\mathrm{flash\text{-}svd\text{-}attn\text{-}multi}}(r_{\Theta^*}-1)\\
&=\mathcal{O}\bigl(H\;(BM +  d_{\Theta_{\mathcal{A}}, h})\bigr),
\end{align*}
as claimed.
\end{proof}

\begin{remark} Unless the attention kernel itself is modified to consume low-rank factors directly, compressing activations to rank \(r_{\Theta^*}\) does not reduce peak memory: one still needs \(\mathcal O(BH \,M d_{\Theta_{\mathcal{A}}, h})\) to materialize \(Q,K,V\) for a vanilla kernel.
\end{remark}

\begin{theorem}[Multi‐Head Compression Threshold]
\label{thm:multihead-lower}
For a multi-head attention module defined in Definition~\ref{def:attn-activation-approx}, low-rank compression yields memory savings if and only if
\[
r_{\Theta^*} < \frac{B M \mathcal{D}_{\mathcal{A}}^{\Theta}}{H B M + \mathcal{D}_{\mathcal{A}}^{\Theta}}.
\]
The maximum achievable compression ratio, attained in the rank-one case \(r_{\Theta^*}=1\), is \(\mathcal{O}(\min\{B M,\; \mathcal{D}_{\mathcal{A}}^{\Theta}/H\})\).
\end{theorem}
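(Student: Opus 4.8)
The plan is to mirror the argument used for the single-head threshold (Theorem~\ref{thm:singlehead-lower}), substituting the multi-head transient-memory bound of Theorem~\ref{thm:multihead-bound}. First I would recall the baseline: under FlashAttention the per-layer transient off-chip footprint of dense multi-head attention is $\mathcal{O}(B M \mathcal{D}_{\mathcal{A}}^{\Theta})$ (Proposition~\ref{prop:flash-memory}, Theorem~\ref{thm:multihead-bound}). Against this I would place the compressed cost from Theorem~\ref{thm:multihead-bound}, namely $\mathcal{M}_{\mathrm{flash\text{-}svd\text{-}attn\text{-}multi}} = H\,r_{\Theta^*}\,(B M + d_{\Theta_{\mathcal{A}},h})$ with $d_{\Theta_{\mathcal{A}},h} = \mathcal{D}_{\mathcal{A}}^{\Theta}/H$, where, following Remark~\ref{rem:memory-scope}, the judgment of whether compression pays off accounts for both the transient $P_a$-factors and the persistent $V_a$ bases.

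Second, I would impose the savings condition $H\,r_{\Theta^*}\,(B M + d_{\Theta_{\mathcal{A}},h}) < B M \mathcal{D}_{\mathcal{A}}^{\Theta}$, expand the left side using $H\,d_{\Theta_{\mathcal{A}},h} = \mathcal{D}_{\mathcal{A}}^{\Theta}$ to obtain $r_{\Theta^*}\,(H B M + \mathcal{D}_{\mathcal{A}}^{\Theta}) < B M \mathcal{D}_{\mathcal{A}}^{\Theta}$, and solve for $r_{\Theta^*}$, which yields exactly the stated threshold $r_{\Theta^*} < \frac{B M \mathcal{D}_{\mathcal{A}}^{\Theta}}{H B M + \mathcal{D}_{\mathcal{A}}^{\Theta}}$. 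Since all quantities are positive, the chain of inequalities is reversible, delivering the ``if and only if'' claim.

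Third, for the maximal compression ratio at $r_{\Theta^*}=1$, I would write the ratio as $\frac{B M \mathcal{D}_{\mathcal{A}}^{\Theta}}{H B M + \mathcal{D}_{\mathcal{A}}^{\Theta}} = \frac{ab}{a+b}$ with $a = B M$ and $b = \mathcal{D}_{\mathcal{A}}^{\Theta}/H$, and then apply the elementary sandwich $\tfrac12 \min\{a,b\} \le \frac{ab}{a+b} \le \min\{a,b\}$ (using $a+b \le 2\max\{a,b\}$ for the lower bound and $a+b \ge \max\{a,b\}$ for the upper bound) to conclude that the ratio is $\Theta(\min\{B M,\; \mathcal{D}_{\mathcal{A}}^{\Theta}/H\})$.

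The only part that needs care, rather than a genuine obstacle, is the bookkeeping of which memory terms enter the comparison: I must be consistent with the single-head proof and with Remark~\ref{rem:memory-scope} in counting the persistent $V_a$ factors when assessing net savings (otherwise the ``$+\,\mathcal{D}_{\mathcal{A}}^{\Theta}$'' in the denominator would not appear), and I must compare against the FlashAttention footprint $\mathcal{O}(B M \mathcal{D}_{\mathcal{A}}^{\Theta})$ rather than the $B M^2$-laden dense cost of Proposition~\ref{prop:multihead-memory}. Everything else reduces to a one-line algebraic manipulation and the standard harmonic-type estimate above.
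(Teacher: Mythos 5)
Your proof follows the same route as the paper's: compare the rank-$r_{\Theta^*}$ bound $H r_{\Theta^*}(BM + d_{\Theta_{\mathcal A},h}) = HBMr_{\Theta^*} + r_{\Theta^*}\mathcal D_{\mathcal A}^\Theta$ from Theorem~\ref{thm:multihead-bound} against the FlashAttention baseline $\mathcal O(BM\mathcal D_{\mathcal A}^\Theta)$, solve the resulting linear inequality for $r_{\Theta^*}$, and read off the rank-one ratio. Your harmonic-mean sandwich $\tfrac12\min\{a,b\}\le \tfrac{ab}{a+b}\le\min\{a,b\}$ gives a slightly more explicit justification for the $\mathcal O(\min\{BM,\mathcal D_{\mathcal A}^\Theta/H\})$ claim than the paper's ``depending on which term dominates,'' and your bookkeeping remark about including the $V_a$ bases so that the $+\mathcal D_{\mathcal A}^\Theta$ term appears in the denominator is a correct and useful clarification of a point the paper glosses over, but the substance is identical.
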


\begin{proof}
From Proposition~\ref{prop:flash-memory}, the off-chip memory required by standard (uncompressed) FlashAttention across all heads is
\[
\mathcal{O}(B M \mathcal{D}_{\mathcal{A}}^{\Theta}).
\]

In contrast, Theorem~\ref{thm:multihead-bound} shows that under rank-\(r_{\Theta^*}\) compression, the transient memory for all heads becomes
\[
\mathcal{M}_{\mathrm{flash\text{-}svd\text{-}attn\text{-}multi}} 
= H B M r_{\Theta^*} + r_{\Theta^*} \mathcal{D}_{\mathcal{A}}^{\Theta},
\]
which, according to Remark~\ref{rem:memory-scope}, captures only the change-inducing (transient) memory terms.

Memory savings occur when this compressed cost is less than the baseline:
\[
H B M r_{\Theta^*} + r_{\Theta^*} \mathcal{D}_{\mathcal{A}}^{\Theta}
\;<\;
B M \mathcal{D}_{\mathcal{A}}^{\Theta}.
\]
Solving for \(r_{\Theta^*}\) yields
\[
r_{\Theta^*} < \frac{B M \mathcal{D}_{\mathcal{A}}^{\Theta}}{H B M + \mathcal{D}_{\mathcal{A}}^{\Theta}}.
\]

In the extreme low-rank case \(r_{\Theta^*} = 1\), the compression ratio becomes
\[
\frac{B M \mathcal{D}_{\mathcal{A}}^{\Theta}}{H B M + \mathcal{D}_{\mathcal{A}}^{\Theta}},
\]
which scales as \(\mathcal{O}(\min\{B M,\;\mathcal{D}_{\mathcal{A}}^{\Theta}/H\})\), depending on whether the sequence length or head dimension dominates the denominator.
\end{proof}

\begin{remark}[Comparison of Single-Head and Multi-Head Thresholds]
\label{rem:compression-threshold-compare}
At first glance, Theorem~\ref{thm:multihead-lower} for multi-head attention appears to impose a stricter compression condition than its single-head counterpart (Theorem~\ref{thm:singlehead-lower}), due to the additional factor of \(H\) in the denominator:
\[
r_{\Theta^*}^{\text{(multi)}} < \frac{B M \mathcal{D}_{\mathcal{A}}^{\Theta}}{H B M + \mathcal{D}_{\mathcal{A}}^{\Theta}}.
\]
However, this threshold is expressed in terms of the per-head rank \(r_{\Theta^*}^{\text{(multi)}}\), while the single-head condition uses the full attention rank \(r_{\Theta^*}^{\text{(single)}}\).

In the single-head case, \(r_{\Theta^*}^{\text{(single)}}\) applies to the full attention dimension \(\mathcal{D}_{\mathcal{A}}^{\Theta}\), whereas in the multi-head setting, each head operates on a reduced dimension \(d_{\Theta_{\mathcal{A}}, h} = \mathcal{D}_{\mathcal{A}}^{\Theta}/H\), and thus its corresponding rank \(r_{\Theta^*}^{\text{(multi)}}\) is typically much smaller. In fact, they often satisfy the proportional relationship
\[
r_{\Theta^*}^{\text{(single)}} = \mathcal{O}(H \cdot r_{\Theta^*}^{\text{(multi)}}).
\]
Substituting this relation into the multi-head threshold recovers the single-head condition:
\[
\frac{r_{\Theta^*}^{\text{(single)}}}{H} < \frac{B M \mathcal{D}_{\mathcal{A}}^{\Theta}}{H B M + \mathcal{D}_{\mathcal{A}}^{\Theta}}
\quad \Longleftrightarrow \quad
r_{\Theta^*}^{\text{(single)}} < \frac{B M \mathcal{D}_{\mathcal{A}}^{\Theta}}{B M + d_{\Theta_\mathcal{A}, h}},
\]
This recovers the single-head threshold exactly (up to constant factors in the \(\mathcal{O}(\cdot)\) notation), confirming that the multi-head threshold is in fact more relaxed, since $d_{\Theta_{\mathcal{A}}, h} < \mathcal{D}_{\mathcal{A}}^{\Theta}$, making it easier to achieve tangible memory savings through compression.

In practice, multi-head compression often yields superior empirical performance, as it enables head-wise rank customization and offers finer-grained control over compression across different attention subspaces.
\end{remark}

\noindent
We next consider grouped-head compression, where multiple heads are compressed jointly. This PaLU-inspired scheme~\cite{chang2025palu} balances memory savings and model capacity by interpolating between single-head and per-head compression.

\begin{theorem}[Grouped-Head Compression Memory Upper Bound]
\label{thm:grouped-memory}
Consider a multi-head attention module with $H$ heads and total attention dimension $\mathcal{D}_{\mathcal{A}}^{\Theta}$. Grouping the heads into $G$ groups (each of size $H/G$) and compressing each group’s $(B M)\times(\mathcal{D}_{\mathcal{A}}^{\Theta}/G)$ activation to rank $r_{\Theta^*}$ yields a total transient off-chip memory of
\[
\begin{aligned}
\mathcal{M}_{\mathrm{flash\text{-}svd\text{-}attn\text{-}grouped}} 
&= \mathcal{O}\left( G\,r_{\Theta^*} \left( B M + \tfrac{\mathcal{D}_{\mathcal{A}}^{\Theta}}{G} \right) \right) \\
&= \mathcal{O}\left( B G M\,r_{\Theta^*} + r_{\Theta^*} \mathcal{D}_{\mathcal{A}}^{\Theta} \right).
\end{aligned}
\]

This interpolates between the single-head bound ($G=1$) and the fully factorized per-head bound ($G=H$), and captures only the change-inducing (transient) memory terms per Remark~\ref{rem:memory-scope}.
\end{theorem}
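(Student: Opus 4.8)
The plan is to repeat the head-wise accounting of Theorem~\ref{thm:multihead-bound} one level of granularity coarser, replacing ``per head'' bookkeeping with ``per group'' bookkeeping. First I would fix the grouped layout: partition the $H$ heads into $G$ equal groups, so that each group concatenates $H/G$ heads into a single activation block of shape $(BM)\times(\mathcal{D}_{\mathcal{A}}^{\Theta}/G)$ for each of $Q$, $K$, $V$, in the sense of Definition~\ref{def:attn-activation-approx} but at the group level. Since Proposition~\ref{prop:flash-memory} already establishes that the FlashAttention streaming schedule keeps the $\mathcal{O}(BM^2)$ score matrices out of HBM, the only off-chip quantities that scale with $B$ and $M$ are the three projection-related buffers per group; I will invoke Remark~\ref{rem:memory-scope} to restrict attention to these transient terms and to drop the persistent $V_*$ factors from the asymptotic statement.

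Next I would carry out the elementary count. Uncompressed, each group stores $BM\cdot(\mathcal{D}_{\mathcal{A}}^{\Theta}/G)$ scalars, so summing over the $G$ groups recovers the dense baseline $\mathcal{O}(BM\mathcal{D}_{\mathcal{A}}^{\Theta})$, matching Theorem~\ref{thm:multihead-bound}. Under a rank-$r_{\Theta^*}$ SVD of each group's activation, the block $(BM)\times(\mathcal{D}_{\mathcal{A}}^{\Theta}/G)$ is replaced by a factor pair of shapes $(BM)\times r_{\Theta^*}$ and $(\mathcal{D}_{\mathcal{A}}^{\Theta}/G)\times r_{\Theta^*}$, i.e.\ $r_{\Theta^*}\bigl(BM+\mathcal{D}_{\mathcal{A}}^{\Theta}/G\bigr)$ scalars per group and per projection. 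Summing over the $G$ groups gives
\[
G\,r_{\Theta^*}\!\left(BM+\frac{\mathcal{D}_{\mathcal{A}}^{\Theta}}{G}\right)=BGMr_{\Theta^*}+r_{\Theta^*}\mathcal{D}_{\mathcal{A}}^{\Theta},
\]
and absorbing the constant factor from the three projections $Q,K,V$ into the $\mathcal{O}(\cdot)$ yields the claimed bound.

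Finally I would verify the two endpoint specializations to confirm the interpolation claim: setting $G=1$ collapses the sum to $r_{\Theta^*}(BM+\mathcal{D}_{\mathcal{A}}^{\Theta})$, reproducing the single-head transient bound of Theorem~\ref{thm:flashsvd-mem}; setting $G=H$ gives $r_{\Theta^*}(HBM+\mathcal{D}_{\mathcal{A}}^{\Theta})$ with $\mathcal{D}_{\mathcal{A}}^{\Theta}/H=d_{\Theta_{\mathcal{A}},h}$, reproducing exactly the fully factorized bound of Theorem~\ref{thm:multihead-bound}. I would close by noting, again via Remark~\ref{rem:memory-scope}, that the persistent low-rank bases $V_*\in\mathbb{R}^{r_{\Theta^*}\times\mathcal{D}_{\mathcal{A}}^{\Theta}}$ are excluded from this asymptotic statement but re-enter for rank-selection guidance.

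There is essentially no analytic difficulty here: the estimate reduces to a one-line arithmetic identity. The only step demanding care is the modeling convention, namely making precise that grouping heads for the low-rank factorization is compatible with the FlashAttention tiling -- so that the per-head score matrices are still streamed and never co-resident off-chip, per Proposition~\ref{prop:flash-memory} -- and that the rank budget $r_{\Theta^*}$ is interpreted \emph{per group} rather than per head or globally. Getting these conventions stated cleanly is precisely what makes the interpolation between $G=1$ and $G=H$ come out exactly, rather than only up to constant factors.
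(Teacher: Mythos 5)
Your argument is correct and follows the paper's proof essentially verbatim: count $r_{\Theta^*}\bigl(BM+\mathcal{D}_{\mathcal{A}}^{\Theta}/G\bigr)$ entries per group, multiply by $G$, and simplify. The extra touches you add -- explicitly invoking Proposition~\ref{prop:flash-memory} to justify that score matrices never hit HBM, citing Remark~\ref{rem:memory-scope} for the transient-only scope, and checking the $G=1$ and $G=H$ endpoints against Theorems~\ref{thm:singlehead-lower} and~\ref{thm:multihead-bound} -- are sound and strengthen rather than change the paper's argument.
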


\begin{proof}
Each group activation has shape $(B M)\times(\mathcal{D}_{\mathcal{A}}^{\Theta}/G)$. A rank-$r_{\Theta^*}$ SVD approximation stores
\[
  (B M + \tfrac{\mathcal{D}_{\mathcal{A}}^{\Theta}}{G}) \cdot r_{\Theta^*}
\]
entries per group. Multiplying by $G$ groups gives the total
\[
  G\,r_{\Theta^*}(B M + \tfrac{\mathcal{D}_{\mathcal{A}}^{\Theta}}{G}) = B G M\,r_{\Theta^*} + r_{\Theta^*}\,\mathcal{D}_{\mathcal{A}}^{\Theta},
\]
as claimed.
\end{proof}

\begin{theorem}[Marginal Memory Gain from Rank Reduction]
\label{thm:grouped-rank-gain}
Reducing the rank of each grouped head from $r_{\Theta^*}$ to $r_{\Theta^*} - 1$ decreases the total off-chip memory usage by
\[
\begin{aligned}
\Delta\mathcal{M}_{\mathrm{flash\text{-}svd\text{-}attn\text{-}grouped}} 
&= \mathcal{O}\left( G \left( B M + \frac{\mathcal{D}_{\mathcal{A}}^{\Theta}}{G} \right) \right) \\
&= \mathcal{O}\left( B G M + \mathcal{D}_{\mathcal{A}}^{\Theta} \right).
\end{aligned}
\]

This quantifies the sensitivity of grouped compression to rank selection and highlights its linear dependency on batch size and number of groups.
\end{theorem}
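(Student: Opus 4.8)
The plan is to treat the grouped-head memory bound of Theorem~\ref{thm:grouped-memory} as a function of the per-group truncation rank and read off its first finite difference. Write $\mathcal{M}_{\mathrm{flash\text{-}svd\text{-}attn\text{-}grouped}}(r)$ for the transient off-chip cost when each of the $G$ groups is compressed to rank $r$; by Theorem~\ref{thm:grouped-memory} this equals $G\,r\,(BM + \mathcal{D}_{\mathcal{A}}^{\Theta}/G) = BGMr + r\,\mathcal{D}_{\mathcal{A}}^{\Theta}$, which is linear in $r$. First I would substitute $r = r_{\Theta^*}$ and $r = r_{\Theta^*}-1$ and subtract, so that all rank-independent bookkeeping cancels and only the coefficient of $r$ survives. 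That coefficient is exactly $G\,(BM + \mathcal{D}_{\mathcal{A}}^{\Theta}/G) = BGM + \mathcal{D}_{\mathcal{A}}^{\Theta}$, which yields both displayed forms in the statement; wrapping in $\mathcal{O}(\cdot)$ absorbs the small constant number of factor/bias tensors stored per group. Since the two expressions are equal rather than merely comparable, the theorem follows immediately.

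The only point that needs care — and it is a modelling point, not a computational one — is the scope of what "off-chip memory" counts, as fixed by Remark~\ref{rem:memory-scope}. I would state explicitly that the saving is measured on the \emph{variable} portion of HBM: the low-rank left factors of shape $(BM)\times r$ per group together with the right bases of shape $r\times(\mathcal{D}_{\mathcal{A}}^{\Theta}/G)$ per group, both of which scale linearly in $r$. Persistent, rank-independent terms (such as the final output activation) contribute nothing to the difference, so whether or not we include them is immaterial here. I would also emphasize that the statement decrements the rank of \emph{every} group simultaneously — decrementing a single group would instead save only $\mathcal{O}(BM + \mathcal{D}_{\mathcal{A}}^{\Theta}/G)$ — so the factor $G$ in the bound arises by summing the per-group saving over all $G$ groups, exactly paralleling the multi-head argument of Proposition~\ref{prop:multihead-sensitivity} and specializing to $G=1$ (Theorem~\ref{thm:singlehead-lower}'s regime) and $G=H$ (Proposition~\ref{prop:multihead-sensitivity}) as sanity checks.

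The main obstacle, such as it is, is purely expository: making the linear-in-$r$ structure of Theorem~\ref{thm:grouped-memory}'s bound explicit enough that the subtraction is unambiguous, and being precise that the claimed identity $G\,(BM+\mathcal{D}_{\mathcal{A}}^{\Theta}/G)=BGM+\mathcal{D}_{\mathcal{A}}^{\Theta}$ holds exactly and not merely asymptotically. There is no genuine difficulty; once the accounting conventions of Remark~\ref{rem:memory-scope} are pinned down, the proof is a one-line finite difference.
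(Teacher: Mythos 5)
Your proposal matches the paper's proof exactly: both treat the grouped bound of Theorem~\ref{thm:grouped-memory} as linear in $r$, form the finite difference $\mathcal{M}(r_{\Theta^*})-\mathcal{M}(r_{\Theta^*}-1)$, and simplify $G(BM+\mathcal{D}_{\mathcal{A}}^{\Theta}/G)=BGM+\mathcal{D}_{\mathcal{A}}^{\Theta}$. Your added remarks on scope (Remark~\ref{rem:memory-scope}), on decrementing all $G$ groups simultaneously, and on the $G=1$/$G=H$ sanity checks are helpful context but do not change the argument.
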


\begin{proof}
From Theorem~\ref{thm:grouped-memory}, grouped compression with rank $r_{\Theta^*}$ uses memory
\[
  G\,r_{\Theta^*}(B M + \tfrac{\mathcal{D}_{\mathcal{A}}^{\Theta}}{G}).
\]
Replacing $r_{\Theta^*}$ with $r_{\Theta^*}-1$ results in
\[
  G\,(r_{\Theta^*}-1)(B M + \tfrac{\mathcal{D}_{\mathcal{A}}^{\Theta}}{G}).
\]
The difference between the two is
\[
  G(B M + \tfrac{\mathcal{D}_{\mathcal{A}}^{\Theta}}{G}) = B G M + \mathcal{D}_{\mathcal{A}}^{\Theta},
\]
which establishes the result.
\end{proof}



\begin{proposition}[Grouped‐Head Compression Trade‐off]
\label{prop:grouped-tradeoff}
Grouping \(H\) heads into \(G\) groups (each of size \(H/G\)) and applying rank-\(r_{\Theta^*}\) SVD per group on activations of size \((B M)\times(\mathcal{D}_{\mathcal{A}}^{\Theta}/G)\) reduces off-chip storage from \(\mathcal{O}(B H M  d_{\Theta_{\mathcal{A}}, h})\) to
\[
  \mathcal{O}\bigl(G\,r_{\Theta^*}\,(B M + \tfrac{\mathcal{D}_{\mathcal{A}}^{\Theta}}{G})\bigr)
  =
  \mathcal{O}\bigl(r_{\Theta^*}\,\mathcal{D}_{\mathcal{A}}^{\Theta} + B G r_{\Theta^*} M\bigr).
\]
This interpolates between the single-head bound (\(G=1\)) and the full per-head bound (\(G=H\)).  Moreover, decreasing each group's rank from \(r_{\Theta^*}\) to \(r_{\Theta^*}-1\) yields an additional DRAM saving of
\[
  \Delta\mathcal{M}_{\mathrm{group\text{-}attn}}
  = G\,(B M + \tfrac{\mathcal{D}_{\mathcal{A}}^{\Theta}}{G})
  = B G M + \mathcal{D}_{\mathcal{A}}^{\Theta}.
\]
\end{proposition}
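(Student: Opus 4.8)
The plan is to obtain this proposition as a direct corollary of the two grouped-head results already in hand, Theorem~\ref{thm:grouped-memory} and Theorem~\ref{thm:grouped-rank-gain}, supplemented by an explicit check of the two limiting regimes $G=1$ and $G=H$. First I would pin down the baseline: since the $H$ heads uniformly partition the attention dimension, $H\,d_{\Theta_{\mathcal{A}},h}=\mathcal{D}_{\mathcal{A}}^{\Theta}$, and materializing the uncompressed per-head $Q,K,V$ activations costs $\sum_{h=1}^{H} BM\,d_{\Theta_{\mathcal{A}},h}=BM\,\mathcal{D}_{\mathcal{A}}^{\Theta}=\mathcal{O}(BHM\,d_{\Theta_{\mathcal{A}},h})$ scalars (the constant $3$ from the three projections being absorbed into $\mathcal{O}(\cdot)$), which is the ``from'' term of the claim.

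Next I would invoke Theorem~\ref{thm:grouped-memory} verbatim: each of the $G$ group activations has shape $(BM)\times(\mathcal{D}_{\mathcal{A}}^{\Theta}/G)$, so a rank-$r_{\Theta^*}$ factorization stores $(BM+\mathcal{D}_{\mathcal{A}}^{\Theta}/G)\,r_{\Theta^*}$ entries per group and hence $G\,r_{\Theta^*}(BM+\mathcal{D}_{\mathcal{A}}^{\Theta}/G)=BGM\,r_{\Theta^*}+r_{\Theta^*}\mathcal{D}_{\mathcal{A}}^{\Theta}$ in aggregate, which is exactly the ``to'' term. The interpolation claim is then a substitution: at $G=1$ the bound collapses to $r_{\Theta^*}(BM+\mathcal{D}_{\mathcal{A}}^{\Theta})$, matching the single-head transient bound of Theorem~\ref{thm:flashsvd-mem}/Theorem~\ref{thm:singlehead-lower}; at $G=H$ it becomes $H\,r_{\Theta^*}(BM+\mathcal{D}_{\mathcal{A}}^{\Theta}/H)=H\,r_{\Theta^*}(BM+d_{\Theta_{\mathcal{A}},h})$, matching Theorem~\ref{thm:multihead-bound}. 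Since $BGM\,r_{\Theta^*}+r_{\Theta^*}\mathcal{D}_{\mathcal{A}}^{\Theta}$ is affine and strictly increasing in $G$ (slope $BM\,r_{\Theta^*}>0$), the cost moves monotonically between these two endpoints as $G$ ranges over the divisors of $H$, which is the precise sense of ``interpolates.''

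For the marginal-saving clause I would apply Theorem~\ref{thm:grouped-rank-gain}, or equivalently observe that $\mathcal{M}_{\mathrm{group\text{-}attn}}(r)=G\,r\,(BM+\mathcal{D}_{\mathcal{A}}^{\Theta}/G)$ is affine in $r$ with slope $G(BM+\mathcal{D}_{\mathcal{A}}^{\Theta}/G)$, so that $\mathcal{M}_{\mathrm{group\text{-}attn}}(r_{\Theta^*})-\mathcal{M}_{\mathrm{group\text{-}attn}}(r_{\Theta^*}-1)=G(BM+\mathcal{D}_{\mathcal{A}}^{\Theta}/G)=BGM+\mathcal{D}_{\mathcal{A}}^{\Theta}$, an identity that is exact rather than merely asymptotic. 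The proposition carries no real analytic difficulty; the only thing that needs care is keeping the $\mathcal{O}(\cdot)$ constants (especially the factor of $3$ for $Q,K,V$ and any per-tile streaming overhead hidden by the FlashAttention argument underlying Theorem~\ref{thm:grouped-memory}) consistent across the endpoint substitutions, so that the $G=1$ and $G=H$ specializations reproduce the earlier single- and multi-head statements on the nose rather than only up to an unnamed constant.
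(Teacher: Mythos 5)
Your proposal is correct and takes essentially the same accounting argument as the paper: count $(BM + \mathcal{D}_{\mathcal{A}}^{\Theta}/G)\,r_{\Theta^*}$ entries per group, multiply by $G$, and difference in $r$. The paper derives this directly from Theorem~\ref{thm:flashsvd-mem} inside the proof rather than citing Theorems~\ref{thm:grouped-memory} and~\ref{thm:grouped-rank-gain} as you do, and it does not spell out the $G=1$/$G=H$ endpoint checks or the monotonicity in $G$, but those are minor presentational differences with no change in mathematical content.
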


\begin{proof}
Each of the \(G\) group activations has nominal size \((B M)\times(\mathcal{D}_{\mathcal{A}}^{\Theta}/G)\).  By Theorem~\ref{thm:flashsvd-mem}, compressing one group to rank \(r_{\Theta^*}\) requires
\(
  (B M + \tfrac{\mathcal{D}_{\mathcal{A}}^{\Theta}}{G})\,r_{\Theta^*}
\)
elements of DRAM, and there are \(G\) independent groups.  Hence the total storage is
\[
  G\,(B M + \tfrac{\mathcal{D}_{\mathcal{A}}^{\Theta}}{G})\,r_{\Theta^*}
  = G\,r_{\Theta^*}\,(B M + \tfrac{\mathcal{D}_{\mathcal{A}}^{\Theta}}{G}).
\]
Subtracting the cost at rank \(r_{\Theta^*}-1\) from that at \(r_{\Theta^*}\) gives \(
  G\,(B M + \tfrac{\mathcal{D}_{\mathcal{A}}^{\Theta}}{G}),
\)
establishing the marginal saving in rank reduction.
\end{proof}

\vspace{0.2cm}

\paragraph{Feed‐Forward Network}  
Each Transformer block contains a two‐layer, position‐wise feed‐forward network (FFN) with weight matrices
$
W_{i}\;\in\;\mathbb{R}^{\mathcal D_{\mathcal{A}}^\Theta\times \mathcal D_{\mathcal{F}}^{\Theta}}
\quad\text{and}\quad
W_{o}\;\in\;\mathbb{R}^{\mathcal D_{\mathcal{F}}^{\Theta}\times \mathcal D_{\mathcal{A}}^\Theta}.
$
Applied to an input \(X\in\mathbb{R}^{B\times M\times \mathcal D_{\mathcal{A}}^\Theta}\), the intermediate activation
\(\;Y = XW_{i}\in\mathbb{R}^{(B\,M)\times \mathcal D_{\mathcal{F}}^{\Theta}}\)
often dominates the per‐layer DRAM footprint whenever \(\mathcal D_{\mathcal{F}}^{\Theta}\gg \mathcal D_{\mathcal{A}}^\Theta\)  (e.g., \( \mathcal{D}_{\mathcal{F}}^{\Theta} = 4\, \mathcal{D}_{\mathcal{A}}^{\Theta} \) \cite{devlin2019bert,liu2019roberta,touvron2023llama, zhang2022opt}).  In what follows, we derive matching upper and lower bounds on the off‐chip memory cost of SVD‐based compression applied to \(Y\).  
\begin{definition}[FFN Activation Approximation]
\label{def:ffn-svd}
Let \(X\in\mathbb{R}^{B\times M\times \mathcal{D}_{\mathcal{A}}^{\Theta}}\) be the input to a two‐layer FFN with weight matrices
$
W_{i}\in\mathbb{R}^{\mathcal{D}_{\mathcal{A}}^{\Theta}\times D_{\mathcal{F}}^{\Theta}}
\quad\text{and}\quad
W_{o}\in\mathbb{R}^{D_{\mathcal{F}}^{\Theta}\times \mathcal{D}_{\mathcal{A}}^{\Theta}},
$
and let \(\phi(\cdot)\) be a pointwise nonlinearity.  Define the intermediate activation
$
Y \;=\; XW_{i}
\;\in\;\mathbb{R}^{(B\,M)\times D_{\mathcal{F}}^{\Theta}},
$
so that the FFN output is \(Z = \phi(Y)\,W_{o}\).  We say \(Y\) admits a rank-\(r\) SVD approximation if
\begin{equation}
\begin{aligned}
Y^{[r_{\Theta}^*]} &= U_Y\,\Sigma_Y\,V_Y^\top, \\[4pt]
U_Y &\in \mathbb{R}^{(B M)\times r_{\Theta}^*},\quad
\Sigma_Y \in \mathbb{R}^{r_{\Theta}^*\times r_{\Theta}^*},\quad
V_Y    \in \mathbb{R}^{D_{\mathcal{F}}^{\Theta}\times r_{\Theta}^*}.
\end{aligned}
\end{equation}

\end{definition}

\begin{theorem}[Vanilla SVD FFN Compression Yields No Memory Savings]
\label{thm:ffn-upper}
Let the feedforward activation matrix \( Y \in \mathbb{R}^{(B\,M)\times D_{\mathcal{F}}^{\Theta}} \) be defined as in Definition~\ref{def:ffn-svd}, and suppose a standard (non-streaming) rank-\(r_{\Theta^*}\) SVD is applied to approximate \(Y \approx U V^\top\), where \(U \in \mathbb{R}^{(B\,M)\times r_{\Theta^*}}\), \(V \in \mathbb{R}^{D_{\mathcal{F}}^{\Theta} \times r_{\Theta^*}}\). Then, the off-chip memory required to store the intermediate activations remains
\[
\mathcal{M}_{\mathrm{naive\text{-}svd\text{-}ffn}} = \mathcal{O}(B\,M\,D_{\mathcal{F}}^{\Theta}),
\]
matching the dense baseline \(\mathcal{M}_{\mathrm{dense\text{-}ffn}}\), and thus yielding no memory savings.
\end{theorem}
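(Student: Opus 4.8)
The plan is to show that the pointwise nonlinearity $\phi$ destroys the low-rank structure furnished by the factorization $Y\approx UV^\top$, so that the second GEMM in the FFN is forced to consume a full-size dense operand. First I would write out the non-streaming pipeline explicitly: after forming $Y^{[r_{\Theta^*}]}=UV^\top$ (equivalently $Y=(XU_i)V_i$ when $W_i=U_iV_i$ is itself factorized), the block must evaluate $Z=\phi(Y)\,W_o$. Since $\phi$ acts coordinate-wise, computing this product with standard dense kernels requires $\phi(Y)\in\mathbb{R}^{(BM)\times D_{\mathcal{F}}^{\Theta}}$ as an explicit operand — the cheap factored object $P=XU_i$ of shape $(BM)\times r_{\Theta^*}$ only helps until one multiplies by $V_i$, at which point the full $(BM)\times D_{\mathcal{F}}^{\Theta}$ tensor reappears and must be materialized before $\phi$ can be applied and the next (possibly low-rank) projection taken.

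Second, I would make precise that $\phi(Y)$ is genuinely incompressible. For the activations used in practice ($\mathrm{ReLU}$, $\mathrm{GELU}$, etc.), applying $\phi$ entrywise to a rank-$r_{\Theta^*}$ matrix generically produces a matrix of rank $\min\{BM,D_{\mathcal{F}}^{\Theta}\}\gg r_{\Theta^*}$: the Hadamard-type action of a nonlinear scalar map does not preserve bounded rank. Hence any representation of $\phi(Y)$ — dense, or an attempted low-rank pair — occupies $\Theta(BM\,D_{\mathcal{F}}^{\Theta})$ scalars, giving $\mathcal{M}_{\mathrm{naive\text{-}svd\text{-}ffn}}=\Omega(BM\,D_{\mathcal{F}}^{\Theta})$. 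The matching upper bound is immediate: the dense pipeline stores $\phi(Y)$ plus $\mathcal{O}(BM\,\mathcal{D}_{\mathcal{A}}^{\Theta})$ for inputs/outputs and $\mathcal{O}(r_{\Theta^*}(\mathcal{D}_{\mathcal{A}}^{\Theta}+D_{\mathcal{F}}^{\Theta}))$ for the factors, all dominated by $BM\,D_{\mathcal{F}}^{\Theta}$ in the regime $r_{\Theta^*}\ll\mathcal{D}_{\mathcal{A}}^{\Theta},D_{\mathcal{F}}^{\Theta}\le BM$. Therefore $\mathcal{M}_{\mathrm{naive\text{-}svd\text{-}ffn}}=\Theta(BM\,D_{\mathcal{F}}^{\Theta})=\mathcal{M}_{\mathrm{dense\text{-}ffn}}$.

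The main obstacle is the genericity claim in the second step — asserting cleanly that $\phi$ applied entrywise to a low-rank matrix is full rank, while sidestepping pathological cases (a linear or affine $\phi$, or an adversarially chosen $Y$ for which $\phi(Y)$ happens to be low rank). I would handle this by restricting to the nonlinear activations actually deployed in transformers and, if a fully rigorous statement is required, invoking the standard fact that for such $\phi$ the set of inputs $Y$ with $\mathrm{rank}\,\phi(Y)<\min\{BM,D_{\mathcal{F}}^{\Theta}\}$ has measure zero; alternatively, one can weaken the theorem to the purely operational statement that the standard pipeline \emph{materializes} $\phi(Y)$, which already suffices to motivate the streaming kernels of Theorems~\ref{thm:flashsvdffn-v1-memory} and~\ref{thm:flashsvdffn-v2-memory} that circumvent this barrier by never writing $\phi(Y)$ to HBM.
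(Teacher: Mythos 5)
Your proposal is correct, but it takes a genuinely different route from the paper's. The paper's proof is purely operational: to compute a standard rank-$r_{\Theta^*}$ SVD of $Y$, one must first have $Y\in\mathbb{R}^{(BM)\times D_{\mathcal F}^{\Theta}}$ materialized in HBM, so the peak is already $\Omega(BM\,D_{\mathcal F}^{\Theta})$ before any factors are formed, regardless of what happens downstream. Your argument instead isolates the nonlinearity as the fundamental obstruction: even when $Y$ is available for free in factored form $P V_i$ (which is precisely the regime when $W_i=U_iV_i$ is pre-compressed — arguably the more relevant scenario for this paper, and one the paper's proof does not directly address), the pointwise $\phi$ forces a dense $(BM)\times D_{\mathcal F}^{\Theta}$ operand before the second GEMM, and generically no low-rank surrogate of $\phi(Y)$ exists. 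This is a stronger and more principled claim: it shows the $\Omega(BM\,D_{\mathcal F}^{\Theta})$ cost persists even with a hypothetically free SVD, and it explains exactly why the streaming kernels of Theorems~\ref{thm:flashsvdffn-v1-memory}–\ref{thm:flashsvdffn-v2-memory} are the right fix. One caveat: your genericity claim is not unconditional — e.g.\ for $\phi=\mathrm{ReLU}$ and $Y=uv^\top$ with $u,v$ entrywise positive, $\phi(Y)=Y$ stays rank one — so the ``measure zero'' statement needs $\phi$ nonlinear and the sign pattern of $Y$ nondegenerate, or a restriction to analytic non-polynomial activations such as GELU. You flag this and offer the operational fallback (the standard pipeline materializes $\phi(Y)$ regardless), which is equivalent to the paper's argument and closes the gap.
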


\begin{proof}
In the dense FFN case, the output of the intermediate GELU activation \(Y \in \mathbb{R}^{(B M) \times D_{\mathcal{F}}^{\Theta}}\) must be materialized in memory to support the second linear projection.

If we instead apply naive SVD compression to approximate \(Y \approx U V^\top\), the computation still requires constructing the full matrix \(Y\) before performing its factorization, as standard SVD operates on the complete matrix and has no streaming or tiled structure. Thus, the full activation matrix \(Y\) must still be loaded into memory in its entirety prior to any compression.

Furthermore, even if the matrix is overwritten post-factorization by its lower-rank factors \(U\) and \(V\), this does not avoid the initial memory cost incurred by holding \(Y\). Therefore, the peak transient off-chip memory remains at
\[
\mathcal{O}(B\,M\,D_{\mathcal{F}}^{\Theta}),
\]
as in the dense case.

Hence, in the absence of streaming or blocking techniques, naive FFN compression yields no memory advantage over the dense baseline.
\end{proof}


\noindent
\textbf{FlashSVDFFN V1 and V2: Rank-Aware Streaming FFN Variants.}
We present two streaming-friendly FFN variants that apply low-rank factorization while balancing compute efficiency and memory usage. We analyze the off-chip memory complexity by focusing only on the \emph{variable} portion stored in HBM, following the scope defined in~\cref{rem:memory-scope}.

\textit{FlashSVDFFN V1} (\cref{alg:flashsvdffn-v1}) matain a competitive speed by computing the intermediate projection $P = XU_i \in \mathbb{R}^{(BM) \times r_{\Theta^*}}$ using GEMM. This projection is then streamed tile-by-tile along with SVD factors $V_i^{(d)}, U_o^{(d)}, b_i^{(d)}$, and accumulated into output tiles 
$Z_{\text{tile}} \in \mathbb{R}^{B \times B_M \times r_{\Theta^*}}$.
While this avoids full materialization of the FFN output of size $BM \times \mathcal{D}_{\mathcal{F}}^\Theta$, it introduces transient memory traffic of size $\mathcal{O}(BM r_{\Theta^*})$, yielding an off-chip footprint of 
$\mathcal{M}_{\mathrm{flash\text{-}svd\text{-}ffn\text{-}v1}} = \mathcal{O}(BM r_{\Theta^*})$ 
(\cref{thm:flashsvdffn-v1-memory}).

\textit{FlashSVDFFN V2} (\cref{alg:flashsvdffn-extreme}) eliminates this memory cost entirely by fusing projection and output logic per tile. It streams rows of $X$ and corresponding factor slices on-the-fly, avoiding the explicit construction of $P$ and achieving 
$\mathcal{M}_{\mathrm{flash\text{-}svd\text{-}ffn\text{-}v2}} = 0$ 
(\cref{thm:flashsvdffn-v2-memory}).

Both variants exhibit distinct advantages: V1 enables fast batched GEMM at the expense of small transient storage, while V2 achieves full off-chip elimination via streaming-aware execution.


\begin{theorem}[Partial Memory Reduction via Tiled Factor Streaming in FlashSVDFFN V1]
\label{thm:flashsvdffn-v1-memory}
Algorithm~\ref{alg:flashsvdffn-v1} avoids materializing the full activation \( Y \in \mathbb{R}^{BM \times D_{\mathcal{F}}^{\Theta}} \) by streaming over the intermediate width \( d \) in tiles of size \( B_{D_{\mathcal{F}}}^{\Theta} \). Consequently, the peak off-chip memory required for intermediate activations satisfies
\[
\mathcal{M}_{\mathrm{flash\text{-}svd\text{-}ffn\text{-}V1}} = \mathcal{O}(BM r_{\Theta}^*).
\]
\end{theorem}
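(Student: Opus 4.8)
The plan is to walk through Algorithm~\ref{alg:flashsvdffn-v1} and tally every buffer resident in off-chip HBM whose size scales with the batch--sequence product $BM$, invoking Remark~\ref{rem:memory-scope} to discard the persistent factor matrices $U_i,V_i,U_o,V_o$ and biases, the layer input $X$, and the final output $O$ (the last being the always-present term the remark explicitly excludes). Under this accounting exactly two quantities survive: the first-GEMM result $P = XU_i \in \mathbb{R}^{(BM)\times r_{\Theta^*}}$ produced before the streaming loop, and the accumulated low-rank activation $Z \in \mathbb{R}^{(BM)\times r_{\Theta^*}}$ written after it, each occupying $BM\,r_{\Theta^*}$ scalars. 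Being of identical shape they may even share storage; at worst their combined footprint is $2BM\,r_{\Theta^*} = \mathcal{O}(BM\,r_{\Theta^*})$.

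The substantive point is that the dense intermediate $\phi(PV_i + b_i) \in \mathbb{R}^{(BM)\times D_{\mathcal{F}}^{\Theta}}$ --- precisely the buffer that forces the $\mathcal{O}(BM\,D_{\mathcal{F}}^{\Theta})$ floor in the naive-SVD baseline of Theorem~\ref{thm:ffn-upper} --- is never instantiated off-chip. I would argue this directly from the loop structure: the inner loop over the intermediate width $d$ with stride $B_{D_{\mathcal{F}}^{\Theta}} = \lceil D_{\mathcal{F}}^{\Theta}/G\rceil$ produces at each iteration only the slice $Y \in \mathbb{R}^{B\times B_M\times B_{D_{\mathcal{F}}^{\Theta}}}$, which together with the running accumulator $Z_{\mathrm{tile}} \in \mathbb{R}^{B\times B_M\times r_{\Theta^*}}$ is sized to fit in on-chip SRAM, is immediately contracted via $Z_{\mathrm{tile}} \mathrel{+}= Y\,U_o^{(d)}$, and is then evicted before the next iteration begins. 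Hence over all $G$ inner and $\lceil M/B_M\rceil$ outer iterations at most one $(B,B_M,B_{D_{\mathcal{F}}^{\Theta}})$ tile and one accumulator tile coexist, and they live entirely on-chip, contributing nothing to the off-chip total.

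Assembling the phases --- $P$ alone during the first GEMM; $P$ (read) together with $Z$ (accumulated) during the streaming stage; $Z$ alone feeding the second GEMM --- the off-chip working set never exceeds $2BM\,r_{\Theta^*}$ words in any phase, which gives $\mathcal{M}_{\mathrm{flash\text{-}svd\text{-}ffn\text{-}V1}} = \mathcal{O}(BM\,r_{\Theta^*})$. As a consistency check against Theorem~\ref{thm:ffn-upper}, this is a factor $D_{\mathcal{F}}^{\Theta}/r_{\Theta^*}$ below the naive baseline whenever $r_{\Theta^*} \ll D_{\mathcal{F}}^{\Theta}$, confirming that the dominant activation buffer is genuinely eliminated rather than relocated. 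I expect the only friction to be the on-chip/off-chip bookkeeping --- settling which tensors count under Remark~\ref{rem:memory-scope}, and the implicit assumption that the block count $G$ is chosen so that each $(B,B_M,B_{D_{\mathcal{F}}^{\Theta}})$ tile plus its $r_{\Theta^*}$-wide accumulator fits in SRAM; the asymptotics themselves reduce to a one-line maximum.
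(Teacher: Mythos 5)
Your proof is correct and takes essentially the same approach as the paper's: both identify $P\in\mathbb{R}^{(BM)\times r_{\Theta^*}}$ and the accumulated low-rank tensor $Z\in\mathbb{R}^{(BM)\times r_{\Theta^*}}$ as the only off-chip-resident buffers and argue that each $Y$ slice is produced, contracted into the accumulator, and evicted entirely on-chip. Your write-up is in fact slightly more careful than the paper's, which conflates the on-chip accumulator $Z_{\mathrm{tile}}\in\mathbb{R}^{B\times B_M\times r_{\Theta^*}}$ with the full off-chip $Z\in\mathbb{R}^{(BM)\times r_{\Theta^*}}$, whereas you distinguish them and note the SRAM-fitting assumption on the tile sizes.
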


\begin{proof}
At each iteration over \( d = 0, \ldots, D_{\mathcal{F}}^{\Theta} \), Algorithm~\ref{alg:flashsvdffn-v1} loads a tile of SVD factors \( V_i^{(d)} \in \mathbb{R}^{r_{\Theta^*} \times B_{D_{\mathcal{F}}}^{\Theta}} \), \( U_o^{(d)} \in \mathbb{R}^{B_{D_{\mathcal{F}}}^{\Theta} \times r_{\Theta^*}} \), and bias \( b_i^{(d)} \in \mathbb{R}^{B_{D_{\mathcal{F}}}^{\Theta}} \). It then immediately computes the corresponding intermediate slice \( Y^{(d)} = \phi(P V_i^{(d)} + b_i^{(d)}) \) and its projection \( Y^{(d)} U_o^{(d)} \), which are accumulated into the on-chip buffer \( Z_{\text{tile}} \). Since \( Y^{(d)} \) is constructed and consumed on-the-fly, and never fully materialized, the memory footprint is limited to storing \( P \in \mathbb{R}^{BM \times r_{\Theta}^*} \) and \( Z_{\text{tile}} \in \mathbb{R}^{BM \times r_{\Theta^*}} \). Therefore, the total intermediate off-chip memory used is
\[
\mathcal{M}_{\mathrm{flash\text{-}svd\text{-}ffn\text{-}V1}} = \mathcal{O}(BM r_{\Theta}^*),
\]
which is much smaller than the dense baseline \( \mathcal{O}(BM D_{\mathcal{F}}^{\Theta}) \), but still nonzero.
\end{proof}

\begin{theorem}[Memory-Free FFN Streaming via Fully Fused Tiling]
\label{thm:flashsvdffn-v2-memory}
Algorithm~\ref{alg:flashsvdffn-extreme} executes the entire FFN forward pass without materializing any intermediate tensor in off-chip memory. Specifically, the peak off-chip memory used for intermediate FFN activations satisfies
\[
\mathcal{M}_{\mathrm{flash\text{-}svd\text{-}ffn\text{-}V2}} = 0.
\]   
\end{theorem}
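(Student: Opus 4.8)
The plan is to trace the execution of Algorithm~\ref{alg:flashsvdffn-extreme} and enumerate the complete list of tensors it ever commits to HBM, then invoke Remark~\ref{rem:memory-scope} to conclude that the variable part of this list is empty. First I would fix the accounting convention: the factors $U_i, V_i, U_o, V_o$ and the biases $b_i, b_o$ are persistent parameter storage rather than activation memory, and the final output $O\in\mathbb{R}^{B\times M\times \mathcal{D}_{\mathcal{A}}^\Theta}$ is present for every FFN variant and is therefore excluded from the variable count per Remark~\ref{rem:memory-scope}. What remains to bound is the off-chip memory devoted to the genuinely intermediate quantities produced inside the kernel, namely $P_{\mathrm{tile}}$, $Y$, and $Z_{\mathrm{tile}}$.

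Second I would argue tile-locality. The outer loop partitions the batch--sequence axis into chunks of width $B_M=\lceil M/G\rceil$; for a fixed chunk $\ell$ the kernel (i) loads the slice $X[:,\ell:\ell+B_M,:]$ and forms $P_{\mathrm{tile}}=X_{\mathrm{tile}}\,U_i\in\mathbb{R}^{B\times B_M\times r_{\Theta^*}}$ entirely in on-chip SRAM, (ii) streams the intermediate feature axis in blocks of width $B_{D_{\mathcal F}^\Theta}$, forming each block $Y=\phi(P_{\mathrm{tile}}\,V_{i,\mathrm{tile}}+b_{i,\mathrm{tile}})\in\mathbb{R}^{B\times B_M\times B_{D_{\mathcal F}^\Theta}}$ on-chip and immediately reducing it into the running accumulator via $Z_{\mathrm{tile}}\mathrel{+}= (Y\,U_{o,\mathrm{tile}})\,V_o$, and (iii) upon completing the inner loop flushes $Z_{\mathrm{tile}}$ directly into the output slice $O[:,\ell:\ell+B_M,:]$. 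Each of $P_{\mathrm{tile}}$, $Y$, $Z_{\mathrm{tile}}$ is allocated, read, and overwritten within the lifetime of a single $\ell$-iteration and is never re-read once the next chunk begins, so none of them is ever spilled to HBM; the only HBM write the kernel performs is the write-once store of $O$. Since $O$ is the excluded output activation, the variable off-chip footprint is $\mathcal{M}_{\mathrm{flash\text{-}svd\text{-}ffn\text{-}V2}}=0$, which is exactly the claim, and it is strictly below the $\mathcal{O}(BM r_{\Theta^*})$ bound of Theorem~\ref{thm:flashsvdffn-v1-memory} precisely because V2 never materializes the projection $P=XU_i$ as a standalone HBM tensor the way V1's opening cuBLAS GEMM does.

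The main obstacle is the implicit feasibility hypothesis that the on-chip buffers coexist in the per-SM shared memory: $P_{\mathrm{tile}}$ of size $B\,B_M\,r_{\Theta^*}$, $Y$ of size $B\,B_M\,B_{D_{\mathcal F}^\Theta}$, and the accumulator $Z_{\mathrm{tile}}$ of size $B\,B_M\,\mathcal{D}_{\mathcal A}^\Theta$ (or $B\,B_M\,B_{\mathcal{D}_{\mathcal A}^\Theta}$ if one also tiles the output width). I would handle this exactly as FlashAttention does: state that the tile parameters $B_M$ and $B_{D_{\mathcal F}^\Theta}$ (and, if needed, an output-width tile) are chosen so that the working set fits in SRAM — always possible since all three sizes shrink linearly in the tile parameters while SRAM capacity is a fixed positive constant — and observe that this is the same occupancy/tile-size trade-off already invoked throughout the paper. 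Under that standing assumption the proof reduces to the bookkeeping of the second paragraph, and no nontrivial calculation is required.
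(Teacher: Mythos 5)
Your proof is correct and follows essentially the same route as the paper: trace Algorithm~\ref{alg:flashsvdffn-extreme} tile by tile, observe that $P_{\mathrm{tile}}$, $Y$, and $Z_{\mathrm{tile}}$ are confined to on-chip SRAM within the lifetime of each $\ell$-iteration, and appeal to Remark~\ref{rem:memory-scope} to exclude the persistent factors, biases, and the final output $O$ from the variable HBM count. You additionally make explicit the SRAM-occupancy feasibility hypothesis (choosing $B_M$, $B_{D_{\mathcal F}^\Theta}$, and an output-width tile so the accumulator fits on-chip), a standing assumption the paper's proof leaves implicit but which is worth stating since $Z_{\mathrm{tile}}$ at width $\mathcal{D}_{\mathcal A}^\Theta$ would otherwise exceed shared-memory budgets.
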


\begin{proof}
Algorithm~\ref{alg:flashsvdffn-extreme} eliminates the need to store \( P = X U_i \) in off-chip memory by recomputing each component \( P_{\text{tile}}[:, :, k] \leftarrow X[:, \ell:\ell+B_M, :] \cdot U_i[:, k] \) on-the-fly for \( k = 0, \ldots, r_{\Theta}^* - 1 \). For each block over \( d = 0, \ldots, D_{\mathcal{F}}^{\Theta} \), it incrementally constructs the partial activation tensor \( Y \in \mathbb{R}^{B \times B_M \times B_{D_{\mathcal{F}}}^{\Theta}} \) via
\[
Y \mathrel{+}= P_{\text{tile}}[:, :, k] \cdot V_i[k, d : d + B_{D_{\mathcal{F}}}^{\Theta}],
\]
followed by fused activation and projection
\[
Z_{\text{tile}} \mathrel{+}= \phi\left(Y + b_i[d : d + B_{D_{\mathcal{F}}}^{\Theta}]\right) \cdot U_o[d : d + B_{D_{\mathcal{F}}}^{\Theta}, k].
\]
Since all computation and accumulation are performed in on-chip SRAM and no intermediate tensor is written to \(\mathcal{M}\), the memory footprint is limited to streaming only the SVD weights and biases. The final output is computed as \( O = Z_{\text{tile}} V_o + b_o \) in a single post-processing step.

The fused implementation performs the same number of floating-point operations as V1:
\[
\mathrm{FLOPs} = \mathcal{O}\left(BM r_{\Theta^*} D_{\mathcal{F}}^{\Theta} + BM r_{\Theta^*} D_{\mathcal{A}}^{\Theta} \right),
\]
but with strictly lower memory traffic (more details analysis for FLOPS can be found in Theorem~\ref{thm:flop-reduction}). Therefore, it achieves
\[
\mathcal{M}_{\mathrm{flash\text{-}svd\text{-}ffn\text{-}V2}} = 0 \quad \text{vs.} \quad \mathcal{M}_{\mathrm{flash\text{-}svd\text{-}ffn\text{-}V1}} = \mathcal{O}(BM r_{\Theta}^*),
\]
while maintaining equal arithmetic complexity.
\end{proof}

\paragraph{Compression Trade-offs between Attention and FFN.}
A common misconception is that compressing attention alone-via FlashAttention~\cite{dao2022flashattention}-suffices for end-to-end memory savings. However, this neglects the downstream footprint of the feed-forward network (FFN). In fact, aggressively compressing only attention can shift the memory bottleneck to the FFN, motivating the need to compress both components. The following result formalizes this balance.

\begin{theorem}[FFN Bottleneck under Attention-Only Compression]
\label{thm:ffn-dominates-practically}
In standard transformer architectures with \(\mathcal{D}_{\mathcal{F}}^\Theta \geq 4\, \mathcal{D}_{\mathcal{A}}^\Theta\), compressing attention alone (with rank \(r_{\Theta^*} \ll \mathcal{D}_{\mathcal{A}}^\Theta\)) results in the FFN layer dominating off-chip memory usage.
\end{theorem}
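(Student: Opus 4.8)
The plan is to directly compare the post-compression off-chip footprint of the attention sublayer against that of the (still dense) feed-forward sublayer and show the latter dominates, both in the weak sense of a constant factor $\mathcal{D}_{\mathcal{F}}^\Theta/\mathcal{D}_{\mathcal{A}}^\Theta \ge 4$ when only FlashAttention streaming is applied, and in the strong sense of an unbounded ratio once the low-rank rank $r_{\Theta^*}\ll\mathcal{D}_{\mathcal{A}}^\Theta$ is exploited. First I would pin down the attention side. If attention is merely streamed, Proposition~\ref{prop:flash-memory} caps its HBM footprint at $\mathcal{M}_{\mathrm{attn}}=3BM\mathcal{D}_{\mathcal{A}}^\Theta$; if in addition each head is low-rank compressed and consumed by FlashSVDAttention, Theorem~\ref{thm:multihead-bound} (equivalently Theorem~\ref{thm:flashsvd-mem}) sharpens this to $\mathcal{M}_{\mathrm{attn}}=\mathcal{O}\bigl(r_{\Theta^*}(BM+\mathcal{D}_{\mathcal{A}}^\Theta)\bigr)$, which I would split into a transient term $\mathcal{O}(r_{\Theta^*}BM)$ and a persistent term $\mathcal{O}(r_{\Theta^*}\mathcal{D}_{\mathcal{A}}^\Theta)$ coming from the head bases $V_a\in\mathbb{R}^{r_{\Theta^*}\times\mathcal{D}_{\mathcal{A}}^\Theta}$.

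Second, I would invoke the FFN lower bound. By Theorem~\ref{thm:ffn-upper}, neither a dense FFN nor a naively SVD-compressed FFN can avoid materializing the intermediate activation $Y\in\mathbb{R}^{(BM)\times\mathcal{D}_{\mathcal{F}}^\Theta}$, so $\mathcal{M}_{\mathrm{ffn}}=\Theta(BM\mathcal{D}_{\mathcal{F}}^\Theta)$, and the architectural hypothesis $\mathcal{D}_{\mathcal{F}}^\Theta\ge 4\mathcal{D}_{\mathcal{A}}^\Theta$ gives $\mathcal{M}_{\mathrm{ffn}}\ge 4BM\mathcal{D}_{\mathcal{A}}^\Theta$.

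Third, I would combine the two estimates. Against the plain FlashAttention bound this already yields $\mathcal{M}_{\mathrm{ffn}}/\mathcal{M}_{\mathrm{attn}}\ge \tfrac{4}{3}$, and against the transient part of the compressed bound, $\tfrac{4BM\mathcal{D}_{\mathcal{A}}^\Theta}{r_{\Theta^*}BM}=\tfrac{4\mathcal{D}_{\mathcal{A}}^\Theta}{r_{\Theta^*}}\to\infty$ since $r_{\Theta^*}\ll\mathcal{D}_{\mathcal{A}}^\Theta$. It remains to dominate the persistent attention term $r_{\Theta^*}\mathcal{D}_{\mathcal{A}}^\Theta<(\mathcal{D}_{\mathcal{A}}^\Theta)^2$; for this it suffices that $(\mathcal{D}_{\mathcal{A}}^\Theta)^2=O(BM\mathcal{D}_{\mathcal{A}}^\Theta)$, i.e. $\mathcal{D}_{\mathcal{A}}^\Theta=O(BM)$, the moderate-to-large context condition used throughout the paper and consistent with Remark~\ref{rem:memory-scope}. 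Under it, $\mathcal{M}_{\mathrm{ffn}}/\mathcal{M}_{\mathrm{attn}}=\Omega(\mathcal{D}_{\mathcal{A}}^\Theta/r_{\Theta^*})$, so the FFN is the dominant off-chip consumer.

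The main obstacle I anticipate is precisely this persistent-term bookkeeping: the compressed attention bound contains the $V_a$ basis matrices whose size is independent of $B$ and $M$, whereas $\mathcal{M}_{\mathrm{ffn}}$ scales with $BM$, so in a small-batch, short-context corner case ($BM$ comparable to $\mathcal{D}_{\mathcal{A}}^\Theta$) the unbounded-ratio claim needs qualification. I would therefore state the $\mathcal{D}_{\mathcal{A}}^\Theta=O(BM)$ hypothesis (or, equivalently, restrict attention to the variable/transient budget per Remark~\ref{rem:memory-scope}) explicitly, rather than assert unconditional dominance, and close by noting that this is exactly the regime in which replacing the dense FFN with FlashSVDFFN (Theorems~\ref{thm:flashsvdffn-v1-memory} and~\ref{thm:flashsvdffn-v2-memory}) is needed to remove the bottleneck.
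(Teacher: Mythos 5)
Your proof is correct and takes essentially the same route as the paper: compare $\mathcal{M}_{\mathrm{flash\text{-}svd\text{-}attn}}=\mathcal{O}\bigl(r_{\Theta^*}(BM+\mathcal{D}_{\mathcal{A}}^\Theta)\bigr)$ (Theorem~\ref{thm:flashsvd-mem}) against $\mathcal{M}_{\mathrm{dense\text{-}ffn}}=\mathcal{O}(BM\mathcal{D}_{\mathcal{F}}^\Theta)$ and show the ratio scales as $\Omega(\mathcal{D}_{\mathcal{A}}^\Theta/r_{\Theta^*})$. You are in fact more scrupulous than the paper in surfacing the hidden regime assumption needed to dominate the persistent $r_{\Theta^*}\mathcal{D}_{\mathcal{A}}^\Theta$ term (you posit $\mathcal{D}_{\mathcal{A}}^\Theta=O(BM)$, where the paper uses the slightly weaker $r_{\Theta^*}\ll BM$ stated only as ``in typical settings'').
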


\begin{proof}
By Theorem~\ref{thm:flashsvd-mem}, the memory footprint of SVD-compressed attention is
\[
\mathcal{M}_{\mathrm{flash\text{-}svd\text{-}attn}} = \mathcal{O}\bigl(r_{\Theta^*}(B M + \mathcal{D}_{\mathcal{A}}^\Theta)\bigr),
\]
while the dense FFN incurs
\[
\mathcal{M}_{\mathrm{dense\text{-}ffn}} = \mathcal{O}(B M \mathcal{D}_{\mathcal{F}}^\Theta).
\]
For typical models such as BERT, RoBERTa, LLaMA, and OPT~\cite{devlin2019bert,liu2019roberta,touvron2023llama,zhang2022opt}, we have \(\mathcal{D}_{\mathcal{F}}^\Theta = 4\, \mathcal{D}_{\mathcal{A}}^\Theta\). Then:
\[
\frac{\mathcal{M}_{\mathrm{dense\text{-}ffn}}}{\mathcal{M}_{\mathrm{flash\text{-}svd\text{-}attn}}}
=
\frac{B M \cdot 4 \mathcal{D}_{\mathcal{A}}^\Theta}{r_{\Theta^*}(B M + \mathcal{D}_{\mathcal{A}}^\Theta)}
=
\frac{1}{\frac{r_{\Theta^*}}{4\mathcal D_{\mathcal{A}}^{\Theta}}+\frac{r_{\Theta^*}}{4BM}}
\gg 1,
\]
since both $\frac{r_{\Theta^*}}{4\mathcal D_{\mathcal{A}}^{\Theta}} \ll 1$ and $\frac{r_{\Theta^*}}{4 B M} \ll 1$ hold in typical settings where $r_{\Theta^*} \ll \mathcal D_{\mathcal{A}}^{\Theta}$ and $r_{\Theta^*} \ll B M$. Thus, the FFN dominates off-chip usage.
\end{proof}

\noindent
This result motivates the design of rank-aware FFN compression, ensuring neither component becomes a performance bottleneck.


\begin{proposition}[Per-Rank Sensitivity of FFN Memory]
\label{prop:ffn-rank-sensitivity}
Let the FlashSVDFFN memory cost at rank $r_{\Theta^*}$ be
\[
\mathcal{M}_{\mathrm{flash\text{-}svd\text{-}ffn\text{-}V1}}(r_{\Theta^*}) 
= \mathcal{O}\bigl(r_{\Theta^*}(B M + \mathcal{D}_{\mathcal{F}}^\Theta)\bigr).
\]
Then the reduction in off-chip memory for decreasing the rank by 1 is

\[
\Delta \mathcal{M}_{\mathrm{flash\text{-}svd\text{-}ffn\text{-}V1}} = \mathcal{O}(\mathcal{D}_{\mathcal{F}}^\Theta),
\]

independent of $r_{\Theta^*}$.
\end{proposition}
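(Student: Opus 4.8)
The plan is to treat the FlashSVDFFN V1 off-chip cost as an \emph{explicit} affine function of the truncation rank and take a finite difference; the asymptotic statement then falls out because that affine function has rank-independent coefficients. First I would recall the provenance of the cost from Theorem~\ref{thm:flashsvdffn-v1-memory} and Table~\ref{tab:analysis-memory}: Algorithm~\ref{alg:flashsvdffn-v1} keeps the projection buffer $P = XU_i \in \mathbb{R}^{(BM)\times r_{\Theta^*}}$ and the accumulator $Z_{\mathrm{tile}} \in \mathbb{R}^{(BM)\times r_{\Theta^*}}$ resident off-chip, while streaming tiled slices of the factors $V_i, U_o \in \mathbb{R}^{r_{\Theta^*}\times \mathcal{D}_{\mathcal{F}}^\Theta}$ (up to transpose), which in aggregate occupy $2 r_{\Theta^*}\mathcal{D}_{\mathcal{F}}^\Theta$ words. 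Hence, per Remark~\ref{rem:memory-scope}, the full per-layer footprint is $\mathcal{M}_{\mathrm{flash\text{-}svd\text{-}ffn\text{-}V1}}(r) = c_1\, BM\, r + c_2\, \mathcal{D}_{\mathcal{F}}^\Theta\, r + \mathcal{O}(1)$ for absolute constants $c_1, c_2$, the $\mathcal{O}(1)$ term (input $X$, output $O$, biases) being independent of $r$.

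Next I would subtract: $\Delta \mathcal{M}_{\mathrm{flash\text{-}svd\text{-}ffn\text{-}V1}} = \mathcal{M}_{\mathrm{flash\text{-}svd\text{-}ffn\text{-}V1}}(r_{\Theta^*}) - \mathcal{M}_{\mathrm{flash\text{-}svd\text{-}ffn\text{-}V1}}(r_{\Theta^*}-1) = c_1\, BM + c_2\, \mathcal{D}_{\mathcal{F}}^\Theta$. Since $c_1, c_2$ do not depend on $r_{\Theta^*}$, the marginal saving is constant along the entire rank range — this is the ``independent of $r_{\Theta^*}$'' part of the claim — and equals $\mathcal{O}(BM + \mathcal{D}_{\mathcal{F}}^\Theta)$, which matches the looser bound recorded in Section~\ref{ssec:analysis-rank} and the companion statement Proposition~\ref{prop:ffn-rank-sensitivity-v2}.

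To reach the sharper headline $\mathcal{O}(\mathcal{D}_{\mathcal{F}}^\Theta)$ I would isolate the feature-axis contribution from the fixed projection buffer. The $c_1\, BM\, r$ term is precisely the off-chip $P$-buffer produced by the first cuBLAS GEMM — the single term eliminated entirely by the fully fused FlashSVDFFN V2 (Theorem~\ref{thm:flashsvdffn-v2-memory}) — so in the accounting that attributes to V1's streaming pass only the intermediate-activation and streamed-factor cost, the rank-scaling off-chip storage along the expanded dimension $\mathcal{D}_{\mathcal{F}}^\Theta$ is the $c_2\, \mathcal{D}_{\mathcal{F}}^\Theta\, r$ term carried by $V_i, U_o$, whose unit-rank derivative is $2\mathcal{D}_{\mathcal{F}}^\Theta = \mathcal{O}(\mathcal{D}_{\mathcal{F}}^\Theta)$; equivalently, in the memory-constrained regime where $BM = \mathcal{O}(\mathcal{D}_{\mathcal{F}}^\Theta)$ the $BM$ term is simply absorbed into $\mathcal{O}(\mathcal{D}_{\mathcal{F}}^\Theta)$.

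The arithmetic itself is trivial — a finite difference of an affine function — so the real work, and the main obstacle, is the bookkeeping: making the linear cost model $\mathcal{M}_{\mathrm{flash\text{-}svd\text{-}ffn\text{-}V1}}(r) = c_1\, BM\, r + c_2\, \mathcal{D}_{\mathcal{F}}^\Theta\, r + \mathcal{O}(1)$ explicit at the top of the proof, justifying exactly which buffers it counts in a way consistent with the transient/persistent conventions of Remark~\ref{rem:memory-scope}, and stating precisely in which term or regime the $BM$ contribution is dominated, so that the $\mathcal{O}(\mathcal{D}_{\mathcal{F}}^\Theta)$ headline and the $\mathcal{O}(BM + \mathcal{D}_{\mathcal{F}}^\Theta)$ bound of Section~\ref{ssec:analysis-rank} read as consistent rather than in tension.
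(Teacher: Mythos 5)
Your core finite-difference step---subtracting two evaluations of a cost that is linear in the rank---is exactly what the paper does: its proof writes
\(
\Delta\mathcal{M}
= r_{\Theta^*}\bigl(BM+\mathcal{D}_{\mathcal{F}}^\Theta\bigr)
- \bigl(r_{\Theta^*}-1\bigr)\bigl(BM+\mathcal{D}_{\mathcal{F}}^\Theta\bigr)
= BM+\mathcal{D}_{\mathcal{F}}^\Theta
\in \mathcal{O}\bigl(BM+\mathcal{D}_{\mathcal{F}}^\Theta\bigr)
\)
and stops. So the paper's own proof establishes \(\mathcal{O}(BM+\mathcal{D}_{\mathcal{F}}^\Theta)\), which also matches the inline summary in Section~\ref{ssec:analysis-rank}; it does \emph{not} derive the tighter \(\mathcal{O}(\mathcal{D}_{\mathcal{F}}^\Theta)\) displayed in the proposition's statement. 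That internal mismatch is a defect of the paper, and your first two paragraphs reconstruct the paper's actual proof correctly, with the useful extra of spelling out an explicit affine model \(c_1\,BM\,r + c_2\,\mathcal{D}_{\mathcal{F}}^\Theta\,r + \mathcal{O}(1)\) whose coefficients are manifestly rank-independent.

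Your third paragraph, which tries to actually reach the \(\mathcal{O}(\mathcal{D}_{\mathcal{F}}^\Theta)\) headline, is where you depart from the paper and where the reasoning goes wrong. Discarding the \(c_1\,BM\,r\) term because it is ``the \(P\)-buffer produced by the first cuBLAS GEMM'' contradicts Theorem~\ref{thm:flashsvdffn-v1-memory}: the entire content of that theorem is that \(P\in\mathbb{R}^{(BM)\times r_{\Theta^*}}\) (and the accumulator \(Z\)) \emph{is} V1's variable HBM footprint. Worse, the term you propose to keep, \(c_2\,\mathcal{D}_{\mathcal{F}}^\Theta\,r\) carried by \(V_i,U_o\), consists of persistent weight storage, which Remark~\ref{rem:memory-scope} says is the piece conventionally \emph{excluded} from the variable-memory accounting. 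So your attribution inverts the paper's own bookkeeping. Your fallback---assuming \(BM = \mathcal{O}(\mathcal{D}_{\mathcal{F}}^\Theta)\)---would indeed rescue the headline, but that hypothesis is not part of the proposition. The honest resolution is to keep your first two paragraphs, drop the third, and note that the proposition's displayed bound should read \(\mathcal{O}(BM+\mathcal{D}_{\mathcal{F}}^\Theta)\) to agree with its own proof, with \(\mathcal{O}(\mathcal{D}_{\mathcal{F}}^\Theta)\) holding only under an explicit added regime assumption.
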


\begin{proof}
We have
\begin{align*}
\Delta \mathcal{M}_{\mathrm{flash\text{-}svd\text{-}ffn\text{-}V1}}
&= \mathcal{M}_{\mathrm{ffn\text{-}V1}}(r_{\Theta^*}) 
- \mathcal{M}_{\mathrm{ffn\text{-}V1}}(r_{\Theta^*} - 1)\\
&=r_{\Theta^*}(B M + \mathcal{D}_{\mathcal{F}}^\Theta) \\
&  \quad - (r_{\Theta^*} - 1)(B M + \mathcal{D}_{\mathcal{F}}^\Theta) \\
&= (B M + \mathcal{D}_{\mathcal{F}}^\Theta) \in \mathcal{O}(B M + \mathcal{D}_{\mathcal{F}}^\Theta).
\end{align*}
\end{proof}

\begin{proposition}[Per-Rank Sensitivity of FFN Memory in FlashSVDFFN-V2]
\label{prop:ffn-rank-sensitivity-v2}
Let the FlashSVDFFN-V2 memory cost at rank $r_{\Theta^*}$ be
\[
\mathcal{M}_{\mathrm{flash\text{-}svd\text{-}ffn\text{-}V2}}(r_{\Theta^*}) 
= \mathcal{O}(r_{\Theta^*} \cdot \mathcal{D}_{\mathcal{F}}^\Theta),
\]
which removes the dependence on $B M$ via full streaming. Then the per-rank reduction is
\begin{equation}
\begin{split}
\Delta \mathcal{M}_{\mathrm{flash\text{-}svd\text{-}ffn\text{-}V2}} 
= \mathcal{O}(\mathcal{D}_{\mathcal{F}}^\Theta),
\end{split}
\end{equation}

independent of $r_{\Theta^*}$ and $B M$.
\end{proposition}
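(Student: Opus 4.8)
The plan is to reuse the linear-difference argument from Proposition~\ref{prop:ffn-rank-sensitivity}, now starting from the tighter V2 cost. First I would record the per-rank memory of Algorithm~\ref{alg:flashsvdffn-extreme} as supplied in the statement and established through Theorem~\ref{thm:flashsvdffn-v2-memory}, namely $\mathcal{M}_{\mathrm{flash\text{-}svd\text{-}ffn\text{-}V2}}(r) = \mathcal{O}(r\,\mathcal{D}_{\mathcal{F}}^\Theta)$. The key structural fact to invoke here is that the $BM$-dependent term appearing in the V1 bound of Theorem~\ref{thm:flashsvdffn-v1-memory} is absent in V2: since $P_{\mathrm{tile}} = X_{\mathrm{tile}}\,U_i$ is recomputed on the fly for each sequence tile rather than the full $P \in \mathbb{R}^{(BM)\times r_{\Theta^*}}$ being materialized, the only off-chip quantities whose size varies with the rank are the factor slices $V_i[k,\cdot]$ and $U_o[\cdot,k]$, each contributing $\mathcal{O}(\mathcal{D}_{\mathcal{F}}^\Theta)$ words per rank index $k$.

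Next I would form the difference and evaluate it by linearity:
\[
\Delta\mathcal{M}_{\mathrm{flash\text{-}svd\text{-}ffn\text{-}V2}}
= \mathcal{M}_{\mathrm{flash\text{-}svd\text{-}ffn\text{-}V2}}(r_{\Theta^*}) - \mathcal{M}_{\mathrm{flash\text{-}svd\text{-}ffn\text{-}V2}}(r_{\Theta^*}-1)
= r_{\Theta^*}\mathcal{D}_{\mathcal{F}}^\Theta - (r_{\Theta^*}-1)\mathcal{D}_{\mathcal{F}}^\Theta
= \mathcal{D}_{\mathcal{F}}^\Theta .
\]
This yields $\Delta\mathcal{M}_{\mathrm{flash\text{-}svd\text{-}ffn\text{-}V2}} = \mathcal{O}(\mathcal{D}_{\mathcal{F}}^\Theta)$, and because $r_{\Theta^*}$ cancels in the subtraction and $BM$ never entered the V2 cost at all, the reduction is manifestly independent of both, exactly as claimed.

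There is no substantive obstacle: once the cost model $\mathcal{M}_{\mathrm{flash\text{-}svd\text{-}ffn\text{-}V2}}(r) = \mathcal{O}(r\,\mathcal{D}_{\mathcal{F}}^\Theta)$ is accepted — which is the content of Theorem~\ref{thm:flashsvdffn-v2-memory} together with the accounting convention of Remark~\ref{rem:memory-scope} — the difference computation is a two-line telescoping identity. The only point requiring a sentence of care is to confirm that the per-rank contribution is exactly $\mathcal{D}_{\mathcal{F}}^\Theta$ and not, say, $\mathcal{D}_{\mathcal{F}}^\Theta + \mathcal{D}_{\mathcal{A}}^\Theta$; this follows because the $U_i$ and $V_o$ columns indexed by $k$ are charged to the persistent parameter budget rather than to the variable streaming footprint under the scope of Remark~\ref{rem:memory-scope}. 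I would therefore keep the final proof to the same two-or-three-line form as Proposition~\ref{prop:ffn-rank-sensitivity}.
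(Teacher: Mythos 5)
Your proof is correct and follows essentially the same route as the paper's: accept the cost model $\mathcal{M}_{\mathrm{flash\text{-}svd\text{-}ffn\text{-}V2}}(r) = \mathcal{O}(r\,\mathcal{D}_{\mathcal{F}}^\Theta)$ from the proposition statement, take the difference at consecutive ranks, and observe the telescoping cancellation leaves exactly $\mathcal{D}_{\mathcal{F}}^\Theta$. The extra sentences you add, explaining why $BM$ never enters (recomputing $P_{\mathrm{tile}}$ on the fly) and why the per-rank contribution is $\mathcal{D}_{\mathcal{F}}^\Theta$ rather than $\mathcal{D}_{\mathcal{F}}^\Theta + \mathcal{D}_{\mathcal{A}}^\Theta$, are useful context but do not change the argument; the paper's own proof is just the two-line subtraction.
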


\begin{proof}
We again compute the difference:
\begin{align*}
\Delta \mathcal{M}_{\mathrm{flash\text{-}svd\text{-}ffn\text{-}V2}} 
&= \mathcal{M}_{\mathrm{ffn\text{-}V2}}(r_{\Theta^*}) 
- \mathcal{M}_{\mathrm{ffn\text{-}V2}}(r_{\Theta^*} - 1)\\
&= r_{\Theta^*} \cdot \mathcal{D}_{\mathcal{F}}^\Theta 
- (r_{\Theta^*} - 1) \cdot \mathcal{D}_{\mathcal{F}}^\Theta \\
&= \mathcal{D}_{\mathcal{F}}^\Theta \in \mathcal{O}(\mathcal{D}_{\mathcal{F}}^\Theta).
\end{align*}
\end{proof}

\begin{theorem}[Relative Memory Reduction of FFN under Low-Rank Compression]
\label{thm:ffn-memory-compression-ratio}
Let the dense FFN memory usage be 
\[
\mathcal{M}_{\mathrm{dense\text{-}ffn}} = \mathcal{O}(B M \mathcal{D}_{\mathcal{F}}^\Theta),
\]
and the FlashSVD-compressed usage at rank $r_{\Theta^*}$ be 
\[
\mathcal{M}_{\mathrm{flash\text{-}svd\text{-}ffn}} = \mathcal{O}\bigl(r_{\Theta^*}(B M + \mathcal{D}_{\mathcal{F}}^\Theta)\bigr).
\]
Then the memory ratio satisfies
\[
\frac{\mathcal{M}_{\mathrm{flash\text{-}svd\text{-}ffn}}}{\mathcal{M}_{\mathrm{dense\text{-}ffn}}} 
= \frac{r_{\Theta^*}}{\mathcal{D}_{\mathcal{F}}^\Theta} + \mathcal{O}\Bigl(\frac{r_{\Theta^*}}{B M}\Bigr).
\]
\end{theorem}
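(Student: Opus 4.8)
The plan is to reduce the claim to a single exact algebraic identity and then read off the asymptotics. First I would fix the two quantities in play using Theorem~\ref{thm:flashsvdffn-v1-memory}: the dense FFN activation footprint, $\mathcal{M}_{\mathrm{dense\text{-}ffn}} = \Theta(BM\,\mathcal{D}_{\mathcal{F}}^\Theta)$, which is the cost of materializing the intermediate $Y \in \mathbb{R}^{BM \times \mathcal{D}_{\mathcal{F}}^\Theta}$ of Definition~\ref{def:ffn-svd} (see also Theorem~\ref{thm:ffn-upper}), and the streamed footprint $\mathcal{M}_{\mathrm{flash\text{-}svd\text{-}ffn}} = \Theta(r_{\Theta^*}(BM + \mathcal{D}_{\mathcal{F}}^\Theta))$, which only ever holds the two $BM \times r_{\Theta^*}$ buffers $P$ and $Z_{\mathrm{tile}}$ off-chip.

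Next I would form the ratio, split the numerator additively as $r_{\Theta^*}(BM + \mathcal{D}_{\mathcal{F}}^\Theta) = r_{\Theta^*}\,BM + r_{\Theta^*}\,\mathcal{D}_{\mathcal{F}}^\Theta$, and divide each summand by $BM\,\mathcal{D}_{\mathcal{F}}^\Theta$. The first summand cancels the $BM$ factor to give $r_{\Theta^*}/\mathcal{D}_{\mathcal{F}}^\Theta$, and the second cancels the $\mathcal{D}_{\mathcal{F}}^\Theta$ factor to give $r_{\Theta^*}/(BM)$, so the ratio equals $r_{\Theta^*}/\mathcal{D}_{\mathcal{F}}^\Theta + r_{\Theta^*}/(BM)$ at the level of the $\Theta(\cdot)$-tight expressions. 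Absorbing the second contribution into a remainder term then yields $\mathcal{M}_{\mathrm{flash\text{-}svd\text{-}ffn}}/\mathcal{M}_{\mathrm{dense\text{-}ffn}} = r_{\Theta^*}/\mathcal{D}_{\mathcal{F}}^\Theta + \mathcal{O}(r_{\Theta^*}/(BM))$, which is the claimed form. I would close by noting that whenever $BM$ is at least of order $\mathcal{D}_{\mathcal{F}}^\Theta$ — the typical moderate batch/sequence deployment regime — the remainder is genuinely lower order, so $r_{\Theta^*}/\mathcal{D}_{\mathcal{F}}^\Theta$ is the dominant compression factor, recovering the informal statement that FlashSVDFFN shrinks the FFN activation footprint roughly by the rank ratio.

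There is no substantive obstacle: once the two bounds of Theorem~\ref{thm:flashsvdffn-v1-memory} are in hand, the proof is a one-line division. The only point requiring mild care is the $\mathcal{O}(\cdot)$ bookkeeping — the decomposition $r_{\Theta^*}/\mathcal{D}_{\mathcal{F}}^\Theta + r_{\Theta^*}/(BM)$ is exact for the leading-order expressions, and the multiplicative constant hidden in $\mathcal{M}_{\mathrm{flash\text{-}svd\text{-}ffn}}$ rescales both summands uniformly, so it neither reorders the two contributions nor changes the stated form; stating the identity first at the level of $\Theta(\cdot)$-tight expressions and only then passing to $\mathcal{O}(\cdot)$ for the remainder avoids any ambiguity. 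As an optional consistency check, differencing the resulting ratio in $r_{\Theta^*}$ reproduces the per-rank sensitivity $\mathcal{O}(1/\mathcal{D}_{\mathcal{F}}^\Theta + 1/(BM))$ implicit in Proposition~\ref{prop:ffn-rank-sensitivity}.
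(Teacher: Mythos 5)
Your proposal is correct and follows essentially the same route as the paper: both form the ratio $\frac{r_{\Theta^*}(BM + \mathcal{D}_{\mathcal{F}}^\Theta)}{BM\,\mathcal{D}_{\mathcal{F}}^\Theta}$ and simplify algebraically, arriving at $\frac{r_{\Theta^*}}{\mathcal{D}_{\mathcal{F}}^\Theta} + \mathcal{O}\bigl(\frac{r_{\Theta^*}}{BM}\bigr)$. The paper factors out $\frac{r_{\Theta^*}}{\mathcal{D}_{\mathcal{F}}^\Theta}\bigl(1 + \frac{\mathcal{D}_{\mathcal{F}}^\Theta}{BM}\bigr)$ while you split the numerator additively before dividing, but these are trivially equivalent one-line manipulations, and your remark on the $\mathcal{O}(\cdot)$ bookkeeping matches the paper's closing observation that the remainder vanishes as $BM\to\infty$.
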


\begin{proof}
Compute the ratio:
\begin{align*}
\frac{\mathcal{M}_{\mathrm{flash\text{-}svd\text{-}ffn}}}{\mathcal{M}_{\mathrm{dense\text{-}ffn}}} 
&= \frac{r_{\Theta^*}(B M + \mathcal{D}_{\mathcal{F}}^\Theta)}{B M \mathcal{D}_{\mathcal{F}}^\Theta} \\
&= \frac{r_{\Theta^*}}{\mathcal{D}_{\mathcal{F}}^\Theta} \left(1 + \frac{\mathcal{D}_{\mathcal{F}}^\Theta}{B M}\right) \\
&= \frac{r_{\Theta^*}}{\mathcal{D}_{\mathcal{F}}^\Theta} + \mathcal{O}\left(\frac{r_{\Theta^*}}{B M}\right),
\end{align*}
where the second term vanishes as $BM \to \infty$.
\end{proof}

\begin{theorem}[Combined FlashSVD Memory Bound]
\label{thm:combined-memory-bound}
Compressing both multi-head attention and FFN using FlashSVD with rank \(r_{\Theta^*}\), the total per-layer off-chip memory is bounded by
\[
\mathcal M_{\mathrm{flash\text{-}svd\text{-}total}} = \mathcal O\bigl(r_{\Theta^*} (B M + \mathcal D_{\mathcal{A}}^\Theta + \mathcal D_{\mathcal{F}}^\Theta)\bigr),
\]
assuming FFN follows the V1 layout (with activations of shape \((B,M,\mathcal D_{\mathcal{F}}^\Theta)\)). This bound also dominates the cost for the V2 layout, since V1 includes the intermediate activations from both input and output projections.
\end{theorem}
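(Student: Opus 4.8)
The plan is to derive the combined bound by adding the per-sublayer bounds already proved and then collecting like terms. First I would invoke Theorem~\ref{thm:multihead-bound} for the attention sublayer: compressing each of the $H$ heads to rank $r_{\Theta^*}$ bounds the transient attention memory by $\mathcal{O}\bigl(H r_{\Theta^*}(B M + d_{\Theta_{\mathcal{A}},h})\bigr) = \mathcal{O}(B M r_{\Theta^*} + r_{\Theta^*}\mathcal{D}_{\mathcal{A}}^\Theta)$, using $H d_{\Theta_{\mathcal{A}},h} = \mathcal{D}_{\mathcal{A}}^\Theta$. Adding back the persistent factors excluded by Remark~\ref{rem:memory-scope} (the input-side $U_a\in\mathbb{R}^{\mathcal{D}_{\mathcal{A}}^\Theta\times r_{\Theta^*}}$ and output-side $V_a\in\mathbb{R}^{r_{\Theta^*}\times\mathcal{D}_{\mathcal{A}}^\Theta}$ for $a\in\{q,k,v\}$) contributes only a further $\mathcal{O}(r_{\Theta^*}\mathcal{D}_{\mathcal{A}}^\Theta)$, so the whole attention block fits in $\mathcal{O}\bigl(r_{\Theta^*}(B M + \mathcal{D}_{\mathcal{A}}^\Theta)\bigr)$.

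Next I would invoke Theorem~\ref{thm:flashsvdffn-v1-memory} for the FFN sublayer under the V1 layout: streamed execution keeps only $P\in\mathbb{R}^{B M\times r_{\Theta^*}}$ and the accumulator $Z_{\mathrm{tile}}\in\mathbb{R}^{B M\times r_{\Theta^*}}$ resident, i.e. a transient cost $\mathcal{O}(B M r_{\Theta^*})$, while the persistent factor matrices $U_i, V_i, U_o, V_o$ have sizes $r_{\Theta^*}\mathcal{D}_{\mathcal{A}}^\Theta$ or $r_{\Theta^*}\mathcal{D}_{\mathcal{F}}^\Theta$, contributing $\mathcal{O}\bigl(r_{\Theta^*}(\mathcal{D}_{\mathcal{A}}^\Theta+\mathcal{D}_{\mathcal{F}}^\Theta)\bigr)$; hence the FFN block fits in $\mathcal{O}\bigl(r_{\Theta^*}(B M + \mathcal{D}_{\mathcal{A}}^\Theta + \mathcal{D}_{\mathcal{F}}^\Theta)\bigr)$. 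Summing the two blocks (conservatively, without exploiting that the two sublayers execute sequentially and need not coexist) yields $\mathcal{O}\bigl(B M r_{\Theta^*} + r_{\Theta^*}\mathcal{D}_{\mathcal{A}}^\Theta + B M r_{\Theta^*} + r_{\Theta^*}(\mathcal{D}_{\mathcal{A}}^\Theta+\mathcal{D}_{\mathcal{F}}^\Theta)\bigr) = \mathcal{O}\bigl(r_{\Theta^*}(B M + \mathcal{D}_{\mathcal{A}}^\Theta + \mathcal{D}_{\mathcal{F}}^\Theta)\bigr)$, which is the claimed bound; the layer input/output activations of shape $B\times M\times\mathcal{D}_{\mathcal{A}}^\Theta$ (and $B\times M\times\mathcal{D}_{\mathcal{F}}^\Theta$) are excluded throughout by the convention of Remark~\ref{rem:memory-scope}. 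For the V2 assertion I would note that Theorem~\ref{thm:flashsvdffn-v2-memory} gives zero transient FFN memory while the persistent factor footprint is unchanged, so the V2 per-layer total is termwise dominated by the V1 total and in particular satisfies the same asymptotic bound.

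The step I expect to require the most care is the bookkeeping of transient versus persistent terms: the theorem is stated for the \emph{full} per-layer off-chip footprint, whereas the component results (Theorems~\ref{thm:multihead-bound} and \ref{thm:flashsvdffn-v1-memory}) report only the variable part under Remark~\ref{rem:memory-scope}. I would therefore enumerate the persistent contributions of \emph{all} low-rank factor matrices ($U_a,V_a$ for $a\in\{q,k,v\}$ and $U_i,V_i,U_o,V_o$), verify each is $\mathcal{O}(r_{\Theta^*}\mathcal{D}_{\mathcal{A}}^\Theta)$ or $\mathcal{O}(r_{\Theta^*}\mathcal{D}_{\mathcal{F}}^\Theta)$, and confirm that neither these nor any intermediate buffer introduces a term growing faster than $r_{\Theta^*}(B M + \mathcal{D}_{\mathcal{A}}^\Theta + \mathcal{D}_{\mathcal{F}}^\Theta)$. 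The only genuinely architecture-dependent remark worth making is that in practice $\mathcal{D}_{\mathcal{F}}^\Theta = \Theta(\mathcal{D}_{\mathcal{A}}^\Theta)$, so the $\mathcal{D}_{\mathcal{F}}^\Theta$ term already absorbs the $\mathcal{D}_{\mathcal{A}}^\Theta$ ones; I would nonetheless keep both in the statement for generality.
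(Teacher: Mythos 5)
Your argument is correct and matches the paper's own proof in structure: invoke the per-sublayer bounds for attention ($\mathcal{O}(r_{\Theta^*}(BM+\mathcal{D}_{\mathcal{A}}^\Theta))$) and FFN-V1 ($\mathcal{O}(r_{\Theta^*}(BM+\mathcal{D}_{\mathcal{F}}^\Theta))$), sum them, and observe that V1's footprint termwise dominates V2's. Two minor points: your more explicit separation of transient versus persistent terms (enumerating $U_a,V_a,U_i,V_i,U_o,V_o$) is actually a bit more careful than the paper's terse proof, and your simplification $\mathcal{O}\bigl(H r_{\Theta^*}(BM + d_{\Theta_{\mathcal{A}},h})\bigr) = \mathcal{O}(BM r_{\Theta^*} + r_{\Theta^*}\mathcal{D}_{\mathcal{A}}^\Theta)$ implicitly absorbs $H$ into the constant on the $BMr_{\Theta^*}$ term (it cancels only in the second term via $Hd_{\Theta_{\mathcal{A}},h}=\mathcal{D}_{\mathcal{A}}^\Theta$), which is exactly the same convention the paper uses when it writes $\mathcal{M}_{\mathrm{flash\text{-}svd\text{-}attn}} = \mathcal O(r_{\Theta^*}(BM + \mathcal D_{\mathcal{A}}^\Theta))$.
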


\begin{proof}
By Proposition~\ref{prop:ffn-rank-sensitivity}, FlashSVD compression of the FFN (V1 variant) requires
\[
\mathcal{M}_{\mathrm{flash\text{-}svd\text{-}ffn}} = \mathcal O\bigl(r_{\Theta^*}(B M + \mathcal D_{\mathcal{F}}^\Theta)\bigr),
\]
and attention requires
\[
\mathcal{M}_{\mathrm{flash\text{-}svd\text{-}attn}} = \mathcal O\bigl(r_{\Theta^*}(B M + \mathcal D_{\mathcal{A}}^\Theta)\bigr),
\]
so their sum satisfies
\begin{align*}
\mathcal M_{\mathrm{flash\text{-}svd\text{-}total}} 
&= \mathcal O\bigl(r_{\Theta^*}(2 B M + \mathcal D_{\mathcal{A}}^\Theta + \mathcal D_{\mathcal{F}}^\Theta)\bigr)
\\&= \mathcal O\bigl(r_{\Theta^*}(B M + \mathcal D_{\mathcal{A}}^\Theta + \mathcal D_{\mathcal{F}}^\Theta)\bigr).
\end{align*}

In practice, the FFN width dominates over attention (\(\mathcal D_{\mathcal{F}}^\Theta \gg \mathcal D_{\mathcal{A}}^\Theta\)), so total memory scales primarily with \(r_{\Theta^*} \cdot \mathcal D_{\mathcal{F}}^\Theta\) and \(r_{\Theta^*} \cdot B M\). Thus, total resource usage is jointly governed by sequence length and chosen compression rank. This further confirms Theorem~\ref{thm:ffn-dominates-practically}: to achieve balanced memory reduction across all components, one must incorporate FFN-specific strategies—such as FlashSVDFFN proposed in our work—beyond uniform low-rank compression alone.

Since V1 includes all intermediate FFN activations, this result also upper bounds the memory of V2.
\end{proof}

\paragraph{Decoder Memory Cost Analysis}  
The decoder inference proceeds in two distinct phases—\emph{prefill} and \emph{decoding}—each exhibiting different peak‐memory behaviors. To avoid re‐computing all past keys and values at every step, we employ a KV‐cache: once a key–value pair is produced, it is retained on‐chip (or in HBM) for all subsequent steps.

\paragraph{Prefill Phase}  
During prefill, we process the entire input sequence of length \(M\) in one shot. At each of the \(\ell\)-th decoder layers, the hidden state 
\[
X_{\mathrm{prefill}}\in\mathbb R^{B\times M\times\mathcal D_{\mathcal A}^\Theta}
\]
is projected to low‐rank queries, keys, and values:
\[
P_q, P_k, P_v \in \mathbb R^{B\times M\times r_{\Theta^*}}.
\]
We then compute 
\[
A = \mathrm{softmax}\bigl(\frac{QK^\top }{d_h}\bigr)\,,
\]
but rather than materializing the full \(B\, H M^2\) attention matrix and dense $Q,K,V$ factors, a streaming attention kernel (e.g.\ FlashSVDAttn) only allocates \(O(B\,M\,r_{\Theta^*})\) buffer at any time. Meanwhile, the KV‐cache permanently stores the keys and values for all \(M\) positions, requiring
$2 \, \ell \, B\,M\,r_{\Theta^*}$
elements of memory (for keys and values). Thus, the \emph{prefill} peak memory is dominated by the sum of:  
\begin{itemize}
  \item Two large FFN GEMMs per layer (as in the encoder), each with peak memory \(O(B\,M\,r_{\Theta^*} + r_{\Theta^*}\,\mathcal D_{\mathcal A}^\Theta)\) if FlashSVDFFN is applied.
  \item The KV‐cache cost \(2\,\ell\,B\,M\,r_{\Theta^*}\).
\end{itemize}

\paragraph{Decoding Phase}  
Once prefill is complete, each new token is generated one at a time. At step \(t\), we compute a single new query 
\[
q_t\in \mathbb R^{B\times1\times r_{\Theta^*}}
\]
and attend against the cached keys \(K_{\le t-1}\in \mathbb R^{B\times(t-1)\times r_{\Theta^*}}\) and values \(V_{\le t-1}\) through the causal mask. The streaming attention now needs buffers of size \(O(B\,(t-1)\,r_{\Theta^*})\) for the dot‐products and softmax, plus space for the new key and value. Since \(t\) grows from \(1\) to \(M\), the worst‐case buffer remains \(O(B\,M\,r_{\Theta^*})\), but this never exceeds the prefill cache. The subsequent FFN uses the same rank‐aware fusion (FlashSVDFFN) with peak memory \(O(B\,r_{\Theta^*} + r_{\Theta^*}\,\mathcal D_{\mathcal A}^\Theta)\).

\paragraph{Overall Bound}  
By (i) streaming attention over low‐rank projections and (ii) fusing the FFN, the decoder’s peak‐memory across both phases is bounded by
\[
O\bigl(L\,B\,M\,r_{\Theta^*} \;+\; (B\,r_{\Theta^*} + r_{\Theta^*}\,\mathcal D_{\mathcal A}^\Theta)\bigr),
\]
ensuring that neither sequence length nor model width blows up the memory footprint beyond linear dependence on the chosen low rank.










\subsection{Computation Complexity Analysis}

In this section, we quantify how rank‐\(r\) SVD compression reduces compute and end‐to‐end kernel latency. We begin with idealized FLOP‐count bounds, then incorporate I/O and memory‐bandwidth effects, and finally analyze the trade‐offs introduced by reconstruction overhead in both attention and FFN blocks.

\paragraph{FLOP‐Count Reduction}
\begin{theorem}[Pure FLOP‐Count Speedup]
\label{thm:flop-reduction}
Let a Transformer layer with hidden dimension \(\mathcal D_{\mathcal{A}}^\Theta\), FFN hidden size \(\mathcal D_{\mathcal{F}}^{\Theta}\), sequence length \(M\), and batch size \(B\) require
\[
\mathcal O\bigl(\left(\mathcal{D}_{\mathcal{A}}^\Theta\right)^2\,M\,B \;+\; \mathcal D_{\mathcal{A}}^\Theta\,\mathcal D_{\mathcal{F}}^{\Theta}\,M\,B\bigr)
\]
FLOPs in the full‐rank case.  If all self‐attention activations and FFN intermediates are compressed to rank \(r_{\Theta^*}\), the FLOP count becomes
\[
\mathcal O\bigl(r_{\Theta^*}\,\mathcal D_{\mathcal{A}}^\Theta\,M\,B \;+\; r_{\Theta^*}^2\,M\,B \;+\; r_{\Theta^*}\,\mathcal D_{\mathcal{F}}^{\Theta}\,M\,B\bigr).
\]
Thus, the asymptotic speedup factor is $$\Theta\left(\frac{\mathcal D_{\mathcal{A}}^\Theta + \mathcal D_{\mathcal{F}}^{\Theta}}{r_{\Theta^*} + \frac{r_{\Theta^*}^2}{\mathcal D_{\mathcal{A}}^\Theta}}\right).$$
In particular, when $r_{\Theta^*} \ll \min\{\mathcal D_{\mathcal{A}}^\Theta,\mathcal D_{\mathcal{F}}^{\Theta}\}$, the numerator dominates and the speedup is significant.
\end{theorem}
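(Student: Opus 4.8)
The plan is to establish the three assertions in order: validate the stated full-rank FLOP count, derive the rank-$r_{\Theta^*}$ count, and then form and simplify the ratio.

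\emph{Full-rank count.} First I would enumerate the weight-bearing GEMMs of one Transformer layer. In attention, the input projections $Q=XW_q$, $K=XW_k$, $V=XW_v$ and the output projection each multiply a $(BM)\times\mathcal D_{\mathcal A}^\Theta$ activation by a $\mathcal D_{\mathcal A}^\Theta\times\mathcal D_{\mathcal A}^\Theta$ weight, at $\mathcal O(BM(\mathcal D_{\mathcal A}^\Theta)^2)$ FLOPs apiece; the score product $QK^\top$ and the value reweighting each cost $\mathcal O(BM^2\mathcal D_{\mathcal A}^\Theta)$. In the FFN, $XW_i$ and $\phi(\cdot)W_o$ each cost $\mathcal O(BM\mathcal D_{\mathcal A}^\Theta\mathcal D_{\mathcal F}^\Theta)$. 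Summing the weight GEMMs gives $\mathcal O\!\bigl((\mathcal D_{\mathcal A}^\Theta)^2MB+\mathcal D_{\mathcal A}^\Theta\mathcal D_{\mathcal F}^\Theta MB\bigr)$, the stated bound. The leftover $\mathcal O(BM^2\mathcal D_{\mathcal A}^\Theta)$ softmax/score work involves no weight matrix, is unchanged under low-rank compression (the reconstructed $Q,K$ tiles keep full head width $d_{\Theta_{\mathcal A},h}$), hence enters numerator and denominator identically and drops out of the ratio; it is moreover subdominant in the moderate-context regime $M=\mathcal O(\mathcal D_{\mathcal A}^\Theta)$ this theorem targets, which I would flag as a standing assumption.

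\emph{Compressed count.} Writing $W_a=U_aV_a$ with $U_a\in\mathbb R^{\mathcal D_{\mathcal A}^\Theta\times r_{\Theta^*}}$, forming $P_a=XU_a$ costs $\mathcal O(BM\mathcal D_{\mathcal A}^\Theta r_{\Theta^*})$, reconstructing the $Q,K,V$ tiles $P_aV_a$ across heads costs $\mathcal O(BM r_{\Theta^*}\mathcal D_{\mathcal A}^\Theta)$, and the contractions threaded through the rank-$r_{\Theta^*}$ bottleneck (the $r_{\Theta^*}\times r_{\Theta^*}$ intermediates) contribute $\mathcal O(BM r_{\Theta^*}^2)$. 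In the FFN of Algorithm~\ref{alg:flashsvdffn-v1}, $P=XU_i$ and $ZV_o$ cost $\mathcal O(BM\mathcal D_{\mathcal A}^\Theta r_{\Theta^*})$ while $PV_i$ and then $\phi(\cdot)U_o$, whose contracted dimension is the wide $\mathcal D_{\mathcal F}^\Theta$, cost $\mathcal O(BM r_{\Theta^*}\mathcal D_{\mathcal F}^\Theta)$. Collecting terms yields $\mathcal O\!\bigl(r_{\Theta^*}\mathcal D_{\mathcal A}^\Theta MB+r_{\Theta^*}^2MB+r_{\Theta^*}\mathcal D_{\mathcal F}^\Theta MB\bigr)$, the claimed compressed bound.

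\emph{Ratio and simplification.} Dividing the two counts cancels the common $MB$ and gives
\[
\frac{(\mathcal D_{\mathcal A}^\Theta)^2+\mathcal D_{\mathcal A}^\Theta\,\mathcal D_{\mathcal F}^\Theta}{r_{\Theta^*}^2+r_{\Theta^*}\mathcal D_{\mathcal A}^\Theta+r_{\Theta^*}\mathcal D_{\mathcal F}^\Theta}
=\frac{\mathcal D_{\mathcal A}^\Theta\bigl(\mathcal D_{\mathcal A}^\Theta+\mathcal D_{\mathcal F}^\Theta\bigr)}{r_{\Theta^*}\bigl(r_{\Theta^*}+\mathcal D_{\mathcal A}^\Theta+\mathcal D_{\mathcal F}^\Theta\bigr)} .
\]
Dividing the top and bottom by $\mathcal D_{\mathcal A}^\Theta$ turns the denominator into $r_{\Theta^*}+\tfrac{r_{\Theta^*}^2}{\mathcal D_{\mathcal A}^\Theta}+\tfrac{r_{\Theta^*}\mathcal D_{\mathcal F}^\Theta}{\mathcal D_{\mathcal A}^\Theta}$; invoking the standard architectural relation $\mathcal D_{\mathcal F}^\Theta=\Theta(\mathcal D_{\mathcal A}^\Theta)$ (typically $4\mathcal D_{\mathcal A}^\Theta$, as used in Theorem~\ref{thm:ffn-dominates-practically}) makes the last term $\Theta(r_{\Theta^*})$, so the denominator is $\Theta\!\bigl(r_{\Theta^*}+r_{\Theta^*}^2/\mathcal D_{\mathcal A}^\Theta\bigr)$ and the speedup is $\Theta\!\bigl(\frac{\mathcal D_{\mathcal A}^\Theta+\mathcal D_{\mathcal F}^\Theta}{r_{\Theta^*}+r_{\Theta^*}^2/\mathcal D_{\mathcal A}^\Theta}\bigr)$ as claimed. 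For the final clause, $r_{\Theta^*}\ll\min\{\mathcal D_{\mathcal A}^\Theta,\mathcal D_{\mathcal F}^\Theta\}$ forces $r_{\Theta^*}^2/\mathcal D_{\mathcal A}^\Theta\ll r_{\Theta^*}$, so the denominator is $\Theta(r_{\Theta^*})$ while the numerator is at least $\mathcal D_{\mathcal A}^\Theta\gg r_{\Theta^*}$, giving an unbounded speedup $\Theta\!\bigl((\mathcal D_{\mathcal A}^\Theta+\mathcal D_{\mathcal F}^\Theta)/r_{\Theta^*}\bigr)$. The main obstacle will be the step-\emph{(ii)} bookkeeping: identifying precisely which fused kernels in Algorithms~\ref{alg:combined} and~\ref{alg:flashsvdffn-v1} realize the $r_{\Theta^*}^2MB$ term as opposed to the $r_{\Theta^*}\mathcal D$ reconstruction terms, and checking the tiling introduces no hidden factor; a secondary point is that the clean denominator $r_{\Theta^*}+r_{\Theta^*}^2/\mathcal D_{\mathcal A}^\Theta$ silently uses $\mathcal D_{\mathcal F}^\Theta=\Theta(\mathcal D_{\mathcal A}^\Theta)$, which should be promoted to an explicit hypothesis.
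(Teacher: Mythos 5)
Your proof takes the same route as the paper's: enumerate the per-layer GEMMs, count FLOPs in the dense and low-rank cases, form the ratio, and simplify. The two differences are in your favor. First, the paper attributes the $(\mathcal D_{\mathcal A}^\Theta)^2 MB$ full-rank cost to ``forming the $QK^\top$ and $SV$ products,'' which is a misstatement --- those score products actually cost $\mathcal O(BM^2\mathcal D_{\mathcal A}^\Theta)$; you correctly source the $(\mathcal D_{\mathcal A}^\Theta)^2 MB$ term from the four weight projections $XW_q,XW_k,XW_v$ and the output projection, and you correctly note that the $\mathcal O(BM^2\mathcal D_{\mathcal A}^\Theta)$ score work is rank-agnostic and cancels in the ratio (or requires $M=\mathcal O(\mathcal D_{\mathcal A}^\Theta)$ to be subdominant), a point the paper silently drops. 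Second, you flag that the clean denominator $r_{\Theta^*}+r_{\Theta^*}^2/\mathcal D_{\mathcal A}^\Theta$ only follows from $\mathcal D_{\mathcal F}^\Theta=\Theta(\mathcal D_{\mathcal A}^\Theta)$: without it, the term $r_{\Theta^*}\mathcal D_{\mathcal F}^\Theta/\mathcal D_{\mathcal A}^\Theta$ survives. The paper elides this hypothesis (though it invokes $\mathcal D_{\mathcal F}^\Theta=4\mathcal D_{\mathcal A}^\Theta$ elsewhere, in Theorem~\ref{thm:ffn-dominates-practically}). Both of your flagged assumptions should indeed be promoted to explicit hypotheses of the theorem. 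The one place where you match the paper's vagueness is the $r_{\Theta^*}^2 MB$ term: the paper derives it from a generic ``$\mathcal D\times r$, $r\times r$, $r\times\mathcal D$'' three-multiply chain (which presupposes an un-merged, non-diagonal middle factor), and your ``contractions threaded through the rank bottleneck'' is equally loose; neither pins this term to a specific kernel step in Algorithm~\ref{alg:combined} or \ref{alg:flashsvdffn-v1}, but since $r_{\Theta^*}^2\le r_{\Theta^*}\mathcal D_{\mathcal A}^\Theta$ it never affects the asymptotics.
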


\begin{proof}
In the full-rank Transformer layer, computing self-attention requires forming the \(QK^\top\) and \(SV\), for $S = \text{Softmax}(\frac{QK^T}{ d_{\Theta_{\mathcal{A}}, h}})$ products at cost
$
\mathcal O\bigl(\left(\mathcal{D}_{\mathcal{A}}^\Theta\right)^2 M B\bigr),
$
and the two-layer FFN costs
$
\mathcal O\bigl(\mathcal D_{\mathcal{A}}^\Theta \,\mathcal D_{\mathcal{F}}^{\Theta}\,M\,B\bigr),
$
for a total of
$
\mathcal O\bigl(\left(\mathcal{D}_{\mathcal{A}}^\Theta\right)^2 M B + \mathcal D_{\mathcal{A}}^\Theta\,\mathcal D_{\mathcal{F}}^{\Theta}\,M\,B\bigr).
$ 

When all attention activations and FFN intermediates admit rank-\(r\) approximations, each matrix–matrix multiply involving a \(\mathcal D_{\mathcal{A}}^\Theta\times \mathcal D_{\mathcal{A}}^\Theta\) or \(\mathcal D_{\mathcal{A}}^\Theta\times\mathcal D_{\mathcal{F}}^{\Theta}\) block is replaced by a sequence of three multiplies of shapes \(\mathcal D_{\mathcal{A}}^\Theta\times r_{\Theta^*}\), \(r_{\Theta^*}\times r_{\Theta^*}\), and \(r_{\Theta^*}\times \mathcal D_{\mathcal{A}}^\Theta\) (or \(r_{\Theta^*}\times \mathcal D_{\mathcal{F}}^{\Theta}\)), yielding
 $\mathcal O\bigl(r_{\Theta^*}\,\mathcal D_{\mathcal{A}}^\Theta\,M\,B + r_{\Theta^*}^2\,M\,B + r_{\Theta^*}\,\mathcal D_{\mathcal{F}}^{\Theta}\,M\,B\bigr)
$ total FLOPs.

Comparing these two expressions, when \(r_{\Theta^*}\ll \mathcal D_{\mathcal{A}}^\Theta,\mathcal D_{\mathcal{F}}^{\Theta}\) the dominant terms yield an asymptotic speedup factor
\[
\frac{\left(\mathcal{D}_{\mathcal{A}}^\Theta\right)^2 + \mathcal D_{\mathcal{A}}^\Theta\,\mathcal D_{\mathcal{F}}^{\Theta}}{r_{\Theta^*}\,\mathcal D_{\mathcal{A}}^\Theta + r_{\Theta^*}^2 + r_{\Theta^*}\,\mathcal D_{\mathcal{F}}^{\Theta}}
\;=\;
\Theta\!\Bigl(\frac{\mathcal D_{\mathcal{A}}^\Theta + \mathcal D_{\mathcal{F}}^{\Theta}}{r_{\Theta^*} + \tfrac{r_{\Theta^*}^2}{\mathcal D_{\mathcal{A}}^\Theta}}\Bigr).
\]
This confirms the claimed efficiency gain of the low-rank approximation.
\end{proof}

\paragraph{Latency with I/O and Bandwidth}

\begin{theorem}[Bandwidth‐Bounded Latency]
\label{thm:io-bandwidth}
Assume a GPU memory bandwidth of \(\beta\) (bytes/s) and that each activation element is 4 bytes. Then the end‐to‐end latency \(T\) per layer is lower‐bounded by
\[
  T \;\ge\; \max\Bigl\{
    \frac{\mathrm{FLOPs}}{\mathrm{PeakFLOP/s}}\,,\;
    \frac{4\,N_{\mathrm{bytes}}}{\beta}
  \Bigr\},
\]
where \(N_{\mathrm{bytes}} = O\bigl(B M (\mathcal{D}_{\mathcal{A}}^{\Theta} + D_{\mathcal{F}}^{\Theta})\bigr)\) in the full‐rank case and \(O\bigl(M B r_{\Theta^*} + r_{\Theta^*} \mathcal{D}_{\mathcal{A}}^{\Theta} + r_{\Theta^*} D_{\mathcal{F}}^{\Theta}\bigr)\) under rank‐\(r\) compression.
\end{theorem}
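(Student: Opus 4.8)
The plan is to prove this via the \emph{roofline} principle: the wall-clock time to execute one layer is bounded below by the larger of (i) the time its arithmetic would take if every functional unit ran at peak throughput, and (ii) the time its unavoidable HBM traffic would take if the memory bus were perfectly saturated. Since each of these is individually a valid lower bound on the same execution time $T$, their maximum is also a lower bound, which is exactly the form claimed. The work then reduces to instantiating the two terms with quantities already established earlier in the paper.

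First I would establish the compute-bound term. Every multiply--add performed by the layer must be issued to some arithmetic unit, and those units collectively retire at most $\mathrm{PeakFLOP/s}$ operations per second; hence $T \ge \mathrm{FLOPs}/\mathrm{PeakFLOP/s}$ irrespective of how computation and memory movement are overlapped. Substituting the counts from Theorem~\ref{thm:flop-reduction} gives $\mathrm{FLOPs}=\mathcal{O}\!\left((\mathcal D_{\mathcal A}^\Theta)^2 MB + \mathcal D_{\mathcal A}^\Theta \mathcal D_{\mathcal F}^\Theta MB\right)$ in the full-rank case and $\mathcal{O}\!\left(r_{\Theta^*}\mathcal D_{\mathcal A}^\Theta MB + r_{\Theta^*}^2 MB + r_{\Theta^*}\mathcal D_{\mathcal F}^\Theta MB\right)$ after rank-$r_{\Theta^*}$ compression. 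Next I would establish the bandwidth-bound term: the layer input $X$ must be read from HBM at least once and the layer output written at least once, and any intermediate buffer that does not fit in on-chip SRAM must be spilled to and reloaded from HBM. In the dense / naive-SVD regime the unavoidable large buffers are precisely the $Q,K,V$ activations of Theorem~\ref{thm:dense-attn-memory} and Proposition~\ref{prop:multihead-memory} and the FFN intermediate of Theorem~\ref{thm:ffn-upper}, so $N_{\mathrm{bytes}}=\mathcal{O}\!\left(BM(\mathcal D_{\mathcal A}^\Theta + \mathcal D_{\mathcal F}^\Theta)\right)$; in the FlashSVD regime the streaming kernels keep every reconstructed activation on-chip, so the only HBM-resident quantities are the low-rank factors and biases bounded by Theorems~\ref{thm:flashsvd-mem}, \ref{thm:flashsvdffn-v1-memory} and~\ref{thm:combined-memory-bound}, giving $N_{\mathrm{bytes}}=\mathcal{O}\!\left(MB r_{\Theta^*} + r_{\Theta^*}\mathcal D_{\mathcal A}^\Theta + r_{\Theta^*}\mathcal D_{\mathcal F}^\Theta\right)$. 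Multiplying element counts by $4$ bytes and dividing by $\beta$ yields $T\ge 4N_{\mathrm{bytes}}/\beta$, and taking the max with the compute term finishes the argument.

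The main obstacle is making the I/O bound genuinely a \emph{lower} bound rather than a heuristic. The unconditional half --- inputs are read and outputs are written at least once --- is immediate; the delicate half is certifying that, in the dense regime, the $\mathcal O(BM\mathcal D_{\mathcal F}^\Theta)$ FFN activation and the $\mathcal O(BM\mathcal D_{\mathcal A}^\Theta)$ $Q,K,V$ tensors really do incur HBM traffic. This needs the (practically valid) hypothesis that SRAM capacity is asymptotically dominated by those buffer sizes, together with a Hong--Kung red-blue-pebble-style argument that a single data-dependent pass over a tensor larger than fast memory forces $\Omega(1)$ round trips through HBM; symmetrically, in the low-rank regime one leans on the FlashSVD memory theorems to certify that nothing larger than the stated factor sizes ever needs to leave SRAM. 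Conveniently, the $\mathcal O(\cdot)$ notation absorbs the gap between ``moved exactly once'' and ``moved a constant number of times,'' so no tight constant bookkeeping is required, and the factor $4$ is the only explicit constant that must be carried through.
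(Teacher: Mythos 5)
Your proposal follows essentially the same roofline-plus-volume-counting argument as the paper: both establish $T\ge\mathrm{FLOPs}/\mathrm{PeakFLOP/s}$ from peak arithmetic throughput, establish $T\ge 4N_{\mathrm{bytes}}/\beta$ from peak bandwidth, take the maximum, and then instantiate $N_{\mathrm{bytes}}$ by tallying the $Q,K,V$ and FFN activation sizes (dense case) or the low-rank factor sizes (compressed case), leaning on the same memory theorems. The one place you go beyond the paper is in flagging that the bandwidth term is only a genuine \emph{lower} bound if one argues (via an SRAM-capacity hypothesis and a Hong--Kung red-blue pebbling argument) that the dense intermediates truly cannot be kept on-chip; the paper's proof simply asserts the transfer volumes without this justification, so your remark is a legitimate clarification rather than a different proof route.
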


\begin{proof}
We invoke the standard roofline argument, which lower‐bounds end‐to‐end latency by the worse of compute‐bound and I/O‐bound costs.

\medskip

\noindent\textit{Compute bound.}  
Let \(\mathrm{FLOPs}\) be the total floating‐point operations performed in one layer, and \(\mathrm{PeakFLOP/s}\) be the device’s maximum sustained FLOP rate.  Then even if all compute units are perfectly utilized, the time cannot be smaller than
\[
\frac{\mathrm{FLOPs}}{\mathrm{PeakFLOP/s}}.
\]

\medskip

\noindent\textit{Bandwidth bound.}  
Similarly, suppose we must read and/or write \(N_{\mathrm{bytes}}\) bytes of activation data (each element 4 byte) between DRAM and on‐chip buffers.  With peak bandwidth \(\beta\) bytes/s, the I/O time is at least
$
\frac{N_{\mathrm{bytes}}}{\beta}.
$
Accounting explicitly for the 4 byte per element gives the form
\(\tfrac{4\,N_{\mathrm{elements}}}{\beta}\), which we absorb into the notation above.

\medskip

\noindent\textit{Data‐movement volumes.}  
In the full‐rank case, we must transfer all query/key/value and FFN activations of total size
$
N_{\mathrm{bytes\_in}}
=4\;\bigl(3B\,M\,\mathcal D_{\mathcal{A}}^\Theta + 2B\,M\,\mathcal D_{\mathcal{F}}^{\Theta}\bigr)
=\mathcal O\bigl(B\,M\,(\mathcal D_{\mathcal{A}}^\Theta+\mathcal D_{\mathcal{F}}^{\Theta})\bigr).
$ up to the SRAM. In case of writting, the memory is bounded by $N_{\mathrm{bytes\_out}} = 2 B M  D_{\mathcal{A}}^\Theta = \mathcal O( BMD_{\mathcal{A}}^\Theta)$
Under rank-\(r_{\Theta^*}\) compression, only the low-rank factors are moved:

\begin{equation}
\begin{aligned}
N_{\mathrm{bytes\_in}}
&= 4\bigl(4BM\,r_{\Theta^*} + 3r_{\Theta^*}\,\mathcal D_{\mathcal{A}}^\Theta + 2r_{\Theta^*}\,\mathcal D_{\mathcal{F}}^{\Theta}\bigr) \\[4pt]
&= O\bigl(MB\,r_{\Theta^*} + r_{\Theta^*}\,\mathcal D_{\mathcal{A}}^\Theta + r_{\Theta^*}\,\mathcal D_{\mathcal{F}}^{\Theta}\bigr).
\end{aligned}
\end{equation}

and the $N_{\mathrm{bytes\_out}}$ remains the same. 

\medskip

Combining the compute‐bound and bandwidth‐bound terms yields the desired lower bound
\[
T\;\ge\;\max\Bigl\{\,
\tfrac{\mathrm{FLOPs}}{\mathrm{PeakFLOP/s}}\,,\,
\tfrac{4\,N_{\mathrm{bytes\_in}} + N_{\mathrm{bytes\_out}}}{\beta}
\Bigr\}.
\]
\end{proof}

\begin{table*}[t]
  \centering
  \small
  \setlength\tabcolsep{3pt}
  \caption{Computation‐efficiency ablation: B=16, varying context length $M$, FFN ranks.  
    Shown are {\bfseries Time (ms)} and {\bfseries Speedup} vs.\ Dense.}
  \label{tab:ffn-rank-ablation}
  \begin{tabular}{@{}l c
                  cc cc   
                  cc cc   
                  @{}}
    \toprule
    & & 
    \multicolumn{4}{c}{\bfseries Mild Contexts (M\,$\le$\,256)} 
    & \multicolumn{4}{c}{\bfseries Long Contexts (M\,$\ge$\,512)} \\
    \cmidrule(lr){3-6} \cmidrule(lr){7-10}
    Version & Rank 
      & \multicolumn{1}{c}{\raisebox{0.5ex}{\tiny M=128}} 
      & \multicolumn{1}{c}{\raisebox{0.5ex}{\tiny M=128}} 
      & \multicolumn{1}{c}{\raisebox{0.5ex}{\tiny M=256}} 
      & \multicolumn{1}{c}{\raisebox{0.5ex}{\tiny M=256}} 
      & \multicolumn{1}{c}{\raisebox{0.5ex}{\tiny M=512}} 
      & \multicolumn{1}{c}{\raisebox{0.5ex}{\tiny M=512}} 
      & \multicolumn{1}{c}{\raisebox{0.5ex}{\tiny M=1024}}
      & \multicolumn{1}{c}{\raisebox{0.5ex}{\tiny M=1024}} \\
    & 
      & \multicolumn{1}{c}{Time} 
      & \multicolumn{1}{c}{Speedup} 
      & \multicolumn{1}{c}{Time} 
      & \multicolumn{1}{c}{Speedup} 
      & \multicolumn{1}{c}{Time} 
      & \multicolumn{1}{c}{Speedup} 
      & \multicolumn{1}{c}{Time} 
      & \multicolumn{1}{c}{Speedup} \\
    \midrule
    \textbf{Dense}    
      & Full & 0.14 & 1.00$\times$ & 0.29 & 1.00$\times$ & 0.68 & 1.00$\times$ & 1.47 & 1.00$\times$ \\

    \midrule
    \multirow{4}{*}{FlashSVDFFN V1}
      & 768  & 2.13 & 0.07$\times$ & 4.05 & 0.07$\times$ & 8.30 & 0.08$\times$ & 16.77 & 0.09$\times$ \\
      & 384  & 0.68 & 0.21$\times$ & 1.26 & 0.23$\times$ & 2.46 & 0.28$\times$ & 4.99  & 0.30$\times$ \\
      & 192  & 0.27 & 0.53$\times$ & 0.54 & 0.53$\times$ & 0.86 & 0.79$\times$ & 1.82  & 0.81$\times$ \\
      & 96   & 0.19 & 0.74$\times$ & 0.23 & \textbf{1.26$\times$} & 1.67 & \textbf{1.60$\times$} & 0.78  & \textbf{1.88$\times$} \\

    \midrule
    \multirow{4}{*}{FlashSVDFFN V2}
      & 768  & 3.45 & 0.04$\times$ & 7.05 & 0.04$\times$ & 13.43 & 0.05$\times$ & 27.10 & 0.05$\times$ \\
      & 384  & 1.82 & 0.08$\times$ & 3.89 & 0.07$\times$ & 7.16 & 0.09$\times$ & 14.42 & 0.10$\times$ \\
      & 192  & 1.14 & 0.12$\times$ & 2.14 & 0.13$\times$ & 4.20 & 0.16$\times$ & 8.30  & 0.18$\times$ \\
      & 96   & 0.75 & 0.19$\times$ & 1.39 & 0.21$\times$ & 10.64& 0.25$\times$ & 5.33  & 0.28$\times$ \\
    \bottomrule
  \end{tabular}

  \vspace{0.5ex}
  {\footnotesize
    For milder contexts ($M=128,256$), FlashSVDFFN V1 at low ranks already approaches or exceeds Dense throughput; 
    for long contexts ($M\ge512$), the kernel becomes compute‐bound and speedups diminish.
  }
\end{table*}

\begin{table*}[t]
  \centering
  \small
  \setlength\tabcolsep{3pt}
  \caption{FlashSVD Attention Ablation (B=16): time and speedup vs.\ Dense at mild vs.\ long contexts.}
  \label{tab:attn-ablation}
  \begin{tabular}{@{}l
                  cc cc   
                  cc cc   
                  @{}}
    \toprule
    & \multicolumn{4}{c}{\bfseries Mild Contexts ($M\le256$)} 
    & \multicolumn{4}{c}{\bfseries Long Contexts ($M\ge512$)} \\
    \cmidrule(lr){2-5} \cmidrule(lr){6-9}
    Rank 
      & \multicolumn{1}{c}{\raisebox{0.5ex}{\tiny M=128}} 
      & \multicolumn{1}{c}{\raisebox{0.5ex}{\tiny M=128}} 
      & \multicolumn{1}{c}{\raisebox{0.5ex}{\tiny M=256}} 
      & \multicolumn{1}{c}{\raisebox{0.5ex}{\tiny M=256}} 
      & \multicolumn{1}{c}{\raisebox{0.5ex}{\tiny M=512}} 
      & \multicolumn{1}{c}{\raisebox{0.5ex}{\tiny M=512}} 
      & \multicolumn{1}{c}{\raisebox{0.5ex}{\tiny M=1024}}
      & \multicolumn{1}{c}{\raisebox{0.5ex}{\tiny M=1024}} \\
    & Time (ms) & Speedup & Time (ms) & Speedup 
    & Time (ms) & Speedup & Time (ms) & Speedup \\
    \midrule
    Dense               
      & 0.38 & 1.00$\times$ & 0.74 & 1.00$\times$ 
      & 2.31 & 1.00$\times$ & 6.98  & 1.00$\times$ \\
    \midrule
    64                  
      & 0.61 & 0.62$\times$ & 0.83 & 0.89$\times$ 
      & 2.21 & 1.05$\times$ & 5.65  & 1.24$\times$ \\
    48                  
      & 0.61 & 0.62$\times$ & 0.81 & 0.92$\times$ 
      & 2.10 & \textbf{1.10$\times$} & 5.41  & \textbf{1.29$\times$} \\
    32                  
      & 0.60 & 0.62$\times$ & 0.74 & \textbf{1.00$\times$} 
      & 1.79 & \textbf{1.29$\times$} & 4.57  & \textbf{1.53$\times$} \\
    16                  
      & 0.61 & 0.62$\times$ & 0.68 & \textbf{1.08$\times$} 
      & 1.68 & \textbf{1.37$\times$} & 4.25  & \textbf{1.64$\times$} \\
    \bottomrule
  \end{tabular}

  \vspace{0.5ex}
  {\footnotesize
    For moderate contexts ($M=128,256$), FlashSVDAttention achieves up to 38\% of Dense speed at full rank and even approaches parity at low ranks;  
    at large contexts ($M\ge512$), it becomes compute‐bound, yielding speedups above 1$\times$.
  }
\end{table*}

\begin{table*}[p]
  \centering
  \scriptsize
  \setlength\tabcolsep{2pt}
  \caption{Full FlashSVD Attention Rank benchmarking on NVIDIA L40S GPU ($d_{\mathrm{model}}=768$, $n_{\mathrm{heads}}=12$, $d_{\mathrm{ff}}=3072$).}
  \label{tab:attn-full-appendix}
  \begin{subtable}[t]{0.312\textwidth}
    \centering
    \caption{$B=1$}
    \begin{tabular}{@{}l c c c c c c@{}}
      \toprule
      \itshape M & Rank & Time & Mem & Err & Params & Speedup \\ 
      &      & (ms) & (MB)&      &  (K)   & vs.\ Dense \\
      \midrule
      \multirow{5}{*}{128}
        & Dense & 0.57 & 26.3 & 0.0000 & 2362 & 1.00$\times$ \\
        & 64    & 0.88 & 41.5 & 0.0003 & 2510 & 0.65$\times$ \\
        & 48    & 0.86 & 39.8 & 0.0204 & 2031 & 0.67$\times$ \\
        & 32    & 0.85 & 39.3 & 0.0296 & 1551 & 0.68$\times$ \\
        & 16    & 0.84 & 38.5 & 0.0381 & 1072 & 0.68$\times$ \\
      \midrule
      \multirow{5}{*}{256}
        & Dense & 0.55 & 41.7 & 0.0000 & 2362 & 1.00$\times$ \\
        & 64    & 0.85 & 45.0 & 0.0003 & 2510 & 0.65$\times$ \\
        & 48    & 0.87 & 43.1 & 0.0155 & 2031 & 0.64$\times$ \\
        & 32    & 0.85 & 42.2 & 0.0240 & 1551 & 0.65$\times$ \\
        & 16    & 0.87 & 40.5 & 0.0283 & 1072 & 0.64$\times$ \\
      \midrule
      \multirow{5}{*}{512}
        & Dense & 0.57 & 50.6 & 0.0000 & 2362 & 1.00$\times$ \\
        & 64    & 0.85 & 50.4 & 0.0002 & 2510 & 0.67$\times$ \\
        & 48    & 0.62 & 48.0 & 0.0104 & 2031 & 0.92$\times$ \\
        & 32    & 0.57 & 46.5 & 0.0150 & 1551 & 1.00$\times$ \\
        & 16    & 0.57 & 45.0 & 0.0185 & 1072 & 1.00$\times$ \\
      \midrule
      \multirow{5}{*}{1024}
        & Dense & 0.37 & 92.0  & 0.0000 & 2362 & 1.00$\times$ \\
        & 64    & 0.55 & 65.6  & 0.0002 & 2510 & 0.66$\times$ \\
        & 48    & 0.57 & 62.7  & 0.0088 & 2031 & 0.64$\times$ \\
        & 32    & 0.55 & 59.6  & 0.0124 & 1551 & 0.66$\times$ \\
        & 16    & 0.57 & 58.5  & 0.0148 & 1072 & 0.65$\times$ \\
      \bottomrule
    \end{tabular}
  \end{subtable}\hfill
  \begin{subtable}[t]{0.312\textwidth}
    \centering
    \caption{$B=16$}
    \begin{tabular}{@{}l c c c c c c@{}}
      \toprule
      \itshape M & Rank & Time & Mem & Err & Params & Speedup \\ 
      &      & (ms) & (MB)&      &  (K)   & vs.\ Dense \\
      \midrule
      \multirow{5}{*}{128}
        & Dense & 0.38 & 77.7  & 0.0000 & 2362 & 1.00$\times$ \\
        & 64    & 0.61 & 88.8  & 0.0003 & 2510 & 0.62$\times$ \\
        & 48    & 0.61 & 86.2  & 0.0210 & 2031 & 0.62$\times$ \\
        & 32    & 0.60 & 83.9  & 0.0305 & 1551 & 0.62$\times$ \\
        & 16    & 0.61 & 79.6  & 0.0377 & 1072 & 0.62$\times$ \\
      \midrule
      \multirow{5}{*}{256}
        & Dense & 0.74 & 123.2 & 0.0000 & 2362 & 1.00$\times$ \\
        & 64    & 0.83 & 139.2 & 0.0002 & 2510 & 0.89$\times$ \\
        & 48    & 0.81 & 135.9 & 0.0150 & 2031 & 0.92$\times$ \\
        & 32    & 0.74 & 130.5 & 0.0218 & 1551 & 1.00$\times$ \\
        & 16    & 0.68 & 124.7 & 0.0270 & 1072 & 1.08$\times$ \\
      \midrule
      \multirow{5}{*}{512}
        & Dense & 2.31 & 298.5 & 0.0000 & 2362 & 1.00$\times$ \\
        & 64    & 2.21 & 236.6 & 0.0002 & 2510 & 1.05$\times$ \\
        & 48    & 2.10 & 231.5 & 0.0113 & 2031 & 1.10$\times$ \\
        & 32    & 1.79 & 221.4 & 0.0162 & 1551 & 1.29$\times$ \\
        & 16    & 1.68 & 212.1 & 0.0197 & 1072 & 1.37$\times$ \\
      \midrule
      \multirow{5}{*}{1024}
        & Dense & 6.98 & 946.3 & 0.0000 & 2362 & 1.00$\times$ \\
        & 64    & 5.65 & 434.2 & 0.0002 & 2510 & 1.24$\times$ \\
        & 48    & 5.41 & 426.0 & 0.0087 & 2031 & 1.29$\times$ \\
        & 32    & 4.57 & 407.5 & 0.0123 & 1551 & 1.53$\times$ \\
        & 16    & 4.25 & 389.1 & 0.0149 & 1072 & 1.64$\times$ \\
      \bottomrule
    \end{tabular}
  \end{subtable}\hfill
  \begin{subtable}[t]{0.312\textwidth}
    \centering
    \caption{$B=64$}
    \begin{tabular}{@{}l c c c c c c@{}}
      \toprule
      \itshape M & Rank & Time & Mem & Err & Params & Speedup \\ 
      &      & (ms) & (MB)&      &  (K)   & vs.\ Dense \\
      \midrule
      \multirow{5}{*}{128}
        & Dense & 1.43  & 240.5 & 0.0000 & 2362 & 1.00$\times$ \\
        & 64    & 1.61  & 254.8 & 0.0003 & 2510 & 0.89$\times$ \\
        & 48    & 1.57  & 230.6 & 0.0208 & 2031 & 0.91$\times$ \\
        & 32    & 1.41  & 221.7 & 0.0300 & 1551 & 1.02$\times$ \\
        & 16    & 1.29  & 211.5 & 0.0371 & 1072 & 1.11$\times$ \\
      \midrule
      \multirow{5}{*}{256}
        & Dense & 3.96  & 378.6 & 0.0000 & 2362 & 1.00$\times$ \\
        & 64    & 3.56  & 434.2 & 0.0002 & 2510 & 1.11$\times$ \\
        & 48    & 3.21  & 425.4 & 0.0153 & 2031 & 1.23$\times$ \\
        & 32    & 3.00  & 407.4 & 0.0220 & 1551 & 1.32$\times$ \\
        & 16    & 2.79  & 387.7 & 0.0271 & 1072 & 1.42$\times$ \\
      \midrule
      \multirow{5}{*}{512}
        & Dense & 10.40 &1090.7 & 0.0000 & 2362 & 1.00$\times$ \\
        & 64    &  8.66 & 829.2 & 0.0002 & 2510 & 1.20$\times$ \\
        & 48    &  8.31 & 816.3 & 0.0112 & 2031 & 1.25$\times$ \\
        & 32    &  7.16 & 779.4 & 0.0161 & 1551 & 1.45$\times$ \\
        & 16    &  6.72 & 742.9 & 0.0196 & 1072 & 1.55$\times$ \\
      \midrule
      \multirow{5}{*}{1024}
        & Dense & 29.19 &3681.9 & 0.0000 & 2362 & 1.00$\times$ \\
        & 64    & 23.45 &1622.1 & 0.0002 & 2510 & 1.24$\times$ \\
        & 48    & 22.46 &1595.5 & 0.0086 & 2031 & 1.30$\times$ \\
        & 32    & 19.35 &1523.4 & 0.0122 & 1551 & 1.51$\times$ \\
        & 16    & 18.61 &1449.6 & 0.0148 & 1072 & 1.57$\times$ \\
      \bottomrule
    \end{tabular}
  \end{subtable}
\end{table*}

\begin{table*}[p]
  \centering
  \scriptsize
  \setlength\tabcolsep{2pt}
  \caption{Full FlashSVDFFN Rank benchmarking on NVIDIA L40S GPU ($d_{\mathrm{model}}=768$, $d_{\mathrm{ff}}=3072$).}
  \label{tab:ffn-full-appendix}
  \begin{subtable}[t]{0.312\textwidth}
    \centering
    \caption{$B=1$}
    \begin{tabular}{@{}l c c c c c c@{}}
      \toprule
      \itshape M & Version & Rank & Time & Mem & Err & Speedup \\ 
      &         &      & (ms) & (MB)&      & vs.\ Full \\ 
      \midrule
      \multirow{5}{*}{128}
        & Dense      & Full    & 0.10 & 19.4 & 0.0000 & 1.00$\times$ \\
        & Flash-v1   & 768     & 0.35 & 30.3 & 0.0008 & 0.29$\times$ \\
        & Flash-v2   & 768     & 1.12 & 30.4 & 0.0008 & 0.09$\times$ \\
        & Flash-v1   & 384     & 0.27 & 24.1 & 0.7148 & 0.38$\times$ \\
        & Flash-v2   & 384     & 0.62 & 23.8 & 0.7148 & 0.17$\times$ \\
      \midrule
      \multirow{5}{*}{256}
        & Dense      & Full    & 0.11 & 23.2 & 0.0000 & 1.00$\times$ \\
        & Flash-v1   & 768     & 0.39 & 32.1 & 0.0008 & 0.27$\times$ \\
        & Flash-v2   & 768     & 1.11 & 30.6 & 0.0008 & 0.09$\times$ \\
        & Flash-v1   & 384     & 0.29 & 26.0 & 0.7115 & 0.36$\times$ \\
        & Flash-v2   & 384     & 0.62 & 24.8 & 0.7115 & 0.17$\times$ \\
      \midrule
      \multirow{5}{*}{512}
        & Dense      & Full    & 0.10 & 27.3 & 0.0000 & 1.00$\times$ \\
        & Flash-v1   & 768     & 0.61 & 34.9 & 0.0008 & 0.17$\times$ \\
        & Flash-v2   & 768     & 1.16 & 32.9 & 0.0008 & 0.09$\times$ \\
        & Flash-v1   & 384     & 0.27 & 28.0 & 0.7133 & 0.37$\times$ \\
        & Flash-v2   & 384     & 0.63 & 26.9 & 0.7133 & 0.16$\times$ \\
      \midrule
      \multirow{5}{*}{1024}
        & Dense      & Full    & 0.10 & 35.5 & 0.0000 & 1.00$\times$ \\
        & Flash-v1   & 768     & 1.12 & 40.8 & 0.0008 & 0.09$\times$ \\
        & Flash-v2   & 768     & 2.34 & 38.6 & 0.0008 & 0.04$\times$ \\
        & Flash-v1   & 384     & 0.38 & 34.5 & 0.7098 & 0.26$\times$ \\
        & Flash-v2   & 384     & 1.31 & 32.6 & 0.7098 & 0.07$\times$ \\
      \bottomrule
    \end{tabular}
  \end{subtable}\hfill
  \begin{subtable}[t]{0.312\textwidth}
    \centering
    \caption{$B=16$}
    \begin{tabular}{@{}l c c c c c c@{}}
      \toprule
      \itshape M & Version & Rank & Time & Mem & Err & Speedup \\ 
      &         &      & (ms) & (MB)&      & vs.\ Full \\ 
      \midrule
      \multirow{5}{*}{128}
        & Dense      & Full & 0.14 & 61.0 & 0.0000 & 1.00$\times$ \\
        & Flash-v1   & 768  & 2.13 & 56.4 & 0.0008 & 0.07$\times$ \\
        & Flash-v2   & 768  & 3.45 & 48.6 & 0.0008 & 0.04$\times$ \\
        & Flash-v1   & 384  & 0.68 & 44.8 & 0.7101 & 0.21$\times$ \\
        & Flash-v2   & 384  & 1.82 & 41.0 & 0.7101 & 0.08$\times$ \\
      \midrule
      \multirow{5}{*}{256}
        & Dense      & Full & 0.29 & 85.5 & 0.0000 & 1.00$\times$ \\
        & Flash-v1   & 768  & 4.05 & 75.3 & 0.0008 & 0.07$\times$ \\
        & Flash-v2   & 768  & 7.05 & 65.5 & 0.0008 & 0.04$\times$ \\
        & Flash-v1   & 384  & 1.26 & 65.3 & 0.7136 & 0.23$\times$ \\
        & Flash-v2   & 384  & 3.89 & 57.5 & 0.7136 & 0.07$\times$ \\
      \midrule
      \multirow{5}{*}{512}
        & Dense      & Full & 0.68 &150.5 & 0.0000 & 1.00$\times$ \\
        & Flash-v1   & 768  & 8.30 &118.8 & 0.0008 & 0.08$\times$ \\
        & Flash-v2   & 768  &13.43 &101.6 & 0.0008 & 0.05$\times$ \\
        & Flash-v1   & 384  & 2.46 &106.9 & 0.7105 & 0.28$\times$ \\
        & Flash-v2   & 384  & 7.16 & 88.8 & 0.7105 & 0.09$\times$ \\
      \midrule
      \multirow{5}{*}{1024}
        & Dense      & Full & 1.47 &282.5 & 0.0000 & 1.00$\times$ \\
        & Flash-v1   & 768  &16.77 &209.4 & 0.0008 & 0.09$\times$ \\
        & Flash-v2   & 768  &27.10 &173.1 & 0.0008 & 0.05$\times$ \\
        & Flash-v1   & 384  & 4.99 &191.4 & 0.7134 & 0.30$\times$ \\
        & Flash-v2   & 384  &14.42 &154.8 & 0.7134 & 0.10$\times$ \\
      \bottomrule
    \end{tabular}
  \end{subtable}\hfill
  \begin{subtable}[t]{0.312\textwidth}
    \centering
    \caption{$B=64$}
    \begin{tabular}{@{}l c c c c c c@{}}
      \toprule
      \itshape M & Version & Rank & Time & Mem & Err & Speedup \\ 
      &         &      & (ms) & (MB)&      & vs.\ Full \\ 
      \midrule
      \multirow{5}{*}{128}
        & Dense      & Full & 0.69 &186.5 & 0.0000 & 1.00$\times$ \\
        & Flash-v1   & 768  & 8.28 &137.4 & 0.0008 & 0.08$\times$ \\
        & Flash-v2   & 768  &13.33 &101.1 & 0.0008 & 0.05$\times$ \\
        & Flash-v1   & 384  & 2.56 &107.4 & 0.7114 & 0.27$\times$ \\
        & Flash-v2   & 384  & 7.26 & 88.9 & 0.7114 & 0.09$\times$ \\
      \midrule
      \multirow{5}{*}{256}
        & Dense      & Full & 1.47 &282.5 & 0.0000 & 1.00$\times$ \\
        & Flash-v1   & 768  &16.71 &208.5 & 0.0008 & 0.09$\times$ \\
        & Flash-v2   & 768  &27.17 &173.9 & 0.0008 & 0.05$\times$ \\
        & Flash-v1   & 384  & 5.05 &191.6 & 0.7102 & 0.29$\times$ \\
        & Flash-v2   & 384  &14.45 &154.8 & 0.7103 & 0.10$\times$ \\
      \midrule
      \multirow{5}{*}{512}
        & Dense      & Full &10.40 &1090.7& 0.0000 & 1.00$\times$ \\
        & Flash-v1   & 768  & 8.66 & 829.2& 0.0002 & 1.20$\times$ \\
        & Flash-v2   & 768  &54.82 & 317.0& 0.0008 & 0.05$\times$ \\
        & Flash-v1   & 384  & 9.91 & 358.8& 0.7090 & 0.27$\times$ \\
        & Flash-v2   & 384  &29.21 & 286.8& 0.7090 & 0.09$\times$ \\
      \midrule
      \multirow{5}{*}{1024}
        & Dense      & Full &29.19 &3681.9& 0.0000 & 1.00$\times$ \\
        & Flash-v1   & 768  &74.28 & 749.3& 0.0008 & 0.08$\times$ \\
        & Flash-v2   & 768  &108.50& 604.4& 0.0008 & 0.05$\times$ \\
        & Flash-v1   & 384  &21.40 & 694.8& 0.7133 & 0.27$\times$ \\
        & Flash-v2   & 384  &58.20 & 550.8& 0.7133 & 0.10$\times$ \\
      \bottomrule
    \end{tabular}
  \end{subtable}
\end{table*}

\begin{table}[t]
  \centering
  \small
  \setlength\tabcolsep{4pt}
  \caption{Finetuned FlashSVD shows Strong Potential in Edge Deployment (batch\_size=32)}
  \label{tab:finetune-full}
  \begin{tabular}{@{}c c c c c@{}}
    \toprule
    \multicolumn{5}{c}{\bfseries SST-2}\\
    \midrule
    Rank & Acc (\%) & Trans. & Peak & Lat\\
    \midrule
    dense  & 92.52 & 143.3  & 560.9 & 66.9 \\
    \midrule
    32-192  & 88.30 & 107.7  & 330.8  & 74.4 \\
    32-192 (SVD)  & 88.19 & 203.7  & 426.9  & 88.9 \\ 
    16-96  & 86.81 & 105.8 & 263.9  & 70.3 \\
    16-96 (SVD) & 86.81 & 201.9 & 359.9  & 78.7 \\
    \midrule
    \multicolumn{5}{c}{\bfseries Unfinetuned SST-2}\\
    \midrule
    Rank & Acc (\%) & Trans. & Peak & Lat\\
    \midrule
    32-192  & 58.82 & 110.1  & 328.6 & 92.7 \\
    32-192 (SVD)  & 58.82 & 206.1  & 424.6   & 131.4 \\
    16-96 & 50.89 & 109.1 & 265.2  & 94.7 \\
    16-96 (SVD)  & 50.89 & 205.1 & 361.2  & 124.9 \\
    \midrule
    \multicolumn{5}{c}{\bfseries STS-B (pearson accuracy)}\\
    \midrule
    dense  & 81.41 & 281.3  & 699.0 & 98.8 \\
    \midrule
    Rank & Acc (\%) & Trans. & Peak & Lat\\
    \midrule
    32-192  & 71.49 & 206.3  & 429.4  & 112.5 \\ 
    32-192 (SVD)  & 71.49 & 398.3  & 621.5  & 118.0  \\
    40-240  & 80.52 & 207.8  & 460.8  & 114.8\\ 
    40-240 (SVD)  & 80.52 & 399.8 & 652.8  & 128.9 \\
    \midrule
    \multicolumn{5}{c}{\bfseries Unfinetuned STS-B}\\
    \midrule
    Rank & Acc (\%) & Trans. & Peak & Lat\\
    \midrule
    32-192  & 0.76 & 215.2  & 433.7  & 126.0 \\ 
    32-192 (SVD) & 0.78 & 407.1  & 625.7  & 137.4  \\ 
    40-240  & 13.51 & 216.3  & 466.1  & 136.7 \\ 
    40-240 (SVD)  & 13.50 & 408.3 & 658.0  & 160.1 \\ 
    \bottomrule
  \end{tabular}
\end{table}

\end{document}